\def\eqref#1{equation~\ref{#1}}
\def\1{\bm{1}}
\def\eps{{\epsilon}}
\def\vc{{\bm{c}}}
\def\vg{{\bm{g}}}
\def\vh{{\bm{h}}}
\def\vj{{\bm{j}}}
\def\vm{{\bm{m}}}
\def\vs{{\bm{s}}}
\def\vt{{\bm{t}}}
\def\vv{{\bm{v}}}
\def\vw{{\bm{w}}}
\def\vx{{\bm{x}}}
\def\mA{{\bm{A}}}
\def\mI{{\bm{I}}}
\def\mM{{\bm{M}}}
\def\mP{{\bm{P}}}
\def\mQ{{\bm{Q}}}
\def\mS{{\bm{S}}}
\def\mV{{\bm{V}}}
\def\mX{{\bm{X}}}
\DeclareMathAlphabet{\mathsfit}{\encodingdefault}{\sfdefault}{m}{sl}
\SetMathAlphabet{\mathsfit}{bold}{\encodingdefault}{\sfdefault}{bx}{n}
\def\gC{{\mathcal{C}}}
\def\gG{{\mathcal{G}}}
\def\gH{{\mathcal{H}}}
\def\gN{{\mathcal{N}}}
\def\gO{{\mathcal{O}}}
\def\gV{{\mathcal{V}}}
\def\gX{{\mathcal{X}}}
\def\sR{{\mathbb{R}}}
\DeclarePairedDelimiter{\norm}{\lVert}{\rVert}
\def\fG{{\mathfrak{G}}}
\def\fg{{\mathfrak{g}}}
\newcommand{\defeq}{\vcentcolon=}
\newcommand{\Note}[1]{}
\renewcommand{\Note}[1]{#1}  % comment out this re-definition to suppress all Notes
\newcommand{\gray}[1]{\textcolor{gray}{#1}}
\newtheorem{theorem}{Theorem}
\newtheorem{lemma}[theorem]{Lemma}
\newtheorem{definition}[theorem]{Definition}
\newcommand*{\ldblbrace}{\{\mskip-5mu\{}
\newcommand*{\rdblbrace}{\}\mskip-5mu\}}
\icmltitlerunning{On the Expressive Power of Geometric Graph Neural Networks}
\begin{document}

\twocolumn[
\icmltitle{On the Expressive Power of Geometric Graph Neural Networks}

% It is OKAY to include author information, even for blind
% submissions: the style file will automatically remove it for you
% unless you've provided the [accepted] option to the icml2023
% package.

% List of affiliations: The first argument should be a (short)
% identifier you will use later to specify author affiliations
% Academic affiliations should list Department, University, City, Region, Country
% Industry affiliations should list Company, City, Region, Country

% You can specify symbols, otherwise they are numbered in order.
% Ideally, you should not use this facility. Affiliations will be numbered
% in order of appearance and this is the preferred way.
\icmlsetsymbol{equal}{*}

\begin{icmlauthorlist}
\icmlauthor{Chaitanya K. Joshi}{equal,cambridge}
\icmlauthor{Cristian Bodnar}{equal,cambridge}
\icmlauthor{Simon V. Mathis}{cambridge}
\icmlauthor{Taco Cohen}{qualcomm}
\icmlauthor{Pietro Liò}{cambridge}
\end{icmlauthorlist}

\icmlaffiliation{cambridge}{University of Cambridge, UK}
\icmlaffiliation{qualcomm}{Qualcomm AI Research, The Netherlands. Qualcomm AI Research is an initiative of Qualcomm Technologies, Inc}

\icmlcorrespondingauthor{Chaitanya K. Joshi}{chaitanya.joshi@cl.cam.ac.uk}

% You may provide any keywords that you
% find helpful for describing your paper; these are used to populate
% the "keywords" metadata in the PDF but will not be shown in the document
\icmlkeywords{Geometric Deep Learning, Graph Neural Networks, Expressive Power, Equivariance, Graph Isomorphism}

\vskip 0.3in
]

% \printAffiliationsAndNotice{}  % leave blank if no need to mention equal contribution
\printAffiliationsAndNotice{\icmlEqualContribution} % otherwise use the standard text.

\begin{abstract}
    The expressive power of Graph Neural Networks (GNNs) has been studied extensively through the Weisfeiler-Leman (WL) graph isomorphism test. 
    However, standard GNNs and the WL framework are inapplicable for \emph{geometric graphs} embedded in Euclidean space, such as biomolecules, materials, and other physical systems.
    In this work, we propose a geometric version of the WL test (GWL) for discriminating geometric graphs while respecting the underlying physical symmetries: permutations, rotation, reflection, and translation.
    We use GWL to characterise the expressive power of geometric GNNs that are \emph{invariant} or \emph{equivariant} to physical symmetries in terms of distinguishing geometric graphs. 
    GWL unpacks how key design choices influence geometric GNN expressivity:
    (1) Invariant layers have limited expressivity as they cannot distinguish one-hop identical geometric graphs;
    (2) Equivariant layers distinguish a larger class of graphs by propagating geometric information beyond local neighbourhoods;
    (3) Higher order tensors and scalarisation enable maximally powerful geometric GNNs;
    and
    (4) GWL's discrimination-based perspective is equivalent to universal approximation.
    Synthetic experiments supplementing our results are available at \url{https://github.com/chaitjo/geometric-gnn-dojo}
\end{abstract}

\begin{figure}[h!]
    \centering
    \includegraphics[width=0.9\linewidth]{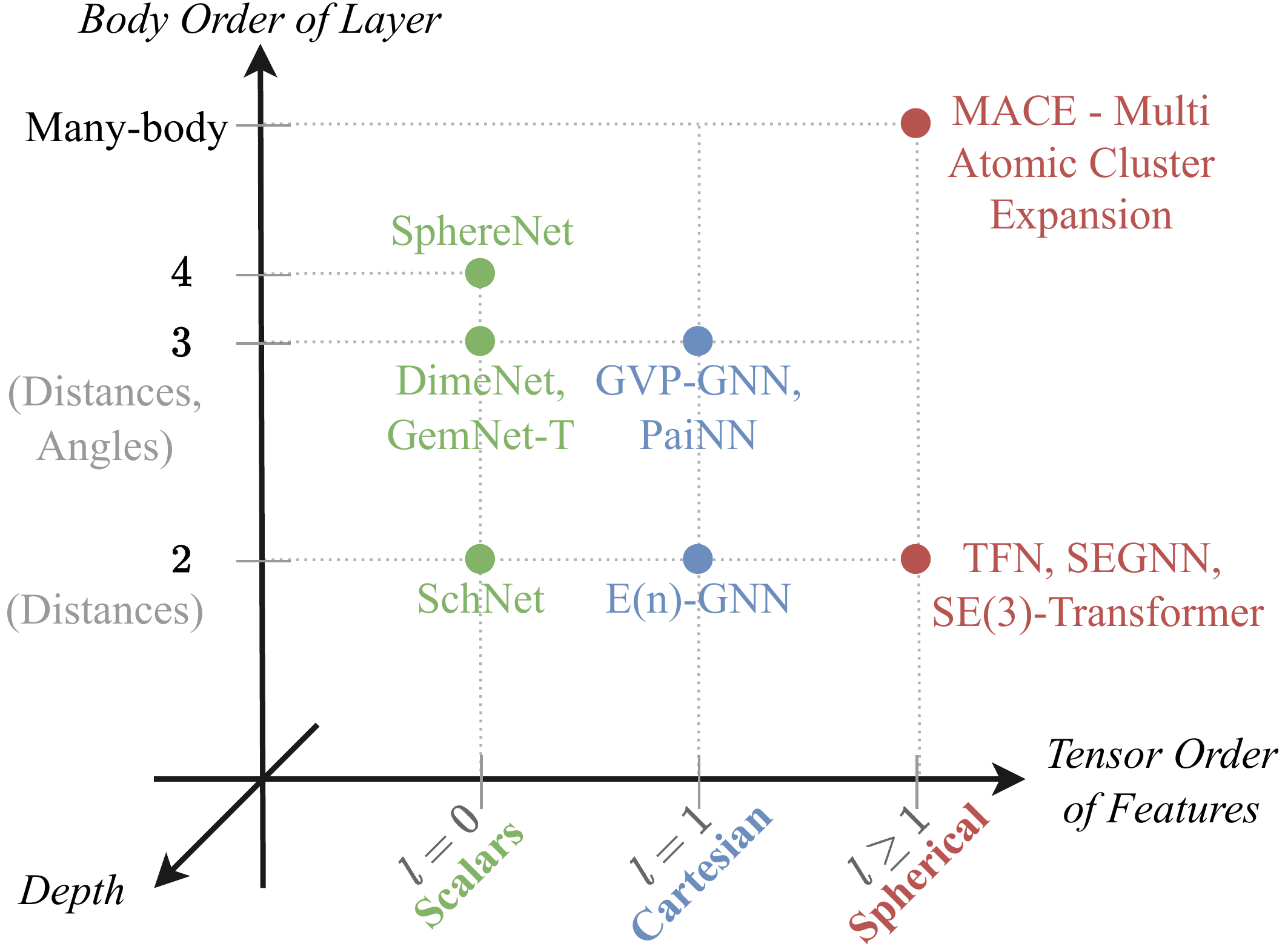}
    \caption{
    \textbf{Axes of geometric GNN expressivity:}
    (1) \emph{Body order}: increasing scalarisation body order builds expressive local descriptors; (2) \emph{Tensor order}: higher order tensors determine the relative orientation of neighbourhoods; and (3) \emph{Depth}: deep equivariant layers propagate geometric information beyond local neighbourhoods.
    }
    \label{fig:axes}
\end{figure}

\section{Introduction}

Systems in biochemistry \citep{jamasb2021graphein}, material science \citep{ocp_dataset}, physical simulations \citep{sanchez2020learning}, and multiagent robotics \citep{li2020graph} contain both geometry and relational structure. 
Such systems can be modelled via \emph{geometric graphs} embedded in Euclidean space.
For example, molecules are represented as a set of nodes which contain information about each atom and its 3D spatial coordinates as well as other geometric quantities such as velocity or acceleration.
Notably, the geometric attributes transform along with Euclidean transformations of the system, \textit{i.e.} they are equivariant to symmetry groups of rotations, reflections, and translation.
Standard Graph Neural Networks (GNNs) which do not take spatial symmetries into account are ill-suited for geometric graphs, as the geometric attributes would no longer retain their physical meaning and transformation behaviour \citep{bogatskiy2022symmetry,bronstein2021geometric}.

GNNs specialised for geometric graphs follow the message passing paradigm \citep{gilmer2017neural} where node features are updated in a permutation equivariant manner by aggregating features from local neighbourhoods.
Crucially, in addition to permutations, the updated geometric features of the nodes retain the transformation behaviour of the initial attributes, 
\textit{i.e.} they are also equivariant to the Lie group of rotations ($SO(d)$) or rotations and reflections ($O(d)$). We use $\fG$ as a generic symbol for these Lie groups.
We consider two classes of geometric GNNs:
(1) \textbf{$\fG$-equivariant models}, where the intermediate features and propagated messages are equivariant geometric quantities such as vectors \citep{satorras2021n} or tensors \citep{thomas2018tensor}; and
(2) \textbf{$\fG$-invariant models}, which only propagate local invariant scalar features such as distances and angles \citep{schutt2018schnet, Gasteiger2020directional}. 

Both classes of architectures have shown promising results for applications ranging from protein structure prediction \citep{baek2021accurate} and design \citep{dauparas2022robust} to molecular dynamics \citep{batzner2022nequip} and catalysis \citep{Gasteiger2021gemnet}.
At the same time, key theoretical questions remain unanswered:
(1) How to characterise the \textit{expressive power} of geometric GNNs? 
and
(2) What is the tradeoff between $\fG$-equivariant and $\fG$-invariant GNNs? 

The graph isomorphism problem~\citep{read1977graph} and the Weisfeiler-Leman (WL)~\citep{weisfeiler1968reduction} test for distinguishing non-isomorphic graphs have become a powerful tool for analysing the expressive power of non-geometric GNNs \citep{xu2018how, morris2019weisfeiler}.
The WL framework has been a major driver of progress in graph representation learning \citep{chen2019equivalence, maron2019provably, dwivedi2020benchmarking, bodnar2021swl, bodnar2021cwl}.
However, WL does not directly apply to geometric graphs as they exhibit a stronger notion of geometric isomorphism that accounts for spatial symmetries.

\textbf{Contributions. }
In this work, we study the expressive power of geometric GNNs from the perspective of discriminating non-isomorphic geometric graphs.
\vspace{-1em}
\begin{itemize}[itemsep=0em]
    \item In Section \ref{sec:gwl}, we propose a geometric version of the Weisfeiler-Leman graph isomorphism test, termed GWL, which is a theoretical upper bound on the expressive power of geometric GNNs.
    \item In Section \ref{sec:designspace}, we use the GWL framework to formalise how key design choices influence geometric GNN expressivity, summarised in Figure \ref{fig:axes}.
    Notably, invariant GNNs cannot distinguish graphs where one-hop neighbourhoods are the same and fail to compute non-local geometric properties such as volume, centroid, etc. 
    Equivariant GNNs distinguish a larger class of graphs as stacking equivariant layers propagates geometric information beyond local neighbourhoods.
    \item Synthetic experiments in Section \ref{sec:expt} demonstrate practical implications for building maximally powerful geometric GNNs, \textit{s.a.} geometric oversquashing with increased depth, and counterexamples that highlight the utility of higher order spherical tensors.
    \item Finally, in Section \ref{sec:universality}, we prove an equivalence between a model's ability to discriminate geometric graphs and its universal approximation capabilities.
    While universality is binary, GWL's discrimination-based perspective provides a more granular and practically insightful lens into geometric GNN expressivity.
\end{itemize}

\section{Background}
\label{sec:prelim}

% \textbf{Graph Isomorphism and Weisfeiler-Leman Test (WL). } 
\textbf{Graph Isomorphism. }
An attributed graph $\mathcal{G} = ( \mA, \mS )$ is a set $\mathcal{V}$ of $n$ nodes connected by edges.
$\mA$ denotes an $n \times n$ adjacency matrix where each entry $a_{ij} \in \{ 0, 1 \}$ indicates the presence or absence of an edge connecting nodes $i$ and $j$.
The matrix of \emph{scalar} features $\mS \in \mathbb{R}^{n \times f}$ stores attributes $\vs_i \in \mathbb{R}^f$ associated with each node $i$, where the subscripts index rows and columns in the corresponding matrices.

The graph isomorphism problem asks whether two graphs are the same, but drawn differently.
Two attributed graphs $\gG, \gH$ are \emph{isomorphic} (denoted $\mathcal{G} \simeq \mathcal{H}$)  if there exists an edge-preserving bijection $b: \mathcal{V}(\mathcal{G}) \rightarrow \mathcal{V}(\mathcal{H})$ such that $\vs_i^{(\mathcal{G})} = \vs_{b(i)}^{(\mathcal{H})}$.

\textbf{Weisfeiler-Leman Test (WL). }
WL tests whether two (attributed) graphs are isomorphic. At iteration zero, WL assigns a \emph{colour} $c_i^{(0)} \in C$ from a countable space of colours $C$ to each node $i$. 
Nodes are coloured the same if their features are the same, otherwise, they are coloured differently. 
In subsequent iterations $t$, WL iteratively updates the colouring $\vc_i^{(t)} \in C$:
\vspace{-0.5em}
\begin{equation}\label{eq:wl}
    \vc^{(t)}_i \defeq \textsc{Hash} \left( \vc^{(t-1)}_i , \ldblbrace \vc^{(t-1)}_j \mid j \in \mathcal{N}_i \rdblbrace \right),
\end{equation}
where $\textsc{Hash}$ is an injective map (\textit{i.e.} a perfect hash map) that assigns a unique colour to each input.
The test terminates when the partition of the nodes induced by the colours becomes stable.
Given two graphs $\mathcal{G}$ and $\mathcal{H}$, if there exists some iteration $t$ for which $\ldblbrace c_i^{(t)} \mid i \in \mathcal{V}(\mathcal{G}) \rdblbrace \neq \ldblbrace c_i^{(t)} \mid i \in \mathcal{V}(\mathcal{H}) \rdblbrace$, then the graphs are not isomorphic. 
Otherwise, WL is inconclusive and we cannot distinguish the two graphs.

%%%

\textbf{Group Theory. } 
We assume basic familiarity with group theory, see \citet{zee2016group} for an overview.
We denote the action of a group $\fG$ on a space $X$ by $\fg \cdot x$. 
If $\fG$ acts on spaces $X$ and $Y$, we say:
(1) A function $f: X \to Y$ is $\fG$-\emph{invariant} if $f(\fg \cdot x) = f(x)$, \textit{i.e.} the output remains unchanged under transformations of the input; and
(2) A function $f: X \to Y$ is $\fG$-\emph{equivariant} if $f(\fg \cdot x) = \fg \cdot f(x)$, \textit{i.e.} a transformation of the input must result in the output transforming equivalently.

We also consider \emph{$\fG$-orbit injective} functions:
The \emph{$\fG$-orbit} of $x \in X$ is $\gO_{\fG}(x) = \{ \fg \cdot x \mid \fg \in \fG \} \subseteq X$.
% When $\fG$ is understood from the context, we simply refer to it as the orbit. 
When $x$ and $x'$ are part of the same orbit, we write $x \simeq x'$. 
We say a function $f: X \to Y$ is \emph{$\fG$-orbit injective} if we have $f(x_1) = f(x_2)$ if and only if  $x_1 \simeq x_2$ for any $x_1, x_2 \in X$.
Such a function is $\fG$-invariant, since $f(\fg \cdot x) = f(x)$.

We work with the permutation group over $n$ elements $S_n$ and the Lie groups $\fG = SO(d)$ or $\fG = O(d)$.
Invariance to translations $T(d)$ is conventionally handled using relative positions. 
% or by subtracting the centre of mass from all nodes positions.
Given one of the groups above, for an element $\fg$ we denote by $\mM_\fg$ (or another capital letter) its standard matrix representation.

%%%

\textbf{Geometric Graphs. }
A geometric graph $\mathcal{G} = ( \mA, \mS, \vec{\mV}, \vec{\mX} )$ is an attributed graph that is also decorated with geometric attributes: node coordinates $\vec{\mX} \in \mathbb{R}^{n \times d}$ and (optionally) vector features
$\vec{\mV} \in \mathbb{R}^{n \times d}$ (\textit{e.g.} velocity, acceleration).
Without loss of generality, we work with a single vector feature per node. Our setup generalises to multiple vector features or higher-order tensors.
% In biochemistry and material science, the conventional procedure for constructing the geometric graph $\mathcal{G} = ( \mA, \mS, \vec{\mV}, \vec{\mX} )$ is via the underlying point cloud $( \mS, \vec{\mV}, \vec{\mX} )$ using a predetermined radial cutoff $r$.
% Thus, the adjacency matrix is defined as $a_{ij} = 1 \text{ if } \Vert \vec{\vx}_i-\vec{\vx}_j \Vert_2 \leq r$, or $0$ otherwise, for all $a_{ij} \in \mA$.
The geometric attributes transform as follows under the action of the relevant groups:
(1) $S_n$ acts on the graph via $\mP_\sigma \gG := (\mP_\sigma\mA\mP_\sigma^\top, \mP_\sigma\mS, \mP_\sigma\vec{\mV}, \mP_\sigma\vec{\mX})$;
% $\mS, \vec{\mV}, \vec{\mX}$; 
(2) Orthogonal transformations $\mQ_\fg \in \fG$ act on $\vec{\mV}, \vec{\mX}$ via $\vec{\mV}\mQ_\fg, \vec{\mX}\mQ_\fg$;
and
(3) Translations $\vec{\vt} \in T(d)$ act on the coordinates $\vec{\mX}$ via $\vec{\vx}_i + \vec{\vt}$ for all nodes $i$.

Two geometric graphs $\mathcal{G}$ and $\mathcal{H}$ are \emph{geometrically isomorphic} if there exists an attributed graph isomorphism $b$ such that the geometric attributes are equivalent, up to global group actions $\mQ_\fg \in \fG$ and $\vec{\vt} \in T(d)$:
\vspace{-0.5em}
\begin{multline}
    \left( \vs_{b(i)}^{(\mathcal{H})}, \ \mQ_{\fg} \vec{\vv}_{b(i)}^{(\mathcal{H})}, \ \mQ_{\fg} (\vec{\vx}_{b(i)}^{(\mathcal{H})} + \vec{\vt}) \right) \ = \ \\
    \left( \vs_i^{(\mathcal{G})}, \vec{\vv}_i^{(\mathcal{G})}, \vec{\vx}_i^{(\mathcal{G})} \right) \quad \text{for all } i \in \mathcal{V}(\mathcal{G}).
\end{multline}
% Note that if two geometric graphs are geometrically isomorphic, they are also isomorphic as attributed graphs. However, the converse is not true.
Geometric graph isomorphism and distinguishing (sub-) graph geometries has important practical implications for representation learning. 
For \textit{e.g.}, for molecular systems, an ideal architecture should map distinct local structural environments around atoms to distinct representations~\citep{bartok2013representing, pozdnyakov2020incompleteness}, which can then be used for downstream predictive task.

%%%

\textbf{Geometric Graph Neural Networks. }
We consider two broad classes of geometric GNN architectures.
(1) \emph{$\fG$-equivariant GNN layers} \citep{thomas2018tensor, satorras2021n} 
propagate scalar as well as geometric vector features from iteration $t$ to $t+1$ via learnable aggregate and update functions, $\textsc{Agg}$ and $\textsc{Upd}$, respectively:
\vspace{-0.5em}
\begin{multline}
    \label{eq:gnn-equiv}
    \vs_i^{(t+1)}, \vec{\vv}_i^{(t+1)} \defeq \textsc{Upd} \Big( (\vs_i^{(t)}, \vec{\vv}_i^{(t)}) \ , \\
    \textsc{Agg} \left( \ldblbrace (\vs_i^{(t)}, \vec{\vv}_i^{(t)}), (\vs_j^{(t)},  \vec{\vv}_j^{(t)}), \vec{\vx}_{ij} \mid j \in \mathcal{N}_i \rdblbrace \right) \Big),
\end{multline}
where $\vec{\vx}_{ij} = \vec{\vx}_{i} - \vec{\vx}_{j}$ denote relative position vectors.
Alternatively, (2) \emph{$\fG$-invariant GNN layers}
\citep{schutt2018schnet, Gasteiger2020directional} 
do not update vector features and only propagate invariant scalar quantities computed using geometric information within local neighbourhoods:
\vspace{-0.5em}
\begin{multline}
    \label{eq:gnn-inv}
    \vs_i^{(t+1)} \defeq \textsc{Upd} \Big( \vs_i^{(t)} \ , \\ 
    \textsc{Agg} \left( \ldblbrace (\vs_i^{(t)}, \vec{\vv}_i^{(t)}), (\vs_j^{(t)},  \vec{\vv}_j^{(t)}), \vec{\vx}_{ij} \mid j \in \mathcal{N}_i \rdblbrace \right) \Big).
\end{multline}
For both models, the scalar features at the final iteration are mapped to graph-level predictions via a permutation-invariant readout $f: \mathbb{R}^{n \times f} \rightarrow \mathbb{R}^{f'}$.
\textbf{In Appendix \ref{app:gnn-bg}, we provide unified equations and examples of geometric GNNs covered by our framework.}

%%%

\section{The Geometric Weisfeiler-Leman Test}
\label{sec:gwl}

%%%

\begin{figure*}[t]
    \centering
    \includegraphics[width=0.75\linewidth]{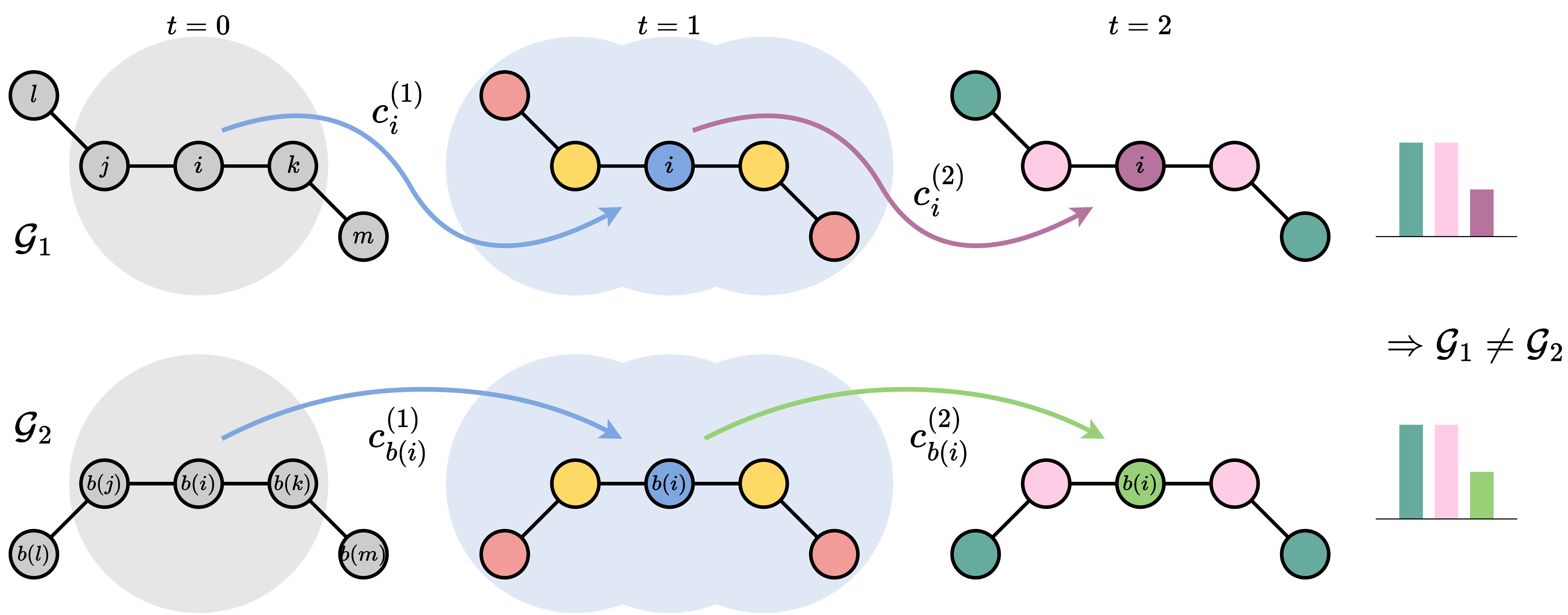}
    \caption{\textbf{Geometric Weisfeiler-Leman Test.}
        GWL distinguishes non-isomorphic geometric graphs $\mathcal{G}_1$ and $\mathcal{G}_2$ by injectively assigning colours to distinct neighbourhood patterns, up to global symmetries.
        Each iteration expands the neighbourhood from which geometric information can be gathered (shaded for node $i$). 
        % The corresponding illustration for IGWL is available in Figure \ref{fig:igwl} in the appendix.
        Example inspired by \citet{schutt2021equivariantmp}, $\fG = O(d)$.
    }
    \label{fig:gwl}
\end{figure*}

%%%

\textbf{Assumptions.} Analogous to the WL test,
we assume for the rest of this section that all geometric graphs we want to distinguish are finite in size and come from a countable dataset. In other words, 
the geometric and scalar features the nodes are equipped with come from countable subsets $C \subset \sR^d$ and $C' \subset \sR$, respectively. 
As a result, when we require functions to be injective, we require them to be injective over the countable set of $\fG$-orbits that are obtained by acting with $\fG$ on the dataset.
% As a consequence, when we require functions to be injective, we require them to be injective over the countable set of $\fG$-orbits that are obtained by acting with the symmetry group $\fG$ on the dataset. This mimics the practically relevant situation of finite datasets, in which we have a finite pool $\mathcal{P}$ of geometric graphs (and their symmetry transformations) which we would like to distinguish. 

\textbf{Intuition.}
For an intuition of how to generalise WL to geometric graphs, we note that WL uses a local, node-centric, procedure to update the colour of each node $i$ using the colours of its the 1-hop \emph{neighbourhood} $\mathcal{N}_i$.
% At each iteration, the 1-hop \emph{neighbourhood} $\mathcal{N}_i$ around each node $i$ is extracted and the updated colour $c_i^{(t)}$ is
% computed based on the colour distribution in $\mathcal{N}_i$ (\textit{c.f.} Eq.~\ref{eq:wl}). 
In the geometric setting, $\mathcal{N}_i$ is an attributed point cloud around the central node $i$. As a result, each neighbourhood carries two types of information: (1) neighbourhood type (invariant to $\fG$) and (2) neighbourhood geometric orientation (equivariant to $\fG$).
This leads to constraints on the generalisation of the neighbourhood aggregation procedure of Equation~\ref{eq:wl}. 
From an axiomatic point of view, our generalisation of the WL aggregation procedure must meet two properties:

\textbf{Property 1: Orbit injectivity of colours.}
% If a geometric graph is rotated in space, the colouring of the nodes should not change.
% At the same time, if two neighbourhoods are not the same up to an action of $\fG$ (\textit{e.g.} a rotation), then the colours of the corresponding central nodes should differ. 
If two neighbourhoods are the same up to an action of $\fG$ (\textit{e.g.} rotation), then the colours of the corresponding central nodes should be the same.
Thus, the colouring must be $\fG$-orbit injective -- which also makes it $\fG$-invariant -- over the countable set of all orbits of neighbourhoods in our dataset.
% We thus need the colouring to be $\fG$-orbit injective -- which also makes it $\fG$-invariant -- over the countable set of all orbits of neighbourhoods in our dataset.

\textbf{Property 2: Preservation of local geometry.} A key property of WL is that the aggregation is injective.
A $\fG$-invariant colouring procedure that purely satisfies Property 1 is not sufficient because, by definition, it loses spatial properties of each neighbourhood such as the relative pose or orientation \citep{hinton2011transforming}.
Thus, we must additionally update auxiliary \emph{geometric information} variables in a way that is $\fG$-equivariant and injective.

\textbf{Geometric Weisfeiler-Leman (GWL).} These intuitions motivate the following definition of the GWL test.
At initialisation, we assign to each node $i \in \mathcal{V}$ a scalar node colour $c_i \in C'$ and an auxiliary object $\vg_i$ containing the geometric information associated to it:
\vspace{-0.5em}
\begin{align}
    \label{eq:gwl_init}
    c_i^{(0)} \defeq \textsc{Hash}(\vs_i), \quad {\vg}_i^{(0)} \defeq \left( c_i^{(0)}, \vec{\vv}_i
    %, \ \ldblbrace \rdblbrace 
    \right),
\end{align}
where $\textsc{Hash}$ denotes an injective map over the scalar attributes $\vs_i$ of node $i$, \textit{e.g.} the $\textsc{Hash}$ of standard WL. 
% Note that $\textsc{Hash}$ need not be continuous.
To define the inductive step, assume we have the colours of the nodes and the associated geometric objects at iteration $t-1$. Then, we can aggregate the geometric information around node $i$ into a new object as follows:
\vspace{-0.5em}
\begin{multline}\label{eq:gwl_geom}
    \vg_i^{(t)} \defeq \big( (c_i^{(t-1)}, \vg_i^{(t-1)}) \ , \\  \ldblbrace  (c_j^{(t-1)}, \vg_j^{(t-1)}, \vec{\vx}_{ij}) \mid j \in \mathcal{N}_i \rdblbrace \big),
\end{multline}
Here $\ldblbrace \cdot \rdblbrace$ denotes a multiset -- a set in which elements may occur more than once. 
Importantly, the group $\fG$ can act on the geometric objects above inductively by acting on the geometric information inside it. This amounts to rotating (or reflecting) the entire $t$-hop neighbourhood contained inside:
\vspace{-0.5em}
\begin{align}
    \fg \cdot {\vg}_i^{(0)} &\defeq \left( c_i^{(0)}, \ \mQ_{\fg} \vec{\vv}_i \right), \\
    \fg \cdot \vg_i^{(t)} &\defeq \Big( (c_i^{(t-1)}, \fg \cdot  \vg_i^{(t-1)}), \\ & \ldblbrace  (c_j^{(t-1)}, \fg \cdot \vg_j^{(t-1)}, \mQ_{\fg} \vec{\vx}_{ij}) \mid j \in \mathcal{N}_i \rdblbrace \Big) \nonumber
\end{align}
% \begin{align*}
%     \fg \hspace{-1pt} \cdot \hspace{-1pt} {\vg}_i^{(0)} & \hspace{-2pt} \defeq \hspace{-2pt} \left( \hspace{-2pt} c_i^{(0)}, \ \mQ_{\fg} \vec{\vv}_i \hspace{-2pt} \right) , \quad
%     \fg \hspace{-1pt} \cdot \hspace{-1pt} \vg_i^{(t)}   \hspace{-2pt} \defeq \hspace{-2pt} \left( \hspace{-2pt} (c_i^{(t-1)}, \fg \hspace{-1pt} \cdot \hspace{-1pt} \vg_i^{(t-1)}), \  \ldblbrace  (c_j^{(t-1)}, \fg \hspace{-1pt} \cdot \hspace{-1pt} \vg_j^{(t-1)}, \mQ_{\fg} \vec{\vx}_{ij}) \mid j \in \mathcal{N}_i \rdblbrace \hspace{-2pt} \right)
% \end{align*}
Clearly, the aggregation building $\vg_i$ for any time-step $t$ is injective and $\fG$-equivariant.
Finally, we can compute the node colours at iteration $t$ for all $i \in \mathcal{V}$ by aggregating the geometric information in the neighbourhood around node $i$:
\vspace{-0.5em}
\begin{align}
    \label{eq:gwl_hash}
    % c_i^{(t)} & = \textsc{I-Hash}^{(t)} \left( (c_i^{(t-1)}, \vg_i^{(t-1)}) \ , \  \ldblbrace  (c_j^{(t-1)}, \vg_j^{(t-1)}, \vec{\vx}_{ij}) \mid j \in \mathcal{N}_i \rdblbrace \right) \\
    c_i^{(t)} & \defeq \textsc{I-Hash}^{(t)} \left( \vg_i^{(t)} \right),
\end{align}
by using a $\fG$-orbit injective and $\fG$-invariant function that we denote by $\textsc{I-Hash}$. That is for any geometric objects $\vg, \vg'$, $\textsc{I-Hash} (\vg) = \textsc{I-Hash} (\vg')$ if and only if there exists $\fg \in \fG$ such that $\vg = \fg \cdot \vg'$.
% Note that since we have by assumption a countable number of orbits, there exists an injective function $Z$ sending each orbit (treated as an equivalence class) to a natural number and then the map sending $\vg \mapsto Z(\gO_\fG(\vg)) \in \sN$ is orbit-space injective. This proves the existence of an $\textsc{I-Hash}$ map. 
Note that $\textsc{I-Hash}$ is an idealised $\fG$-orbit injective function, similar to the $\textsc{Hash}$ function used in WL, which is not necessarily continuous. 

\textbf{Overview. } With each iteration, $\vg_i^{(t)}$ aggregates geometric information in progressively larger $t$-hop subgraph neighbourhoods $\gN_i^{(t)}$ around the node $i$.
% Thus, $t$ iterations of GWL enable each node $i$ to access its $t$-hop subgraph $\gN_i^{(t)}$ via $\vg_i^{(t)}$. 
The node colours summarise the structure of these $t$-hops via the $\fG$-invariant aggregation performed by $\textsc{I-Hash}$. The procedure terminates when the partitions of the nodes induced by the colours do not change from the previous iteration. Finally, given two geometric graphs $\mathcal{G}$ and $\mathcal{H}$, if there exists some iteration $t$ for which $\ldblbrace c_i^{(t)} \mid i \in \mathcal{V}(\mathcal{G}) \rdblbrace \neq \ldblbrace c_i^{(t)} \mid i \in \mathcal{V}(\mathcal{H}) \rdblbrace$, then GWL deems the two graphs as geometrically non-isomorphic. Otherwise, GWL cannot distinguish them.

%%%

% \begin{figure}[t!]
%     \centering
%     \includegraphics[width=\linewidth]{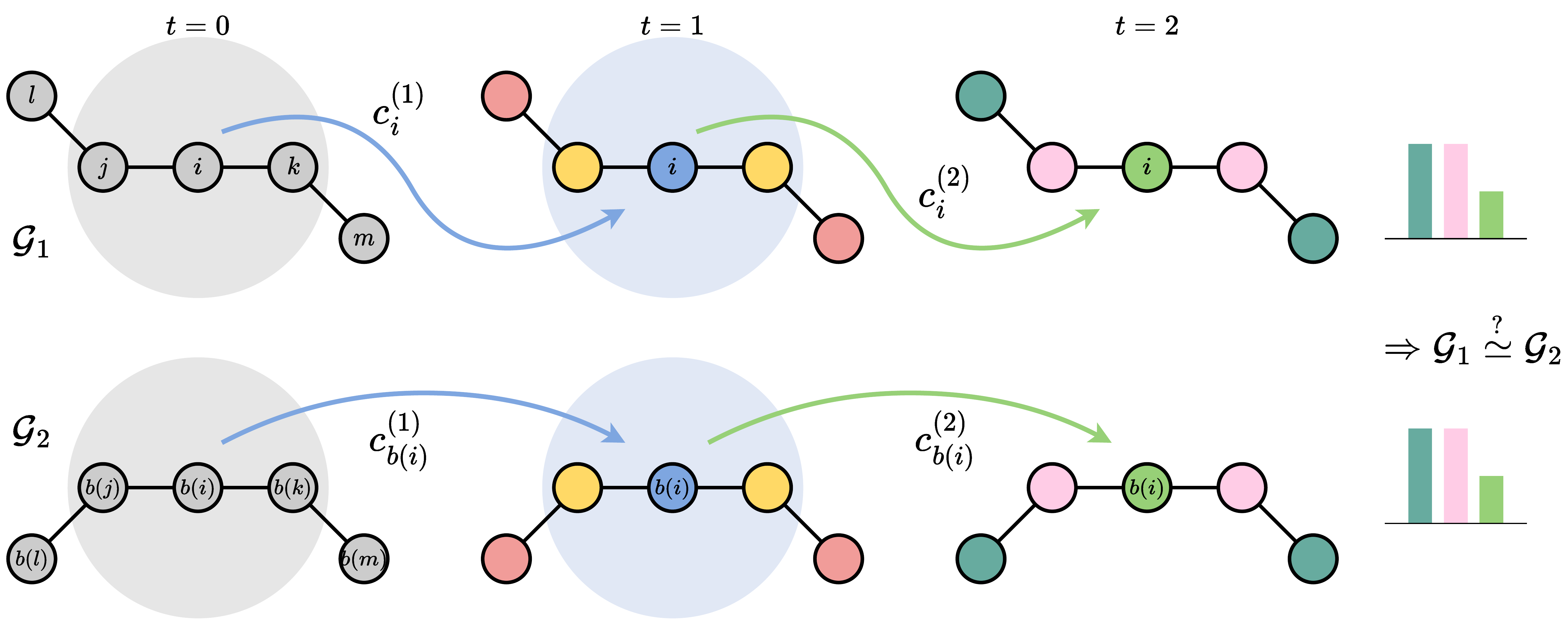}
%     \caption{\textbf{Invariant GWL Test.}
%         IGWL cannot distinguish $\mathcal{G}_1$ and $\mathcal{G}_2$ as they are $1$-hop identical: The $\fG$-orbit of the $1$-hop neighbourhood around each node is the same, and IGWL cannot propagate geometric orientation information beyond $1$-hop. 
%         % (here $\fG = O(d)$). 
%         % Example inspired by \cite{schutt2021equivariantmp}.
%     }
%     \label{fig:igwl}
% \end{figure}

%%%

\textbf{Invariant GWL. }
Since we are interested in understanding the role of $\fG$-equivariance, we also consider a more restrictive Invariant GWL (IGWL) that only updates node colours using the $\fG$-orbit injective $\textsc{I-Hash}$ function and does not propagate geometric information:
\vspace{-0.5em}
\begin{multline}
    c_i^{(t)} \defeq \textsc{I-Hash} \Big( (c_i^{(t-1)} , \vec{\vv}_i) \ ,  \\ 
    \ldblbrace (c_j^{(t-1)} , \vec{\vv}_j, \ \vec{\vx}_{ij} ) \mid j \in \mathcal{N}_i \rdblbrace \Big). \label{eq:inv_gwl}
\end{multline}

%%%

\textbf{IGWL with $k$-body scalars. }
In order to further analyse the construction of the node colouring function $\textsc{I-Hash}$, we consider $\text{IGWL}_{(k)}$ based on the maximum number of nodes involved in the computation of $\fG$-invariant scalars (also known as the `body order' \citep{batatia2022design}):
\vspace{-0.5em}
\begin{multline}
    c_i^{(t)} \defeq \textsc{I-Hash}_{(k)} \Big( (c_i^{(t-1)} , \vec{\vv}_i) \ ,  \\ \ldblbrace (c_j^{(t-1)} , \vec{\vv}_j, \ \vec{\vx}_{ij} ) \mid j \in \mathcal{N}_i \rdblbrace \Big),
\end{multline}
and $\textsc{I-Hash}_{(k+1)}$ is defined as:
\vspace{-0.5em}
\begin{multline}
    \textsc{Hash} \Big( \ldblbrace \textsc{I-Hash} \Big( (c_i^{(t-1)} , \vec{\vv}_i),
    \ldblbrace (c_{j_1}^{(t-1)} , \vec{\vv}_{j_1}, \ \vec{\vx}_{ij_1} ),\dots, \\ (c_{j_k}^{(t-1)} , \vec{\vv}_{j_k}, \ \vec{\vx}_{ij_k} ) \rdblbrace \Big)
    \mid \vj \in (\mathcal{N}_i)^k 
    \rdblbrace \Big),
\end{multline}
where $\vj = [j_1, \dots, j_k ]$ are all possible $k$-tuples formed of elements of $\mathcal{N}_i$. 
Therefore, $\text{IGWL}_{(k)}$ is now constrained to extract information only from all the possible $k$-sized tuples of nodes (including the central node) in a neighbourhood.
For instance, $\textsc{I-Hash}_{(2)}$ can identify neighbourhoods only up to pairwise distances among the central node and any of its neighbours (\textit{i.e.} a $2$-body scalar), while $\textsc{I-Hash}_{(3)}$ up to distances and angles formed by any two edges (\textit{i.e.} a $3$-body scalar).
Notably, distances and angles alone are incomplete descriptors of local geometry as there exist several counterexamples that require higher body-order invariants to be distinguished 
\citep{bartok2013representing, pozdnyakov2020incompleteness}. Therefore, $\textsc{I-Hash}_{(k)}$ with lower scalarisation body order $k$ makes the colouring weaker.

%%%

\subsection{Characterising the expressivity of geometric GNNs}
\label{sec:gwl:equivalence}

Following \citet{xu2018how,morris2019weisfeiler}, we upper bound the expressive power of geometric GNNs by the GWL test.
We show that any $\fG$-equivariant GNN can be at most as powerful as GWL in distinguishing non-isomorphic geometric graphs.
Proofs are available in Appendix \ref{app:gwl-equivalence}.

\begin{restatable}{theorem}{uppertheo}
    \label{theo:upper}
    Any pair of geometric graphs distinguishable by a $\fG$-equivariant GNN is also distinguishable by GWL.
\end{restatable}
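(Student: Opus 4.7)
The plan is to prove the contrapositive by induction on the iteration/layer index $t$: whenever GWL fails to distinguish two geometric graphs, the scalar readout of any $\fG$-equivariant GNN agrees on them. Concretely, I will maintain the coupled invariants \textbf{(a)} there is a function $\phi^{(t)}$ such that $\vs_i^{(t)} = \phi^{(t)}(c_i^{(t)})$ for every node $i$ in any graph from the dataset, and \textbf{(b)} there is an $\fG$-equivariant map $\Phi^{(t)}$ such that $\vec{\vv}_i^{(t)} = \Phi^{(t)}(\vg_i^{(t)})$. Equivalently, whenever $\vg_i^{(t)} = \fg \cdot \vg_j^{(t)}$ for nodes $i, j$ (possibly in different graphs), we have $\vs_i^{(t)} = \vs_j^{(t)}$ and $\vec{\vv}_i^{(t)} = \mQ_\fg \vec{\vv}_j^{(t)}$.

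The base case $t=0$ is immediate: $c_i^{(0)} = \textsc{Hash}(\vs_i)$ with $\textsc{Hash}$ injective gives (a), and projecting $\vg_i^{(0)} = (c_i^{(0)}, \vec{\vv}_i)$ onto its second coordinate is a trivially $\fG$-equivariant map giving (b). For the inductive step, Equation~\ref{eq:gwl_geom} shows that $\vg_i^{(t)}$ packages the central node's previous colour and geometric object together with the same data for each neighbour $j \in \mathcal{N}_i$ and the relative positions $\vec{\vx}_{ij}$; by the hypothesis at $t-1$, every argument passed to the GNN's $\textsc{Agg}$ is extractable from $\vg_i^{(t)}$ as an $\fG$-equivariant function. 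Since $\textsc{Agg}$ and $\textsc{Upd}$ are $\fG$-equivariant by assumption, composing them yields $(\vs_i^{(t)}, \vec{\vv}_i^{(t)})$ as an $\fG$-equivariant function of $\vg_i^{(t)}$; because $\fG$ acts trivially on scalars, this directly gives (b) at $t$ and also shows that $\vs_i^{(t)}$ depends only on the $\fG$-orbit of $\vg_i^{(t)}$. Combining this with the $\fG$-orbit injectivity of $\textsc{I-Hash}^{(t)}$ — so that $c_i^{(t)}$ uniquely encodes that orbit — yields (a) at $t$.

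To finish, suppose an $\fG$-equivariant GNN with $T$ layers distinguishes $\gG$ from $\gH$ via its permutation-invariant scalar readout. Invariant (a) implies that $\ldblbrace \vs_i^{(T)} \mid i \in \mathcal{V}(\gG) \rdblbrace$ is a function of $\ldblbrace c_i^{(T)} \mid i \in \mathcal{V}(\gG) \rdblbrace$, and similarly for $\gH$. Hence if the readouts differ, the final GWL colour multisets of $\gG$ and $\gH$ must already differ, so GWL distinguishes the two graphs as well.

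The main obstacle I anticipate is bookkeeping: ensuring that a single $\fg \in \fG$ acts coherently on the central node and all its neighbours inside one aggregation step, rather than each neighbour being related by a possibly different rotation. This is resolved by noting that the $\fG$-action on $\vg_i^{(t)}$ defined after Equation~\ref{eq:gwl_geom} is diagonal — the same $\fg$ simultaneously rotates every geometric quantity contained inside — so the assumed $\fG$-equivariance of $\textsc{Agg}$ and $\textsc{Upd}$ transports a single orbit representative on the input to a single orbit representative on the output, and no mismatched rotations arise across the aggregation.
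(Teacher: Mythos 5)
Your proposal is correct and follows essentially the same route as the paper's proof: an induction on the iteration index showing that the GNN's scalar and vector features factor through the GWL colours and geometric objects (as a $\fG$-invariant, respectively $\fG$-equivariant, function), so that matching GWL colour multisets force matching readouts; the paper merely phrases this as a contradiction rather than a contrapositive. Your explicit handling of the diagonal group action across a neighbourhood and of why $\vs_i^{(t)}$ is a function of $c_i^{(t)}$ alone (via orbit-injectivity of $\textsc{I-Hash}$) makes precise steps the paper leaves implicit, but introduces no new idea.
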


With sufficient iterations, the output of $\fG$-equivariant GNNs can be equivalent to GWL if certain conditions are met regarding the aggregate, update and readout functions.

\begin{restatable}{proposition}{conditionstheo}
    \label{theo:conditions}
    $\fG$-equivariant GNNs have the same expressive power as GWL if the following conditions hold:
    (1) The aggregation $\textsc{Agg}$ is an injective, $\fG$-equivariant multiset function.
    (2) The scalar part of the update $\textsc{Upd}_s$ is a $\fG$-orbit injective, $\fG$-invariant multiset function.
    (3) The vector part of the update $\textsc{Upd}_v$ is an injective, $\fG$-equivariant multiset function.
    (4) The graph-level readout $f$ is an injective multiset function.
\end{restatable}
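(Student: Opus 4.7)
My plan is to combine Theorem \ref{theo:upper} (GNN distinguishing power $\le$ GWL) with a converse direction, so that together they yield equivalence. For the converse, I would prove by induction on the iteration $t$ that, under conditions (1)--(3), the GNN's hidden state $(\vs_i^{(t)}, \vec{\vv}_i^{(t)})$ retains exactly the same information as the GWL pair $(c_i^{(t)}, \vg_i^{(t)})$, up to global $\fG$-action. Condition (4) then lifts this node-level correspondence to a graph-level one.

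More precisely, the inductive invariant I would maintain across all nodes appearing in our countable dataset is: (a) $\vs_i^{(t)} = \vs_{i'}^{(t)}$ iff $c_i^{(t)} = c_{i'}^{(t)}$, and (b) there exists $\fg \in \fG$ with $(\vs_i^{(t)}, \vec{\vv}_i^{(t)}) = (\vs_{i'}^{(t)}, \mQ_\fg \vec{\vv}_{i'}^{(t)})$ iff $\vg_i^{(t)} \simeq \vg_{i'}^{(t)}$. The base case $t=0$ follows from injectivity of the initialisation $\textsc{Hash}$ together with the identification $\vg_i^{(0)} = (c_i^{(0)}, \vec{\vv}_i)$. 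For the inductive step, I would compare the GNN aggregation input multiset with the one appearing in Equation \ref{eq:gwl_geom}: by (b) applied node-wise over $\mathcal{N}_i$, the two multisets carry the same content up to a simultaneous global $\fG$-action. Injectivity and $\fG$-equivariance of $\textsc{Agg}$ (condition 1) then force $(\vm_i^{(t)}, \vec{\vm}_i^{(t)})$ to be a faithful $\fG$-equivariant encoding of $\vg_i^{(t+1)}$. Condition (2) makes $\textsc{Upd}_s$ play exactly the role of $\textsc{I-Hash}$, producing a scalar that agrees across two nodes iff their aggregated objects share an orbit, which yields (a) at $t+1$; meanwhile condition (3) makes $\textsc{Upd}_v$ preserve the remaining equivariant information, giving (b) at $t+1$. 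Once GWL distinguishes two graphs at some iteration $T$, invariant (a) implies the multisets of terminal scalar features differ, and injectivity of the readout $f$ on multisets (condition 4) propagates this distinction to the graph-level outputs.

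\textbf{Main obstacle.} The central difficulty is the recursive, nested nature of the geometric object $\vg_i^{(t)}$: unfolded, it encodes the entire $t$-hop rooted geometric subgraph around node $i$, and one must argue that the equivariant pair $(\vs_i^{(t)}, \vec{\vv}_i^{(t)})$ is a complete $\fG$-invariant of this structure rather than merely a lossy summary. The most delicate point is the scalar/vector split in the update: $\textsc{Upd}_s$ is simultaneously required to be $\fG$-invariant and $\fG$-orbit injective, so one must verify that the tuples fed into it span precisely one $\fG$-orbit per equivalence class of neighbourhoods. This is exactly what the injective equivariant aggregation of condition (1) guarantees, but making the orbit-vs-invariant bookkeeping airtight --- in the generality where $\fG$ is either $SO(d)$ or $O(d)$ and features live in countable subsets of a continuous space --- is the step that needs the most care, and where I would invest the bulk of the formal argument.
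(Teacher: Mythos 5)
Your plan is correct and follows essentially the same route as the paper's proof: an induction over iterations maintaining that the GNN features are a faithful encoding of the GWL colours and geometric objects (the paper phrases your ``iff'' invariants equivalently as the existence of injective maps $\varphi_s, \varphi_v$ with $\vs_i^{(t)} = \varphi_s(c_i^{(t)})$ and $\vec{\vv}_i^{(t)} = \varphi_v(c_i^{(t)}, \vg_i^{(t)})$, built as compositions $g_s \circ h_s^{-1}$ of the injective component maps), followed by the injective readout, with Theorem~\ref{theo:upper} supplying the converse bound.
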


%%%

Similar statements can be made for $\fG$-invariant GNNs and IGWL.
Thus, we can directly transfer theoretical results between GWL/IGWL, which are abstract mathematical tools, and the class of geometric GNNs upper bounded by the respective tests.
This connection has several interesting practical implications, discussed subsequently.

%%%

\section{Understanding the Design Space of Geometric GNNs via GWL}
\label{sec:designspace}

\textbf{Overview. } 
We demonstrate the utility of the GWL framework for understanding how geometric GNN design choices \citep{batatia2022design} influence expressivity:
(1) Depth or number of layers; 
(2) Invariant vs. equivariant message passing;
and
(3) Body order of scalarisation.
In doing so, we formalise theoretical limitations of current architectures and provide practical implications for designing maximally powerful models.
Proofs are available in Appendix \ref{app:design-space}.

\subsection{Role of depth: propagating geometric information}
\label{sec:designspace:depth}

% In order to formalise the expressive power of GWL and IGWL, let us consider what geometric graphs can and cannot be distinguished by the tests.
% As a simple first observation, we note that when all coordinates and vectors are set equal to zero GWL coincides with the standard WL. In this \emph{edge case}, GWL has the same expressive power as WL.

Let us consider the simplified setting of two geometric graphs $\mathcal{G}_1 = ( \mA_1, \mS_1, \vec{\mV}_1, \vec{\mX}_1 )$ and $\mathcal{G}_2 = ( \mA_2, \mS_2, \vec{\mV}_2, \vec{\mX}_2 )$ such that the underlying attributed graphs $( \mA_1, \mS_1 )$ and $( \mA_2, \mS_2 )$ are isomorphic.
This case frequently occurs in chemistry, where molecules occur in different conformations, but with the same graph topology given by the covalent bonding structure.

Each iteration of GWL aggregates geometric information $\vg_i^{(k)}$ from progressively larger neighbourhoods $\gN_i^{(k)}$ around the node $i$, and distinguishes (sub-)graphs via comparing $\fG$-orbit injective colouring of $\vg_i^{(k)}$.
We say $\mathcal{G}_1$ and $\mathcal{G}_2$ are \textit{$k$-hop distinct} if for all graph isomorphisms $b$, there is some node $i \in \mathcal{V}_1, b(i) \in \mathcal{V}_2$ such that the corresponding $k$-hop subgraphs $\gN_i^{(k)}$ and $\gN_{b(i)}^{(k)}$ are distinct.
Otherwise, we say $\mathcal{G}_1$ and $\mathcal{G}_2$ are \textit{$k$-hop identical} if all corresponding $k$-hop subgraphs $\gN_i^{(k)}$ and $\gN_{b(i)}^{(k)}$ are the same up to group  actions.
% Note that $k$-hop distinct geometric graphs are also $(k+1)$-hop distinct. Similarly, $k$-hop identical geometric graphs are also $(k-1)$-hop identical, but not necessarily $(k+1)$-hop distinct.
We can now formalise what geometric graphs can and cannot be distinguished by GWL and maximally powerful geometric GNNs, in terms of the number of iterations.

\begin{restatable}{proposition}{gwlcanprop}
    \label{prop:gwl-can}
    GWL can distinguish any $k$-hop distinct geometric graphs $\mathcal{G}_1$ and $\mathcal{G}_2$ where the underlying attributed graphs are isomorphic, and $k$ iterations are sufficient.
\end{restatable}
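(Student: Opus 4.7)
The plan is to prove the proposition in two stages. First, I would establish an inductive faithfulness lemma stating that for every iteration $t \ge 0$, the geometric object $\vg_i^{(t)}$ is a $\fG$-equivariant encoding of the rooted $t$-hop neighbourhood $\gN_i^{(t)}$ up to a global $\fG$ action. Combined with the $\fG$-orbit injectivity of $\textsc{I-Hash}$, this yields the key consequence: for $t \ge 1$, $c_i^{(t)} = c_j^{(t)}$ if and only if $\gN_i^{(t)}$ and $\gN_j^{(t)}$ lie in the same $\fG$-orbit (here $i, j$ are allowed to live in different graphs). Second, I would use this consequence in a contrapositive argument to turn the $k$-hop distinctness hypothesis into a discrepancy of the colour multisets at iteration $k$.

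For the faithfulness lemma I would induct on $t$. The base case $t=0$ is immediate from the definition $\vg_i^{(0)} = (\textsc{Hash}(\vs_i), \vec{\vv}_i)$, which carries exactly the scalar and vector information at node $i$ (injectivity of $\textsc{Hash}$ handles the scalar part). For the inductive step, Equation~\ref{eq:gwl_geom} constructs $\vg_i^{(t)}$ by bundling $(c_i^{(t-1)}, \vg_i^{(t-1)})$ with the multiset $\ldblbrace (c_j^{(t-1)}, \vg_j^{(t-1)}, \vec{\vx}_{ij}) \mid j \in \mathcal{N}_i \rdblbrace$ in a manifestly $\fG$-equivariant manner. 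By the inductive hypothesis each $\vg_j^{(t-1)}$ faithfully encodes $\gN_j^{(t-1)}$, and gluing these encodings together via the relative positions $\vec{\vx}_{ij}$ (which transform equivariantly) reconstructs the rooted $t$-hop neighbourhood $\gN_i^{(t)}$ up to one global $\fG$ action. Applying the $\fG$-invariant and $\fG$-orbit-injective $\textsc{I-Hash}$ then turns the orbit of $\vg_i^{(t)}$ into a complete invariant $c_i^{(t)}$, as required.

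With the lemma in hand I would argue the main claim by contrapositive. Suppose the multisets $\ldblbrace c_i^{(k)} \mid i \in \mathcal{V}(\mathcal{G}_1) \rdblbrace$ and $\ldblbrace c_i^{(k)} \mid i \in \mathcal{V}(\mathcal{G}_2) \rdblbrace$ coincide at iteration $k$. Then there exists a bijection $\pi: \mathcal{V}(\mathcal{G}_1) \to \mathcal{V}(\mathcal{G}_2)$ with $c_i^{(k)} = c_{\pi(i)}^{(k)}$ for every $i$, and the lemma gives $\gN_i^{(k)} \simeq_{\fG} \gN_{\pi(i)}^{(k)}$. The main obstacle, which is the subtlest step, is to upgrade $\pi$ to an attributed graph isomorphism $b$: a priori a colour-matching bijection need not respect global adjacency. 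I would close the gap using two facts: (i) for $k \ge 1$ the GWL colouring refines the standard WL colouring, so $\pi$ already preserves scalar features and the multiset of local neighbour colours; and (ii) an attributed graph isomorphism between $(\mA_1, \mS_1)$ and $(\mA_2, \mS_2)$ is guaranteed by hypothesis. Using the latter to align $\pi$ within each GWL colour class then yields a bijection $b$ that is simultaneously colour-preserving and edge-preserving, so $\gN_i^{(k)} \simeq_{\fG} \gN_{b(i)}^{(k)}$ for every $i$, contradicting the $k$-hop distinctness assumption and completing the proof.
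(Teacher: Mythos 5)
Your first stage --- the faithfulness lemma --- is essentially the paper's opening assertion, namely that the $k$-th GWL iteration identifies the $\fG$-orbit of the rooted $k$-hop subgraph $\gN_i^{(k)}$ via $\vg_i^{(k)}$, spelled out as an induction; that part is sound and matches the paper. Where you diverge is the second stage. The paper argues \emph{forward}: $k$-hop distinctness guarantees that under every graph isomorphism $b$ some node $i$ has $\gN_i^{(k)} \not\simeq \gN_{b(i)}^{(k)}$, hence $c_i^{(k)} \neq c_{b(i)}^{(k)}$ by orbit injectivity of $\textsc{I-Hash}$, and the colour multisets therefore differ. You instead argue by contrapositive, which forces you to convert an arbitrary colour-matching bijection $\pi$ into a graph isomorphism.

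That conversion is a genuine gap. From (i) a bijection $\pi$ preserving GWL colours and (ii) the separate existence of \emph{some} attributed graph isomorphism, it does not follow that there is a single bijection that is simultaneously colour-preserving and edge-preserving. ``Aligning $\pi$ within each GWL colour class'' presupposes that some graph isomorphism carries each colour class of $\gG_1$ onto the corresponding class of $\gG_2$, and for a generic vertex colouring this is false: two isomorphic graphs can have identical colour multisets yet admit no colour-preserving isomorphism (a $6$-cycle with two \emph{adjacent} marked vertices versus one with two \emph{antipodal} marked vertices already shows this). What could rescue the step is the recursive self-consistency of GWL colourings --- $c_i^{(k)}$ determines the multiset of neighbour colours at iteration $k-1$, which rules out such pathological colourings --- but you neither state nor use this, and it is precisely the content needed to close the contrapositive. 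Either supply that argument explicitly, or switch to the forward direction as the paper does, where the quantifier in the definition of $k$-hop distinctness (``for all graph isomorphisms $b$'') is used directly and no upgrading of $\pi$ is required.
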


\begin{restatable}{proposition}{gwlcannotprop}
    \label{prop:gwl-cannot}
    Up to $k$ iterations, GWL cannot distinguish any $k$-hop identical geometric graphs $\mathcal{G}_1$ and $\mathcal{G}_2$ where the underlying attributed graphs are isomorphic.
\end{restatable}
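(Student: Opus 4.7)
The plan is to proceed by induction on the iteration count $t \in \{0,1,\dots,k\}$, showing that corresponding nodes under the attributed-graph isomorphism $b : \mathcal{V}(\mathcal{G}_1) \to \mathcal{V}(\mathcal{G}_2)$ receive identical GWL colours at every step up to $t=k$. The key conceptual point is that although the group element witnessing the geometric equivalence of $k$-hop neighbourhoods can \emph{a priori} vary from node to node, within a single $k$-hop neighbourhood around a fixed node $i$ one element $\fg_i$ simultaneously relates every sub-neighbourhood on the $\mathcal{G}_1$ side to its counterpart on the $\mathcal{G}_2$ side. This is exactly what the recursive GWL update needs, since $\vg_i^{(t)}$ only gathers information from within $\gN_i^{(t)} \subseteq \gN_i^{(k)}$ whenever $t \leq k$.

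Concretely, for each $i \in \mathcal{V}(\mathcal{G}_1)$ fix $\fg_i \in \fG$ witnessing that $\gN_i^{(k)}$ and $\gN_{b(i)}^{(k)}$ coincide up to the action of $\fG$; existence follows from the $k$-hop identical assumption together with the attributed-graph isomorphism. The induction hypothesis I would carry is: for every $i \in \mathcal{V}(\mathcal{G}_1)$ and every $j \in \gN_i^{(k-t)}$, one has $\vg_{b(j)}^{(t)} = \fg_i \cdot \vg_j^{(t)}$. The base case $t=0$ reduces to verifying that $c_j^{(0)} = c_{b(j)}^{(0)}$ (from the attributed-graph isomorphism) and $\vec{\vv}_{b(j)} = \mQ_{\fg_i}\vec{\vv}_j$ (from the geometric equivalence on $\gN_i^{(k)}$), which is exactly the definition of $\fg_i \cdot \vg_j^{(0)}$.

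For the inductive step from $t$ to $t+1$ with $t+1 \leq k$, fix $j \in \gN_i^{(k-t-1)}$ and observe that $\mathcal{N}_j \subseteq \gN_i^{(k-t)}$, so the inductive hypothesis applies to $j$ itself and to each $j' \in \mathcal{N}_j$ with the \emph{same} $\fg_i$. The relative positions transform as $\vec{\vx}_{b(j)b(j')} = \mQ_{\fg_i}\vec{\vx}_{jj'}$ by the $k$-hop identical hypothesis. Plugging these into the GWL aggregation (Equation~\ref{eq:gwl_geom}) for $\vg_{b(j)}^{(t+1)}$ and matching it termwise to the definition of $\fg_i \cdot \vg_j^{(t+1)}$ closes the induction. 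Setting $j = i$ then yields $\vg_{b(i)}^{(t)} = \fg_i \cdot \vg_i^{(t)}$ for every $t \leq k$, and the $\fG$-invariance of $\textsc{I-Hash}^{(t)}$ (Equation~\ref{eq:gwl_hash}) gives $c_{b(i)}^{(t)} = c_i^{(t)}$. Since $b$ is a bijection, the two colour multisets $\ldblbrace c_i^{(t)} \mid i \in \mathcal{V}(\mathcal{G}_1) \rdblbrace$ and $\ldblbrace c_i^{(t)} \mid i \in \mathcal{V}(\mathcal{G}_2) \rdblbrace$ agree at every iteration up to $k$, so GWL does not distinguish $\mathcal{G}_1$ from $\mathcal{G}_2$.

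The main obstacle is bookkeeping of which group element acts where: it is tempting but insufficient to invoke a different $\fg_j$ for each node $j$, because the relative-position entries $\vec{\vx}_{jj'}$ nested inside $\vg_j^{(t+1)}$ must be rotated by the \emph{same} orthogonal transformation as the nested geometric objects $\vg_{j'}^{(t)}$. This forces one to exploit the fact that the $k$-hop identical condition supplies a \emph{uniform} witness $\fg_i$ over the whole of $\gN_i^{(k)}$. A secondary care point is invoking the inclusion $\gN_j^{(1)} \subseteq \gN_i^{(k-t)}$ whenever $j \in \gN_i^{(k-t-1)}$ explicitly, so that the recursion never ``looks outside'' the region on which $\fg_i$ is known to act as a witness.
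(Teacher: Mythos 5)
Your proposal is correct and follows the same route as the paper: the paper's proof simply asserts that the $t$-th GWL iteration identifies the $\fG$-orbit of the $t$-hop subgraph, so $k$-hop identical graphs yield overlapping orbits $\gO_{\fG}(\vg_i^{(k)}) = \gO_{\fG}(\vg_{b(i)}^{(k)})$ and hence equal colours, while you carry out the induction that justifies that assertion, correctly handling the one real subtlety (a single witness $\fg_i$ must act uniformly on the whole $k$-hop neighbourhood so that nested relative positions and nested geometric objects rotate coherently). The only detail left implicit in your inductive step is that the intermediate colours $c_j^{(t)} = c_{b(j)}^{(t)}$ appearing inside $\vg_j^{(t+1)}$ also agree, but this follows immediately from your hypothesis $\vg_{b(j)}^{(t)} = \fg_i \cdot \vg_j^{(t)}$ together with the $\fG$-invariance of $\textsc{I-Hash}$.
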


Additionally, we can state the following results about the more constrained IGWL.

\begin{restatable}{proposition}{igwlcanprop}
    \label{prop:igwl-can}
    IGWL can distinguish any $1$-hop distinct geometric graphs $\mathcal{G}_1$ and $\mathcal{G}_2$ where the underlying attributed graphs are isomorphic,
    and 1 iteration is sufficient.
\end{restatable}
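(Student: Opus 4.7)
The plan is to prove the claim by contrapositive: assume one iteration of IGWL fails to distinguish $\mathcal{G}_1$ and $\mathcal{G}_2$ while their underlying attributed graphs remain isomorphic, and deduce that the two graphs are in fact 1-hop identical, contradicting the 1-hop distinct hypothesis and thereby forcing IGWL to distinguish them already at iteration 1.

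First I would unpack one IGWL step via \eqref{eq:inv_gwl}: the colour $c_i^{(1)}$ is $\textsc{I-Hash}$ applied to the central pair $(c_i^{(0)}, \vec{\vv}_i)$ together with the multiset $\ldblbrace (c_j^{(0)}, \vec{\vv}_j, \vec{\vx}_{ij}) \mid j \in \mathcal{N}_i \rdblbrace$, which is precisely the data encoding the 1-hop geometric neighbourhood $\gN_i^{(1)}$ of $i$. Because $\textsc{I-Hash}$ is $\fG$-orbit injective and $\fG$-invariant, $c_i^{(1)} = c_{i'}^{(1)}$ holds if and only if $\gN_i^{(1)}$ and $\gN_{i'}^{(1)}$ lie in the same $\fG$-orbit. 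Consequently, IGWL's failure to distinguish at iteration 1 is equivalent to the multisets of $\fG$-orbit classes of 1-hop neighbourhoods coinciding across $\mathcal{G}_1$ and $\mathcal{G}_2$.

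Next I would combine this orbit-multiset equality with the given attributed graph isomorphism $b^{\star}$ to exhibit an attributed isomorphism $b$ that additionally matches orbits, i.e.\ $\gN_i^{(1)}$ and $\gN_{b(i)}^{(1)}$ lie in a common $\fG$-orbit for every $i$. Conceptually this is a bipartite matching problem: the bipartite graph on $\mathcal{V}(\mathcal{G}_1) \cup \mathcal{V}(\mathcal{G}_2)$ whose edges join orbit-equivalent nodes admits a perfect matching by the multiset coincidence, and I would compose $b^{\star}$ with automorphisms of $\mathcal{G}_2$ that permute nodes inside a common orbit class to realise this matching. Such swaps preserve scalar features, degrees, and neighbour-scalar multisets, which keeps the composition an attributed graph isomorphism; the resulting $b$ then witnesses that $\mathcal{G}_1$ and $\mathcal{G}_2$ are 1-hop identical, producing the required contradiction.

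The hard part will be this last swap step: promoting the orbit-preserving bijection supplied by the bipartite matching into one that simultaneously satisfies the adjacency constraint of an attributed graph isomorphism. This relies on a Hall-type argument inside the intersection of the orbit-class partition with the attributed automorphism cosets of $\mathcal{G}_2$, exploiting that orbit-equivalent nodes are locally interchangeable (sharing scalar, degree, and neighbour-scalar data) so that enough symmetry exists in $\mathrm{Aut}(\mathcal{G}_2)$ to carry out the reassignment while preserving adjacency. Once $b$ is in hand, the definition of 1-hop identical closes the argument.
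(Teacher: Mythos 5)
Your opening steps agree with the paper's reading of the definitions: one IGWL iteration computes exactly the $\fG$-orbit of each node's $1$-hop geometric neighbourhood, so failing to distinguish at iteration $1$ is equivalent to the two graphs having identical multisets of $1$-hop neighbourhood orbits. The paper's own proof is a short forward argument from the definition of $1$-hop distinctness: every matching of the two vertex sets must mismatch some pair of $1$-hop orbits, hence some pair of colours, hence the colour multisets differ. Your contrapositive reformulation is sound up to the point where you must convert the colour-matching bijection into a graph isomorphism.

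That conversion is where your proof breaks, and the repair you sketch does not work. Equality of colour multisets gives you an \emph{arbitrary} bijection $\pi$ with $\gN_i^{(1)} \simeq \gN_{\pi(i)}^{(1)}$, whereas ``$1$-hop identical'' requires such a bijection that is also an attributed-graph isomorphism. You propose to obtain one by composing $b^{\star}$ with automorphisms of $\gG_2$ that permute nodes within an orbit class, justified by the claim that orbit-equivalent nodes are ``locally interchangeable'' and hence that $\mathrm{Aut}(\gG_2)$ contains the needed swaps. That inference is false: sharing scalar features, degree, neighbour-scalar multisets, and even the full local geometry does not place two nodes in the same $\mathrm{Aut}(\gG_2)$-orbit. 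A five-node path with identical scalars has three internal nodes with identical $1$-hop attributed data, yet its automorphism group is only the identity and the reversal, and no automorphism exchanges the second and third nodes. So the coset $\mathrm{Aut}(\gG_2) \circ b^{\star}$, which is precisely the set of all graph isomorphisms $\gG_1 \to \gG_2$, need not be reachable from $\pi$ by your swaps, and a Hall-type argument on the orbit-equivalence bipartite graph only reproduces the adjacency-ignorant matching you already possess. Closing this route would require proving that orbit-multiset equality together with underlying isomorphism forces the existence of an orbit-matching \emph{isomorphism}; the only leverage available is the consistency constraint that adjacent nodes' $1$-hop orbits share the connecting edge's data, which you never invoke, and establishing that claim is a substantial piece of combinatorics rather than a corollary of local interchangeability. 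The paper's direct argument never has to confront this.
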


\begin{restatable}{proposition}{igwlcannotprop}
    \label{prop:igwl-cannot}
    Any number of iterations of IGWL cannot distinguish any $1$-hop identical geometric graphs $\mathcal{G}_1$ and $\mathcal{G}_2$ where the underlying attributed graphs are isomorphic.
\end{restatable}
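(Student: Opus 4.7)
I would prove this by induction on the iteration index $t$, showing that a carefully chosen attributed graph isomorphism $b : \mathcal{V}(\mathcal{G}_1) \to \mathcal{V}(\mathcal{G}_2)$ preserves IGWL colours at every step, i.e., $c_i^{(t)} = c_{b(i)}^{(t)}$ for all $i$ and all $t \geq 0$. This immediately implies that the two colour multisets coincide at every iteration, and therefore IGWL never separates $\mathcal{G}_1$ and $\mathcal{G}_2$.

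\textbf{Setup and base case.} First I would fix the isomorphism $b$ witnessing $1$-hop identity: since $\mathcal{G}_1, \mathcal{G}_2$ are $1$-hop identical, there exists an attributed graph isomorphism $b$ and, for every $i$, a group element $\fg_i \in \fG$ such that the geometric $1$-hop neighbourhood of $i$ equals the $\fg_i$-action on the geometric $1$-hop neighbourhood of $b(i)$; concretely $\vec{\vv}_i^{(\mathcal{G}_1)} = \mQ_{\fg_i} \vec{\vv}_{b(i)}^{(\mathcal{G}_2)}$ and, for each $j \in \mathcal{N}_i$, $\vec{\vv}_j^{(\mathcal{G}_1)} = \mQ_{\fg_i} \vec{\vv}_{b(j)}^{(\mathcal{G}_2)}$ and $\vec{\vx}_{ij}^{(\mathcal{G}_1)} = \mQ_{\fg_i} \vec{\vx}_{b(i)\,b(j)}^{(\mathcal{G}_2)}$ (translations cancel since only relative positions appear in Equation~\ref{eq:inv_gwl}). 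The base case $t = 0$ follows from the attributed-graph isomorphism: $c_i^{(0)} = \textsc{Hash}(\vs_i) = \textsc{Hash}(\vs_{b(i)}) = c_{b(i)}^{(0)}$.

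\textbf{Inductive step.} Assuming $c_j^{(t-1)} = c_{b(j)}^{(t-1)}$ for all $j$, the input to $\textsc{I-Hash}$ at node $i$ is
\[
\Bigl( (c_i^{(t-1)}, \vec{\vv}_i),\ \ldblbrace (c_j^{(t-1)}, \vec{\vv}_j, \vec{\vx}_{ij}) \mid j \in \mathcal{N}_i \rdblbrace \Bigr).
\]
Applying the inductive hypothesis to rewrite each scalar colour and the $1$-hop identity to rewrite each vector and relative position, this input becomes exactly $\fg_i$ acting on the corresponding input at node $b(i)$ (with $\fG$ acting trivially on the scalar colours and by $\mQ_{\fg_i}$ on vectors/positions). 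Since $\textsc{I-Hash}$ is $\fG$-orbit injective and therefore $\fG$-invariant, both inputs receive the same colour, proving $c_i^{(t)} = c_{b(i)}^{(t)}$. The conclusion that $\ldblbrace c_i^{(t)} \mid i \in \mathcal{V}(\mathcal{G}_1) \rdblbrace = \ldblbrace c_i^{(t)} \mid i \in \mathcal{V}(\mathcal{G}_2) \rdblbrace$ for every $t$ follows by relabelling via $b$.

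\textbf{Main obstacle.} The only subtlety is checking that the group action lifts coherently to the tuple-and-multiset input of $\textsc{I-Hash}$: scalar colours must be treated as $\fG$-invariant coordinates while $\vec{\vv}$ and $\vec{\vx}_{ij}$ transform by $\mQ_{\fg_i}$, and the multiset structure must be respected so that the neighbour correspondence induced by $b$ aligns with the same $\fg_i$ for every neighbour of $i$ simultaneously. This is exactly what the definition of $1$-hop identity guarantees (a single $\fg_i$ aligns the whole $1$-hop neighbourhood at once), and once this is spelled out carefully the invariance of $\textsc{I-Hash}$ closes the induction. No additional information propagates across hops because IGWL never updates a geometric auxiliary object, which is precisely why the $1$-hop assumption suffices and why the result fails for GWL as in Proposition~\ref{prop:gwl-can}.
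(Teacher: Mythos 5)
Your proof is correct and follows essentially the same route as the paper's: both arguments rest on the fact that the input to $\textsc{I-Hash}$ at node $i$ lies in the same $\fG$-orbit as the input at $b(i)$, so $\fG$-invariance forces equal colours at every iteration. Your explicit induction on $t$ (propagating $c_j^{(t-1)} = c_{b(j)}^{(t-1)}$ through the inductive hypothesis while a single $\fg_i$ aligns the geometric part of each neighbourhood) actually spells out a step the paper's proof leaves implicit when it jumps from the overlap of the orbits of $\vg_i^{(1)}$ and $\vg_{b(i)}^{(1)}$ to the conclusion for all iterations.
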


An example illustrating Propositions \ref{prop:gwl-can} and \ref{prop:igwl-cannot} is shown in Figures \ref{fig:gwl} and \ref{fig:igwl}, respectively.

We can now consider the more general case where the underlying attributed graphs for $\mathcal{G}_1 = ( \mA_1, \mS_1, \vec{\mV}_1, \vec{\mX}_1 )$ and $\mathcal{G}_2 = ( \mA_2, \mS_2, \vec{\mV}_2, \vec{\mX}_2 )$ are non-isomorphic and constructed from point clouds using radial cutoffs, as conventional for biochemistry and material science applications.

\begin{restatable}{proposition}{nonisoprop}
    \label{prop:noniso}
    Assuming geometric graphs are constructed from point clouds using radial cutoffs,
    GWL can distinguish any geometric graphs $\mathcal{G}_1$ and $\mathcal{G}_2$ where the underlying attributed graphs are non-isomorphic.
    At most $k_{\text{Max}}$ iterations are sufficient, where $k_{\text{Max}}$ is the maximum graph diameter among $\mathcal{G}_1$ and $\mathcal{G}_2$.
\end{restatable}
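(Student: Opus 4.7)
The plan is to argue by contrapositive: I assume GWL fails to distinguish $\mathcal{G}_1$ and $\mathcal{G}_2$ within $k_{\text{Max}}$ iterations and construct an attributed-graph isomorphism $(\mA_1, \mS_1) \simeq (\mA_2, \mS_2)$, contradicting the hypothesis.

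First, I would show by induction on $t$ that $\vg_i^{(t)}$ encodes, up to a single global $\fG$-action, the complete $t$-hop geometric subgraph rooted at $i$ -- that is, the scalar attributes and relative coordinates of all nodes reachable from $i$ within $t$ edges, together with the induced edge structure. The base case is immediate from Equation~\ref{eq:gwl_init}. For the inductive step, Equation~\ref{eq:gwl_geom} attaches each child object $\vg_j^{(t-1)}$ via the edge vector $\vec{\vx}_{ij}$; since $\fG$ acts simultaneously on all sub-objects and edge vectors, the encoded subgraph is determined up to a single rotation/reflection rather than up to independent rotations of each branch. The radial cutoff assumption plays a key role here: once the point cloud of the $t$-hop subgraph is reconstructed, its adjacency is fully fixed by pairwise distances, so no connectivity information is lost in the tree-unrolling abstraction of the aggregation.

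Second, since $k_{\text{Max}}$ is the maximum graph diameter across $\mathcal{G}_1$ and $\mathcal{G}_2$, the object $\vg_i^{(k_{\text{Max}})}$ encodes the entire connected component containing $i$ as a geometric graph, up to $\fG$-orbit. If GWL fails to distinguish $\mathcal{G}_1$ and $\mathcal{G}_2$ at iteration $k_{\text{Max}}$, then the multisets of colours $c_i^{(k_{\text{Max}})} = \textsc{I-Hash}(\vg_i^{(k_{\text{Max}})})$ coincide; since $\textsc{I-Hash}$ is $\fG$-orbit injective, there exists a node bijection $b: \mathcal{V}_1 \to \mathcal{V}_2$ such that $\vg_i^{(k_{\text{Max}})}(\mathcal{G}_1) \simeq \vg_{b(i)}^{(k_{\text{Max}})}(\mathcal{G}_2)$ for every $i$. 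By the previous step this yields a geometric isomorphism of each connected component, and in particular an attributed-graph isomorphism. Gluing across connected components contradicts the assumed non-isomorphism of $(\mA_1, \mS_1)$ and $(\mA_2, \mS_2)$, closing the contrapositive.

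I expect the main obstacle to be the inductive bookkeeping in the first step: because the same node may appear in multiple branches of the tree unrolled by successive aggregations, the argument must verify that repeated appearances remain geometrically consistent under a \emph{single} global $\fG$-action rather than admitting per-branch rotations. This coherence is guaranteed precisely because $\vec{\vx}_{ij}$ enters GWL as an equivariant vector rather than a scalar invariant, so $\fg$ acts uniformly on all sub-objects; combined with the radial cutoff -- which determines adjacency from distances -- the reconstructed point cloud is unique up to a single global element of $\fG \times T(d)$, delivering the required geometric isomorphism and completing the contradiction.
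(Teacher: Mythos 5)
Your proposal is correct and follows essentially the same route as the paper's proof: both arguments rest on the observation that after $k_{\text{Max}}$ iterations the object $\vg_i^{(k_{\text{Max}})}$ captures the geometry of the entire graph, and that the radial-cutoff construction makes adjacency recoverable from pairwise distances, so orbit-equality of these objects would force the underlying attributed graphs to be isomorphic (the paper states this directly, you phrase it as a contrapositive). Your explicit attention to the tree-unrolling consistency issue --- that repeated appearances of a node across branches must cohere under a single global group action --- is a detail the paper's proof glosses over, but it does not change the underlying argument.
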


%%%

\subsection{Limitations of invariant message passing: failure to capture global geometry}
\label{sec:designspace:inv-equiv}

Propositions \ref{prop:gwl-can} and \ref{prop:igwl-cannot} enable us to compare the expressive powers of GWL and IGWL.

\begin{restatable}{theorem}{stricttheo}
    \label{theo:strict}
    GWL is strictly more powerful than IGWL.
\end{restatable}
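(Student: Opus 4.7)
To show that GWL is strictly more powerful than IGWL, I need to establish two things: (i) any pair of geometric graphs distinguished by IGWL is also distinguished by GWL, and (ii) there exists at least one pair that GWL distinguishes but IGWL does not. The second part can be read off from the preceding propositions by exhibiting a pair that is $1$-hop identical but $k$-hop distinct for some $k\geq 2$.

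For direction (i), the natural strategy is induction on the iteration $t$, proving that the partition of the node set induced by the GWL colours refines the partition induced by the IGWL colours. The base case is essentially by inspection: at $t=0$ both tests hash only the scalar attributes $\vs_i$, so the partitions coincide. For the inductive step, observe that one iteration of IGWL applies $\textsc{I-Hash}$ directly to the tuple $\big((c_i^{(t-1)},\vec{\vv}_i),\,\ldblbrace(c_j^{(t-1)},\vec{\vv}_j,\vec{\vx}_{ij})\mid j\in\gN_i\rdblbrace\big)$, while GWL first builds the $\fG$-equivariant object $\vg_i^{(t)}$ that bundles exactly this information together with the previously accumulated geometric objects $\vg_j^{(t-1)}$, and only then applies $\textsc{I-Hash}$. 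Since the input consumed by IGWL is, up to the inductive hypothesis on colours, a function of the input consumed by GWL, two nodes assigned the same GWL colour at iteration $t$ must also receive the same IGWL colour at iteration $t$. Taking contrapositives globally on graphs, any graph pair separated by IGWL at some iteration is also separated by GWL at that iteration.

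For direction (ii), I would invoke the example used to illustrate Figure \ref{fig:igwl}: a pair $\gG_1,\gG_2$ whose underlying attributed graphs are isomorphic, whose $1$-hop neighbourhoods around every node are identical up to a global action of $\fG$, but whose $2$-hop neighbourhoods differ. Proposition \ref{prop:igwl-cannot} then shows IGWL cannot distinguish $\gG_1$ and $\gG_2$ for any number of iterations, while Proposition \ref{prop:gwl-can} shows GWL distinguishes them in two iterations. Combining (i) and this explicit separation proves that the containment is strict.

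The main obstacle is the inductive step in direction (i): I need to argue cleanly that GWL's colouring refines IGWL's even though the two tests use differently structured internal states ($\vg_i^{(t)}$ versus $c_i^{(t)}$). The cleanest way is to define, for each $t$, a deterministic map that takes the $\fG$-orbit of $\vg_i^{(t)}$ to the IGWL input tuple at iteration $t$ (using the inductive hypothesis that equal GWL colours imply equal IGWL colours), so that any coincidence of $\textsc{I-Hash}$ outputs on the GWL side forces a coincidence on the IGWL side. With this map in hand the induction is routine.
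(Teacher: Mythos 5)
Your proposal is correct, and part (ii) — exhibiting a $1$-hop identical but $k$-hop distinct pair ($k\geq 2$) and invoking Propositions \ref{prop:gwl-can} and \ref{prop:igwl-cannot} — is exactly what the paper does. Where you diverge is part (i): the paper disposes of the containment in one line by observing that IGWL is the degenerate special case of GWL in which the geometric object is never updated ($\vg_i^{(t)} = \vg_i^{(0)} \equiv (\vs_i,\vec{\vv}_i)$ for all $t$), so GWL with full updates can only be at least as powerful. You instead run a partition-refinement induction in the style of the WL/GNN comparisons of \citet{xu2018how,morris2019weisfeiler}: equal GWL colours at iteration $t$ force equal IGWL colours at iteration $t$, via a $\fG$-equivariant ``forgetting'' map from the orbit of $\vg_i^{(t)}$ onto the IGWL input tuple (extracting $\vec{\vv}_j$ from the nested objects, translating GWL colours to IGWL colours by the inductive hypothesis, and keeping $\vec{\vx}_{ij}$). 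Your route is longer but more explicit about why the containment holds at the level of colour partitions rather than at the level of ``classes of tests''; the paper's route is shorter but leans on the slightly informal notion that GWL can ``learn the identity.'' Both are valid, and your contrapositive step correctly lifts the node-level refinement to the graph-level multiset comparison. The one point to spell out if you write this up fully is that the orbit equality $\vg_i^{(t)} = \fg\cdot\vg_k^{(t)}$ must be unrolled down to the level-zero objects to recover $\vec{\vv}_j = \mQ_\fg\vec{\vv}_{j'}$ for the matched neighbours — this is what makes your forgetting map well defined and equivariant.
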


This statement formalises the advantage of $\fG$-equivariant intermediate layers for geometric graphs, as prescribed in the Geometric Deep Learning blueprint \citep{bronstein2021geometric}, in addition to echoing similar intuitions in the computer vision community.
As remarked by \citet{hinton2011transforming}, translation invariant models do not understand the relationship between the various parts of an image (termed the ``Picasso problem''). 
Similarly, our results point to IGWL and $\fG$-invariant GNNs failing to understand how the 1-hop neighbourhoods in a graph are oriented w.r.t. each other. 

As a result, even the most powerful $\fG$-invariant GNNs are restricted in their ability to compute global and non-local geometric properties.
\begin{restatable}{proposition}{globalgeomprop}
\label{prop:globalgeom}
IGWL and $\fG$-invariant GNNs cannot decide several geometric graph properties: (1) perimeter, surface area, and volume of the bounding box/sphere enclosing the geometric graph; (2) distance from the centroid or centre of mass; and (3) dihedral angles. 
\end{restatable}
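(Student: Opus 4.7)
The plan is to reduce the proposition to constructing, for each of the three listed properties, a pair of 1-hop identical geometric graphs whose values of that property differ. Once such counterexamples are in hand, Proposition~\ref{prop:igwl-cannot} ensures that IGWL produces identical colour multisets on the two graphs, and the $\fG$-invariant analogue of Theorem~\ref{theo:upper} lifts this indistinguishability to every $\fG$-invariant GNN; hence no such model can compute a function that separates the two graphs, and in particular cannot decide the property in question.

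For items (1) and (2), I would reuse the counterexample drawn in Figure~\ref{fig:igwl}: two cycle-graph realisations in which every node has the same bond length and bond angle within its 1-hop neighbourhood, so the configurations are 1-hop identical, yet they have different circumradii. This gives different bounding-sphere and bounding-box dimensions (and hence different perimeter, surface area, and volume), as well as different distances from each node to the global centroid. A clean variant is to compare a regular $n$-gon with a ``doubly wound'' $2n$-gon with cyclic 1-hop connectivity and matched nearest-neighbour distances and angles; the overall radius, centroid distances, and enclosed area all differ while the 1-hop local data match.

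For (3), I would construct two copies of the 4-node path $A{-}B{-}C{-}D$ embedded in $\mathbb{R}^3$ sharing the same three bond lengths and the same two bond angles $\angle ABC$ and $\angle BCD$, but placed in two distinct dihedral conformations about the $B{-}C$ axis (for instance trans at $180^{\circ}$ versus gauche at $60^{\circ}$). Each node's 1-hop neighbourhood is determined entirely by the incident edge vectors and scalar attributes, which agree between the two graphs up to a global rotation; but the dihedral is an intrinsically 2-hop quantity, depending on the relative orientation of the non-incident edges $AB$ and $CD$, and takes different values in the two conformations.

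The main obstacle is verifying that each constructed pair satisfies the precise notion of 1-hop identity from Equation~\ref{eq:inv_gwl}, namely that the multisets $\ldblbrace (c_j^{(t-1)}, \vec{\vv}_j, \vec{\vx}_{ij}) \mid j \in \mathcal{N}_i \rdblbrace$ at corresponding nodes agree up to a \emph{single} global $\fg \in \fG$ rather than merely node-by-node. For the ring examples this follows immediately from the rotational symmetry of each local neighbourhood. For the chain example one must orient the second copy so that the 1-hop frames at $A, B, C, D$ all align via the same rotation/reflection; the freedom to globally rotate and reflect the second configuration (and the fact that each 1-hop neighbourhood contains at most two vectors, whose relative angle is preserved) makes this a short case check. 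With this verification, Proposition~\ref{prop:igwl-cannot} closes the argument for every listed property.
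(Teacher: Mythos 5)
Your overall strategy is the same as the paper's: exhibit pairs of $1$-hop identical geometric graphs that differ in each listed property, then invoke Proposition~\ref{prop:igwl-cannot} together with the invariant analogue of Theorem~\ref{theo:upper} (Theorem~\ref{theo:invupper} in the appendix). The paper does this with a single concrete pair (Figure~\ref{fig:globalgeom}) that differs in all three properties at once, with explicit numerical values, and supplements it with a geometric computation-tree argument; your decomposition into separate counterexamples per property is an acceptable variant of the same idea.

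However, your stated verification condition is backwards, and taken literally it would sink your dihedral example. You claim one must check that the local multisets at corresponding nodes ``agree up to a \emph{single} global $\fg \in \fG$ rather than merely node-by-node.'' The definition of $k$-hop identical, and what the proof of Proposition~\ref{prop:igwl-cannot} actually uses, is the node-by-node condition: for each $i$ there is some $\fg_i$ with $\gN_i^{(1)} = \fg_i \cdot \gN_{b(i)}^{(1)}$, since $\textsc{I-Hash}$ only compares the orbit of each node's own neighbourhood. Requiring a single global $\fg$ is strictly stronger, and for your path $A{-}B{-}C{-}D$ it is \emph{unsatisfiable}: a single rotation that simultaneously aligns $\vec{\vx}_{BA}$, $\vec{\vx}_{BC}$, and $\vec{\vx}_{CD}$ across the two conformations would preserve the dihedral angle, contradicting the construction. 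The example is rescued precisely because only per-node alignment is needed ($Q_B \neq Q_C$ is allowed), so you should drop the global-alignment requirement rather than try to verify it. A secondary quibble: the ``doubly wound $2n$-gon'' variant does not obviously work in the plane, since matching edge lengths forces different vertex angles between a convex and a star polygon, so the $1$-hop neighbourhoods would differ; your primary reliance on the Figure~\ref{fig:igwl} pair (or the paper's Figure~\ref{fig:globalgeom} pair) is the safer route.
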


Finally, we identify a setting where this distinction between the two approaches disappears.

\begin{restatable}{proposition}{limitoffullprop}
    \label{prop:limit-of-full}
    IGWL has the same expressive power as GWL for fully connected geometric graphs.
\end{restatable}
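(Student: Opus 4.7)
The strategy is to observe that on a fully connected graph the 1-hop neighbourhood $\mathcal{N}_i$ coincides with $\mathcal{V} \setminus \{i\}$, so the entire geometric graph is visible to $\textsc{I-Hash}$ already at iteration $t=1$ of IGWL. Since GWL is at least as powerful as IGWL by Theorem~\ref{theo:strict}, it suffices to establish the reverse inequality in the fully connected setting. In fact I claim more: a single iteration of IGWL already distinguishes every pair of geometrically non-isomorphic fully connected graphs, which is the most any such test --- GWL included --- could achieve.

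To see this, unfold Equation~\ref{eq:inv_gwl} at $t=1$ with $\mathcal{N}_i = \mathcal{V} \setminus \{i\}$: the input to $\textsc{I-Hash}$ is the pair of central attributes $(c_i^{(0)}, \vec{\vv}_i)$ together with the multiset $\ldblbrace (c_j^{(0)}, \vec{\vv}_j, \vec{\vx}_{ij}) \mid j \neq i \rdblbrace$. Because $c_j^{(0)} = \textsc{Hash}(\vs_j)$ injectively encodes the scalar features and the family $\{ \vec{\vx}_{ij}\}_{j\neq i}$ --- coupled with placing node $i$ at the origin via translation-invariance --- recovers all node coordinates, this input is in bijection with the full geometric graph rooted at $i$ modulo the combined action of $\fG$ and permutations of $\mathcal{V} \setminus \{i\}$. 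By $\fG$-orbit injectivity, $c_i^{(1)}$ therefore uniquely identifies this joint orbit.

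Now suppose fully connected $\gG_1, \gG_2$ yield identical colour multisets after one IGWL iteration. Then there is a bijection $b: \mathcal{V}(\gG_1) \to \mathcal{V}(\gG_2)$ with $c_i^{(1)}(\gG_1) = c_{b(i)}^{(1)}(\gG_2)$ for all $i$. Fixing any anchor $i_0$ and applying orbit injectivity produces a single $\fg \in \fG$ together with a permutation $\sigma$ of the non-central nodes that simultaneously align scalar features, vector features, and relative positions around $i_0$ and $b(i_0)$. Extending $\sigma$ by $i_0 \mapsto b(i_0)$ and absorbing the residual shift $\vec{\vx}_{b(i_0)}^{(\gG_2)} - \mQ_\fg \vec{\vx}_{i_0}^{(\gG_1)}$ into a translation $\vec{\vt}$ recovers a geometric isomorphism in the sense of Equation~\ref{eq:geom_graph_iso}. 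The contrapositive gives IGWL the claimed maximal discriminative power on fully connected graphs, and combined with Theorem~\ref{theo:strict} the proposition follows.

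The one place where the argument could slip is the step that extracts a single pair $(\fg, \sigma)$ from orbit injectivity rather than node-dependent choices. This is handled by viewing the tuple-plus-multiset input of $\textsc{I-Hash}$ at node $i_0$ as a single element of a space on which $\fG \times S_{n-1}$ acts, so its $\fG$-orbit injectivity (modulo the implicit multiset symmetry) produces $\fg$ and $\sigma$ jointly for that anchor; no delicate stitching across different anchors is needed because one anchor already pins down the whole geometric isomorphism when the graph is fully connected.
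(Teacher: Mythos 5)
Your proof is correct, but it takes a genuinely different route from the paper's. The paper argues by contradiction within the $k$-hop framework of Section~\ref{sec:designspace:depth}: it splits into the case of isomorphic and non-isomorphic underlying attributed graphs, and in both cases uses the observation that for a fully connected graph $\gN_i^{(1)} = \gN_i^{(k)}$ for every $k$, so the $\fG$-orbits (and hence colours) that GWL identifies at iteration $k$ coincide with those IGWL already identifies at iteration~$1$ — any separation GWL achieves is therefore available to IGWL, contradicting the assumed gap. You instead prove the stronger standalone claim that a single IGWL iteration is a \emph{complete} geometric isomorphism test on fully connected graphs: since $\gN_{i} = \gV \setminus \{i\}$, the input to $\textsc{I-Hash}$ at any one anchor already encodes the entire graph's orbit under $\fG$ and permutations of the remaining nodes, so orbit injectivity at a single matched colour extracts the pair $(\fg, \sigma)$ and the translation needed for Equation~\ref{eq:geom_graph_iso}. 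Your approach buys a sharper conclusion (completeness, not just equality with GWL) and avoids the case split, at the cost of re-deriving the orbit-to-isomorphism extraction that the paper's proof sidesteps by reusing Propositions~\ref{prop:gwl-can} and~\ref{prop:igwl-cannot}; the paper's version stays consistent with the $k$-hop vocabulary used throughout Section~\ref{sec:designspace}. Your closing remark about not needing to stitch across anchors is exactly the right point to flag, and your resolution — one anchor sees the whole graph — is sound.
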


%%%

\textbf{Practical Implications. }
% Each iteration of GWL expands the neighbourhood from which geometric information can be gathered.
Proposition \ref{prop:globalgeom} highlights critical theoretical limitations of $\fG$-invariant GNNs.
This suggests that $\fG$-equivariant GNNs should be preferred when working with large geometric graphs such as macromolecules with thousands of nodes, where message passing is restricted to local radial neighbourhoods around each node.
Stacking multiple $\fG$-equivariant layers enables the computation of compositional geometric features.

Two straightforward approaches to overcoming the limitations of $\fG$-invariant GNNs may be:
(1) pre-computing non-local geometric properties as input features, \textit{e.g.} models such as GemNet \citep{Gasteiger2021gemnet} and ComENet \citep{wang2022comenet} use two-hop dihedral angles.
And
(2) working with fully connected geometric graphs, as Proposition \ref{prop:limit-of-full} suggests that $\fG$-equivariant and $\fG$-invariant GNNs can be made equally powerful when performing all-to-all message passing.
This is supported by the empirical success of recent $\fG$-invariant \emph{Graph Transformers} \citep{joshi2020transformers, shi2022benchmarking} for small molecules with tens of nodes, where working with full graphs is tractable.

% Additionally, Proposition \ref{prop:igwl-can} suggests that using wider cutoff radii when constructing geometric graphs can be effective for improving $\fG$-invariant GNNs, as larger neighbourhoods may be easier to distinguish via scalars.

%%%

\subsection{Role of scalarisation body order: identifying neighbourhood $\fG$-orbits}
\label{sec:designspace:scalarisation}

At each iteration of GWL and IGWL, the $\textsc{I-Hash}$ function assigns a $\fG$-invariant colouring to distinct geometric neighbourhood patterns.
$\textsc{I-Hash}$ is an idealised $\fG$-orbit injective function which is not necessarily continuous.
In geometric GNNs, this corresponds to scalarising local geometric information when updating the scalar features; examples are shown in \eqref{eq:schnet} and \eqref{eq:dimenet}.
We can analyse the construction of the $\textsc{I-Hash}$ function and the scalarisation step in geometric GNNs via the $k$-body variations $\text{IGWL}_{(k)}$, described in Section \ref{sec:gwl}.
In doing so, we will make connections between IGWL and WL for non-geometric graphs.

Firstly, we formalise the relationship between the injectivity of $\textsc{I-Hash}_{(k)}$ and the maximum cardinality of local neighbourhoods in a given dataset.

\begin{restatable}{proposition}{bodyorderprop}
    \label{prop:bodyorder}
    $\textsc{I-Hash}_{(m)}$ is $\fG$-orbit injective for $m = \text{max}(\lbrace \vert \gN_i \vert \mid i \in \mathcal{V} \rbrace)$, the maximum cardinality of all local neighbourhoods $\gN_i$ in a given dataset.
\end{restatable}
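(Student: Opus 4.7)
The plan is to unpack the recursive definition of $\textsc{I-Hash}_{(m)}$ and show that the multiset it aggregates already encodes a complete $\fG$-invariant of the neighbourhood. Recall that $\textsc{I-Hash}_{(m)}$ applies the idealized base $\textsc{I-Hash}$ to every tuple $\vj \in (\gN_i)^{m-1}$ together with the central node data, and then hashes the resulting multiset with an injective outer \textsc{Hash}. Since the base $\textsc{I-Hash}$ is $\fG$-orbit injective on its inputs by construction, and the outer \textsc{Hash} is injective on multisets of arbitrary size, the whole task reduces to showing that the multiset of $\fG$-orbits of all $(m-1)$-sub-tuples uniquely determines the $\fG$-orbit of $\gN_i$.

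The core observation is that, because $m = \max_i |\gN_i|$, every neighbourhood fits inside a single $(m-1)$-tuple. Concretely, if $|\gN_i| \leq m-1$ one can pad by repeating elements to build a tuple $\vj^{\star}\in(\gN_i)^{m-1}$ whose underlying multiset coincides with $\gN_i$; equivalently, under the closed-neighbourhood convention that counts the central node inside $|\gN_i|$, the proper neighbours number exactly $m-1$ and are already enumerated in full by $\vj^{\star}$. Applied to $\vj^{\star}$ together with $(c_i,\vec{\vv}_i)$, the inner $\textsc{I-Hash}$ returns an orbit-injective label for the whole neighbourhood, so any collision of the outer $\textsc{I-Hash}_{(m)}$ forces the underlying neighbourhoods to lie in the same $\fG$-orbit. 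The converse direction — $\fG$-orbit equivalent neighbourhoods produce identical outer hashes — is immediate from $\fG$-equivariance of the tuple aggregation composed with $\fG$-invariance of the base $\textsc{I-Hash}$.

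The hardest step to get right is the tight case $|\gN_i|=m$ under a strict open-neighbourhood reading, where no single $(m-1)$-tuple actually covers every neighbour. I would handle it by gluing the $\fG$-orbits of the $m$ distinct $(m-1)$-subsets through their pairwise overlap of $m-2$ labelled atoms: the node colours $c_j$ carried inside each tuple identify which neighbour is missing, and fixing one subset's alignment by a choice of $\fg\in\fG$ propagates, via the overlapping atoms, to a consistent alignment of all other subsets and hence of the full $m$-point configuration. The delicate point is ruling out a nontrivial residual stabiliser of the overlap in $\fG$, which is what makes $k=m$ the sharp threshold in the proposition; in the idealized setting this is dispatched because the inner $\textsc{I-Hash}$ already returns complete $\fG$-invariants on each sub-tuple, so no chirality or rotational ambiguity survives across the gluing.
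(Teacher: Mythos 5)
Your main argument coincides with the paper's (very short) proof: since $m$ is the maximal neighbourhood cardinality, a single $m$-body tuple already spans the entire local point cloud, the inner $\textsc{I-Hash}$ returns a complete $\fG$-orbit invariant of it, and orbit-injectivity of the outer hash follows; the converse direction is, as you note, immediate from invariance. The paper implicitly adopts exactly the reading you call the ``closed-neighbourhood convention,'' under which your first two paragraphs are a faithful and somewhat more explicit expansion of its two-line argument.

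Your third paragraph, however, contains a genuine gap. In the tight case where no single $(m-1)$-tuple of neighbours covers the whole neighbourhood, all that $\textsc{I-Hash}_{(m)}$ exposes is the \emph{multiset} of $\fG$-orbit labels of the sub-tuples; your gluing step assumes you can tell from each label which neighbour is missing, and you justify this by appeal to the node colours $c_j$. That identification fails whenever several neighbours carry the same colour (the generic situation for, say, a shell of identical atoms), and without it, reconstructing the orbit of an $(m{+}1)$-point configuration from unlabelled orbits of its $m$-point sub-configurations is precisely the completeness question for which \citet{pozdnyakov2020incompleteness} exhibit counterexamples at lower body orders --- it cannot be dispatched by asserting that no ambiguity survives the gluing. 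Fortunately the proposition does not need this case: under the paper's counting, the $m$-body scalar already involves every node of the neighbourhood, so you should either commit to that convention and delete the third paragraph, or supply an actual reconstruction lemma for the open-neighbourhood reading.
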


\textbf{Practical Implications. }
While building provably injective $\textsc{I-HASH}_{(k)}$ functions may require intractably high $k$, the hierarchy of $\text{IGWL}_{(k)}$ tests enable us to study the expressive power of practical $\fG$-invariant aggregators used in current geometric GNN layers,
\textit{e.g.} SchNet \citep{schutt2018schnet}, E-GNN \citep{satorras2021n}, and TFN \citep{thomas2018tensor} use distances, while DimeNet \citep{Gasteiger2020directional} uses distances and angles. 
Notably, MACE \citep{batatia2022mace} constructs a \emph{complete} basis of scalars up to arbitrary body order $k$ via Atomic Cluster Expansion \citep{dusson2019atomic}, which can be $\fG$-orbit injective if the conditions in Proposition \ref{prop:bodyorder} are met.
We can state the following about the $\text{IGWL}_{(k)}$ hierarchy and the corresponding GNNs.

\begin{restatable}{proposition}{igwlhierarchyprop}
    \label{prop:igwl-hierarchy}
    $\text{IGWL}_{(k)}$ is at least as powerful as $\text{IGWL}_{(k-1)}$. For $k \leq 5$, $\text{IGWL}_{(k)}$ is strictly more powerful than $\text{IGWL}_{(k-1)}$.
\end{restatable}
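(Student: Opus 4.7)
The proposition decomposes into (i) a monotonicity claim that $\text{IGWL}_{(k)}$ dominates $\text{IGWL}_{(k-1)}$ in expressive power, and (ii) strict separations at each level $k \in \{2,3,4,5\}$. I would address these two halves independently, with the strict separations delegated to existing finite counterexamples from the atomic cluster expansion literature.

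For monotonicity, the plan is to show that the multiset of $\fG$-orbits of $k$-tuples in a neighbourhood $\gN_i$ determines the multiset of $\fG$-orbits of $(k-1)$-tuples in $\gN_i$. The key observation is that since $\vj$ ranges over all of $(\gN_i)^k$ with repetitions, every $(k-1)$-tuple orbit appears exactly $|\gN_i|$ times as the projection onto the first $k-1$ coordinates of some $k$-tuple; this projection is $\fG$-equivariant and hence descends to orbits. Composing with the injective outer $\textsc{Hash}$ (which is injective over countable inputs by the standing assumption) shows that the colour partition induced by $\textsc{I-Hash}_{(k)}$ refines the one induced by $\textsc{I-Hash}_{(k-1)}$ at every iteration. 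A standard induction on the iteration count then lifts this to distinguishing power on full graphs, proving the monotonicity claim.

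For the strict separations, I would exhibit at each level a pair of $1$-hop geometric graphs (a central node with its immediate neighbours) whose $k$-body orbit multisets coincide but whose $(k+1)$-body orbit multisets differ. The case $k=2$ reduces to any two one-neighbour configurations at different distances; the case $k=3$ is the classical fact that pairwise distances cannot resolve all angular information. For $k=4$ and $k=5$, I would invoke the explicit counterexamples of \citet{pozdnyakov2020incompleteness}, who construct finite point clouds in $\sR^3$ that are degenerate at body order $k{+}1$ but resolved at the next order. Embedding each such pair as the neighbourhood of an isolated central node produces two geometric graphs that $\text{IGWL}_{(k+1)}$ distinguishes while $\text{IGWL}_{(k)}$ cannot, even with additional iterations, since the leaves carry no further neighbourhoods that could enrich the colouring.

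The main obstacle is the bridge between the published counterexamples and the hierarchy as I have defined it: the Pozdnyakov--Musil--Ceriotti constructions are phrased as degeneracy of a generating family of $k$-body \emph{polynomial} $\fG$-invariants, whereas $\textsc{I-Hash}_{(k)}$ demands equality of the $k$-tuple orbits themselves as multisets. Closing this gap requires the First Fundamental Theorem of invariant theory for $O(d)$, which guarantees that scalar invariants separate orbits of finite tuples of vectors, together with the observation that two multisets of orbits coincide iff all symmetric functionals of their orbit invariants coincide. I would isolate this as an auxiliary lemma before appealing to the explicit low-order counterexamples, so that the only non-elementary ingredient is quarantined and the overall logical flow reduces to a transparent combination of orbit bookkeeping and citation of the relevant incompleteness results.
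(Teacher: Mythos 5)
Your proposal is correct and follows essentially the same route as the paper: monotonicity via the observation that the $k$-body colouring determines the $(k-1)$-body colouring, and strictness via the explicit counterexamples of \citet{schutt2021equivariantmp} and \citet{pozdnyakov2020incompleteness}. You supply more detail than the paper on both halves — the orbit-projection/multiplicity argument for containment, and the invariant-theory bridge to the published degeneracies — though note the paper additionally separates the $k=5$ case by group ($O(3)$ via Fig.~2(e) versus $SO(3)$ via Fig.~2(f)), a chirality distinction your sketch elides.
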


Finally, we show that $\text{IGWL}_{(2)}$ is equivalent to WL when all the pairwise distances are the same. A similar observation was recently made by \citet{pozdnyakov2022incompleteness}.

\begin{restatable}{proposition}{onewligwlprop}\label{prop:one-wl-igwl}
    Let $\gG_1 = (\mA_1, \mS_1, \vec{\mX}_1)$ and $\gG_2 = (\mA_2, \mS_2, \vec{\mX}_2)$ be two geometric graphs with the property that all edges have equal length.
    Then, $\text{IGWL}_{(2)}$ distinguishes the two graphs if and only if WL can distinguish the attributed graphs $(\mA_1, \mS_1)$ and $(\mA_1, \mS_1)$.
\end{restatable}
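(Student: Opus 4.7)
The plan is to show that under the equal-edge-length hypothesis the two colour-refinement procedures produce identical node partitions at every iteration, from which the biconditional is immediate.

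The key observation is that the $2$-body restriction degenerates when every edge shares a common length $d$. By definition, $\textsc{I-Hash}_{(2)}$ accesses only $\fG$-invariant scalars built from the central node together with a \emph{single} neighbour at a time, and the only such invariant of a relative position vector $\vec{\vx}_{ij}$ is its length $\|\vec{\vx}_{ij}\|$. Since the graphs carry no vector features $\vec{\mV}$ and every edge satisfies $\|\vec{\vx}_{ij}\| = d$, the $\fG$-orbit of the input seen at node $i$ is determined by $\bigl(c_i^{(t-1)}, \ldblbrace (c_j^{(t-1)}, d) \mid j \in \gN_i \rdblbrace\bigr)$. Composing the inner $\fG$-orbit injective hash with the outer injective $\textsc{Hash}$ from the definition of $\textsc{I-Hash}_{(2)}$, and absorbing the global constant $d$, the whole update reduces to an injective function of the WL input $\bigl(c_i^{(t-1)}, \ldblbrace c_j^{(t-1)} \mid j \in \gN_i \rdblbrace\bigr)$.

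With this reduction in hand, I would argue by induction on $t$ that the partitions of $\mathcal{V}(\gG_1) \sqcup \mathcal{V}(\gG_2)$ induced by $\text{IGWL}_{(2)}$ and by WL coincide. The base case $t=0$ is immediate since both tests initialise via an injective hash of $\vs_i$. For the inductive step, identify the two colour spaces using the agreement at iteration $t$; the reduction above shows that the $\text{IGWL}_{(2)}$ and WL updates are both injective functions of exactly the same argument, so the refined partitions also coincide at $t+1$. Since either test declares $\gG_1$ and $\gG_2$ distinguishable iff the multisets of colours differ at some iteration, the biconditional follows.

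The main subtlety, which I would spell out carefully, is that the $\fG$-orbit injectivity of $\textsc{I-Hash}_{(2)}$ does \emph{not} provide any joint geometric information about two or more distinct neighbours (such as an angle between two incident edges), since the $2$-body restriction explicitly disallows any joint function of more than one neighbour. This is precisely the reason the argument collapses distances to a single scalar when they are all equal, and also precisely why the analogous claim would \emph{fail} for $\text{IGWL}_{(k)}$ with $k \geq 3$, where angles and higher-body invariants could separate configurations indistinguishable to WL.
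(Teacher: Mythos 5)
Your proposal is correct and follows essentially the same route as the paper's proof: an induction on iterations showing the colour partitions of $\text{IGWL}_{(2)}$ and WL coincide, using the fact that the only $\fG$-invariant a $2$-body hash can extract from a single neighbour is the edge length, which is constant by hypothesis and hence carries no information beyond the WL input. Your added remark on why the argument fails for $k \geq 3$ is a nice clarification but not needed for the claim itself.
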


This equivalence points to limitations of distance-based $\fG$-invariant models like SchNet \citep{schutt2018schnet}. 
These models suffer from all well-known failure cases of WL, \textit{e.g.} they cannot distinguish two equilateral triangles from the regular hexagon \citep{Gasteiger2020directional}.

%%%

\section{Synthetic Experiments on Expressivity}
\label{sec:expt}

%%%

\begin{table*}[t!]
    \hfill
    \begin{subtable}[h]{0.28\textwidth}
        \centering
        \includegraphics[width=\linewidth]{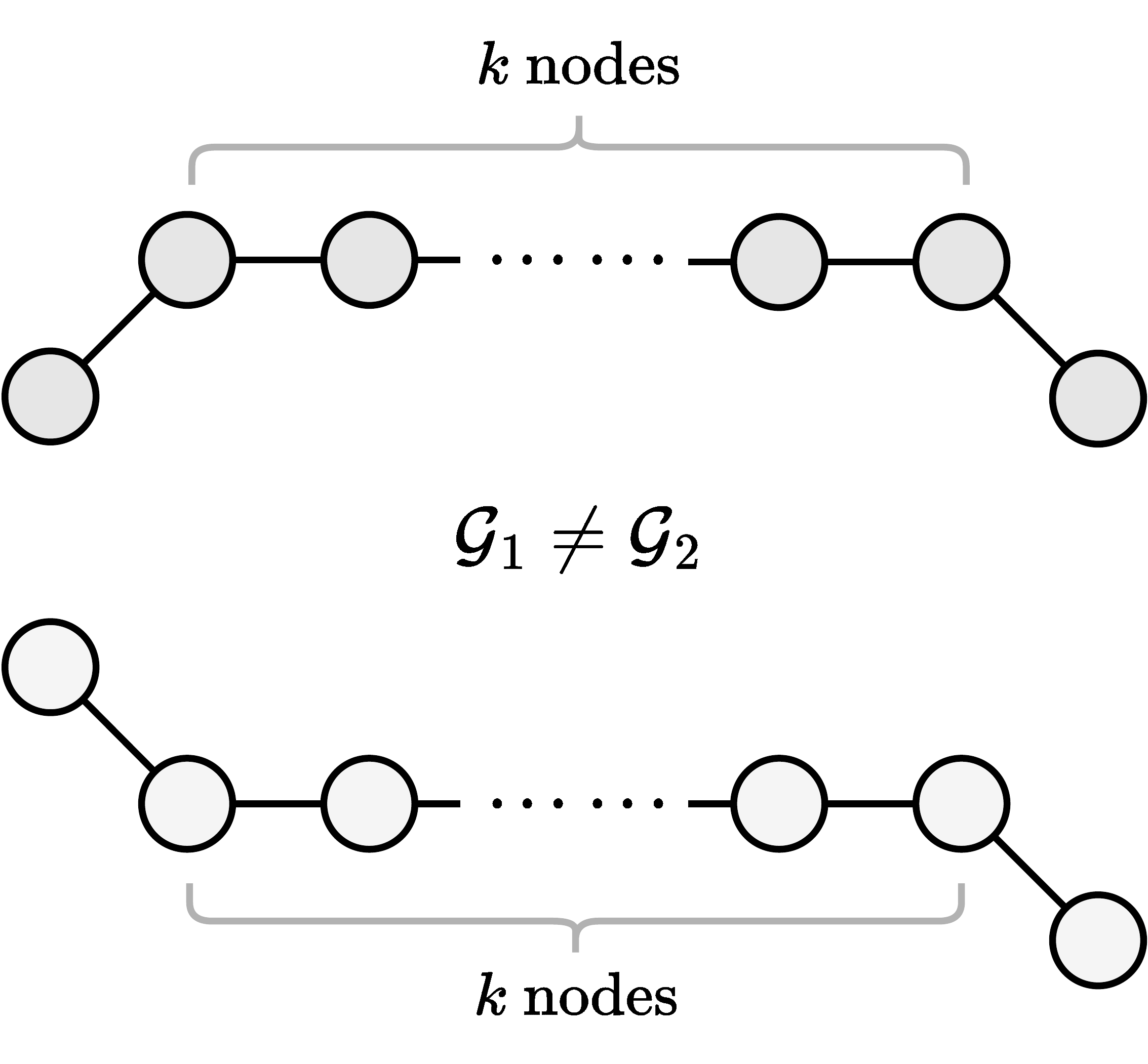}
    \end{subtable}
    \hfill
    \begin{subtable}[h]{0.7\textwidth}
        \centering
        \resizebox{\linewidth}{!}{
        \begin{tabular}{clccccc}
            \toprule
            & ($k=\mathbf{4}$-chains) & \multicolumn{5}{c}{\textbf{Number of layers}} \\
            & \textbf{GNN Layer} & $\lfloor \frac{k}{2} \rfloor$ & \cellcolor{gray!10} $\lfloor \frac{k}{2} \rfloor + 1 = \mathbf{3}$ & $\lfloor \frac{k}{2} \rfloor + 2$ & $\lfloor \frac{k}{2} \rfloor + 3$ & $\lfloor \frac{k}{2} \rfloor + 4$ \\
            \midrule
            \multirow{5}{*}{\rotatebox[origin=c]{90}{Equiv.}} &
            \gray{GWL} & \gray{50\%} & \gray{\textbf{100\%}} & \gray{\textbf{100\%}} & \gray{\textbf{100\%}} & \gray{\textbf{100\%}} \\
            & E-GNN & 50.0 {\scriptsize ± 0.0} & \cellcolor{red!10} 50.0 {\scriptsize ± 0.0} & \cellcolor{red!10} 50.0 {\scriptsize ± 0.0} & \cellcolor{red!10} 50.0 {\scriptsize ± 0.0} & \cellcolor{green!10} \textbf{100.0 {\scriptsize ± 0.0}} \\
            & GVP-GNN & 50.0 {\scriptsize ± 0.0} & \cellcolor{green!10} \textbf{100.0 {\scriptsize ± 0.0}} & \cellcolor{green!10} \textbf{100.0 {\scriptsize ± 0.0}} & \cellcolor{green!10} \textbf{100.0 {\scriptsize ± 0.0}} & \cellcolor{green!10} \textbf{100.0 {\scriptsize ± 0.0}} \\
            & TFN & 50.0 {\scriptsize ± 0.0} & \cellcolor{red!10} 50.0 {\scriptsize ± 0.0} & \cellcolor{red!10} 50.0 {\scriptsize ± 0.0} & \cellcolor{green!10} \textbf{80.0 {\scriptsize ± 24.5}} & \cellcolor{green!10} \textbf{85.0 {\scriptsize ± 22.9}} \\
            & MACE & 50.0 {\scriptsize ± 0.0} & \cellcolor{green!10} \textbf{90.0 {\scriptsize ± 20.0}} & \cellcolor{green!10} \textbf{90.0 {\scriptsize ± 20.0}} & \cellcolor{green!10} \textbf{95.0 {\scriptsize ± 15.0}} & \cellcolor{green!10} \textbf{95.0 {\scriptsize ± 15.0}} \\
            \midrule
            \multirow{4}{*}{\rotatebox[origin=c]{90}{Inv.}} &
            \gray{IGWL} & \gray{50\%} & \gray{50\%} & \gray{50\%} & \gray{50\%} & \gray{50\%} \\
            & SchNet & 50.0 {\scriptsize {\scriptsize ± 0.0}} & 50.0 {\scriptsize {\scriptsize ± 0.0}} & 50.0 {\scriptsize {\scriptsize ± 0.0}} & 50.0 {\scriptsize {\scriptsize ± 0.0}} & 50.0 {\scriptsize {\scriptsize ± 0.0}} \\
            & DimeNet & 50.0 {\scriptsize {\scriptsize ± 0.0}} & 50.0 {\scriptsize {\scriptsize ± 0.0}} & 50.0 {\scriptsize {\scriptsize ± 0.0}} & 50.0 {\scriptsize {\scriptsize ± 0.0}} & 50.0 {\scriptsize {\scriptsize ± 0.0}} \\
            & SphereNet & 50.0 {\scriptsize {\scriptsize ± 0.0}} & 50.0 {\scriptsize {\scriptsize ± 0.0}} & 50.0 {\scriptsize {\scriptsize ± 0.0}} & 50.0 {\scriptsize {\scriptsize ± 0.0}} & 50.0 {\scriptsize {\scriptsize ± 0.0}} \\
            \cmidrule{2-7}
            & SchNet$_{\text{full graph}}$ & \cellcolor{green!10} \textbf{100.0 {\scriptsize ± 0.0}} & \cellcolor{green!10} \textbf{100.0 {\scriptsize ± 0.0}} & \cellcolor{green!10} \textbf{100.0 {\scriptsize ± 0.0}} & \cellcolor{green!10} \textbf{100.0 {\scriptsize ± 0.0}} & \cellcolor{green!10} \textbf{100.0 {\scriptsize ± 0.0}} \\
            & SchNet$_{\text{global feat}}$ & \cellcolor{green!10} \textbf{100.0 {\scriptsize ± 0.0}} & \cellcolor{green!10} \textbf{100.0 {\scriptsize ± 0.0}} & \cellcolor{green!10} \textbf{100.0 {\scriptsize ± 0.0}} & \cellcolor{green!10} \textbf{100.0 {\scriptsize ± 0.0}} & \cellcolor{green!10} \textbf{100.0 {\scriptsize ± 0.0}} \\
            \bottomrule
        \end{tabular}
        }
    \end{subtable}
    \hfill
    \caption{\textit{$k$-chain geometric graphs.} 
    $k$-chains are $(\lfloor \frac{k}{2} \rfloor + 1)$-hop distinguishable and $(\lfloor \frac{k}{2} \rfloor + 1)$ GWL iterations are theoretically sufficient to distinguish them. 
    We train geometric GNNs with an increasing number of layers to distinguish $k=4$-chains.
    \textbf{$\fG$-equivariant GNNs may require more iterations that prescribed by GWL, pointing to preliminary evidence of oversquashing when geometric information is propagated across multiple layers using fixed dimensional feature spaces.
    }
    IGWL and $\fG$-invariant GNNs are unable to distinguish $k$-chains for any $k \geq 2$ and $\fG = O(3)$.
    $\fG$-invariant GNNs with precomputed non-local features (volume of bounding box) or message passing on fully connected graphs can trivially solve the task.
    Anomalous results are marked in \colorbox{red!10}{red} and expected results in \colorbox{green!10}{green}.
    }
    \label{tab:kchains}
\end{table*}

%%%

\textbf{Overview. }
We provide synthetic experiments to supplement our theoretical results and highlight practical challenges in building maximally powerful geometric GNNs, \textit{s.a.} oversquashing of geometric information with increased depth (Table \ref{tab:kchains}) and the need for higher order tensors in $\fG$-equivariant GNNs (Table \ref{tab:rotsym}).
We hope that the accompanying codebase, \texttt{geometric-gnn-dojo}\footnote{Code available on GitHub: \url{https://github.com/chaitjo/geometric-gnn-dojo}}, can be a pedagogical resource for advancing geometric GNN architectures in a principled manner. 
See Appendix \ref{app:setup} for details on experimental setup and hyperparameters.

%%%

\subsection{Experiment in Table \ref{tab:kchains}: Depth, non-local geometric properties, and oversquashing}
\label{app:setup:kchains}

GWL assumes perfect propagation of $\fG$-equivariant geometric information at each iteration, which implies that the test can be run for any number of iterations without loss of information.
In geometric GNNs, $\fG$-equivariant information is propagated via summing features from multiple layers in fixed dimensional spaces, which may lead to distortion or loss of information from distant nodes.

\textbf{Experiment. }
To study the practical implications of depth in propagating geometric information beyond local neighbourhoods, 
we consider $k$-chain geometric graphs which generalise the examples from \citet{schutt2021equivariantmp}. 
Each pair of $k$-chains consists of $k+2$ nodes with $k$ nodes arranged in a line and differentiated by the orientation of the $2$ end points.
Thus, $k$-chain graphs are $(\lfloor \frac{k}{2} \rfloor + 1)$-hop distinguishable, and $(\lfloor \frac{k}{2} \rfloor + 1)$ GWL iterations  are theoretically sufficient to distinguish them.
In Table~\ref{tab:kchains}, we train $\fG$-equivariant and $\fG$-invariant GNNs with an increasing number of layers to distinguish $k$-chains.

\textbf{Results. }
Despite the supposed simplicity of the task, we find that popular $\fG$-equivariant GNNs such as E-GNN \citep{satorras2021n} and TFN \citep{thomas2018tensor} may require more iterations that prescribed by GWL.
Notably, for chains larger than $k=4$, all $\fG$-equivariant GNNs tended to require more than $(\lfloor \frac{k}{2} \rfloor + 1)$ iterations to solve the task.
Additionally, IGWL and $\fG$-invariant GNNs are unable to distinguish $k$-chains.

Table \ref{tab:kchains} points to preliminary evidence of the \emph{oversquashing} phenomenon \citep{alon2021on, topping2022understanding} for equivariant features in geometric GNNs.
The issue is most evident for E-GNN, which uses a single vector feature to aggregate and propagate geometric information.
This may have implications in modelling macromolecules where long-range interactions often play important roles.

% Oversmoothing refers to the convergence of features to the same representation after multiple rounds of message passing, and Oversquashing refers to the distortion of information from distant nodes when very large rooted subtrees are embedded into fixed dimensional vectors.

%%%

%%%

\begin{table*}[t!]
    \hfill
    \begin{subtable}[h]{0.25\textwidth}
        \centering
        \includegraphics[width=\linewidth]{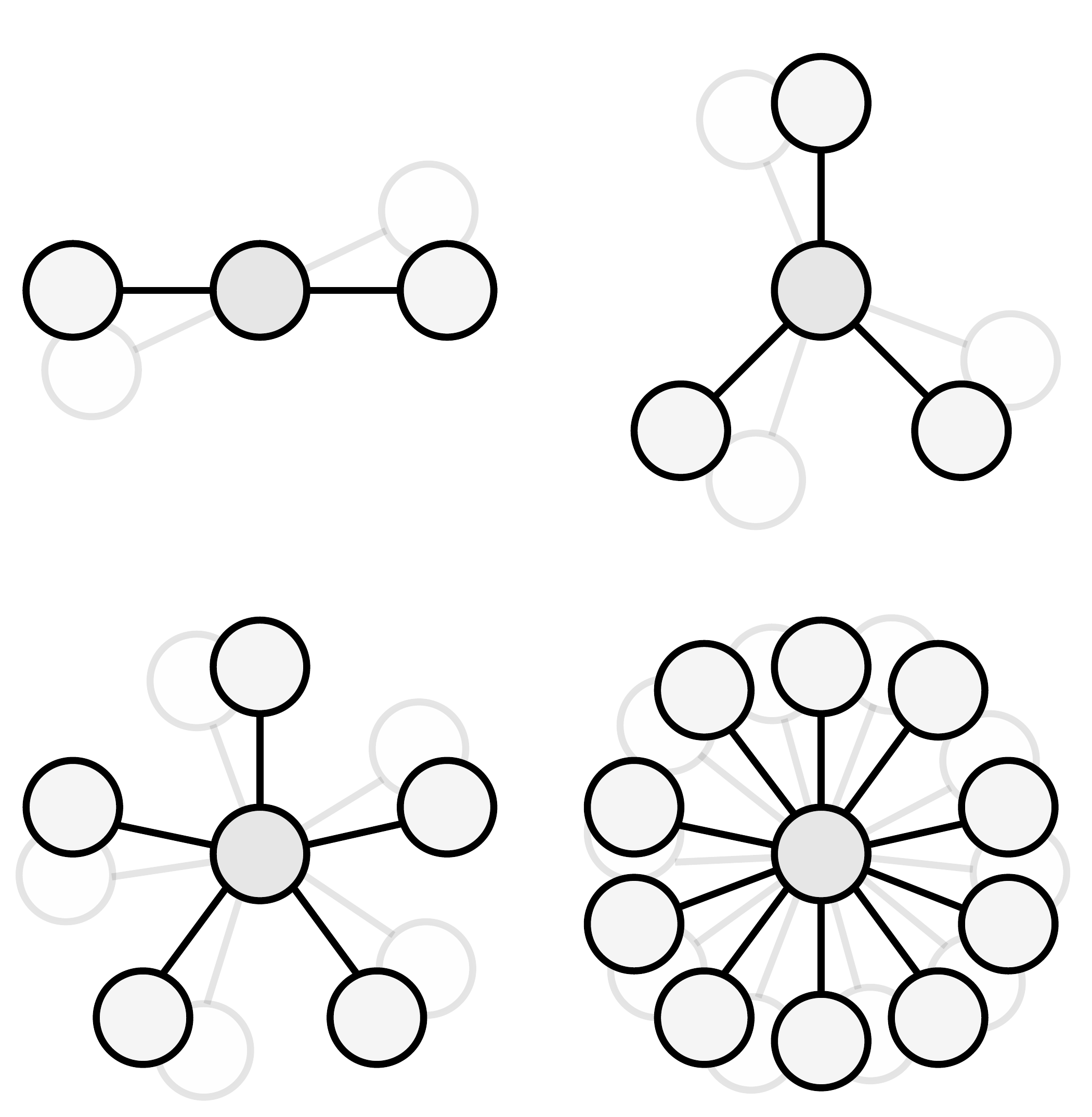}
    \end{subtable}
    \hfill
    \begin{subtable}[h]{0.65\textwidth}
        \centering
        \resizebox{\linewidth}{!}{
        \begin{tabular}{clcccc}
            \toprule
            & & \multicolumn{4}{c}{\textbf{Rotational symmetry}} \\
            & \textbf{GNN Layer} & 2 fold & 3 fold & 5 fold & 10 fold \\
            \midrule
            \multirow{2}{*}{\rotatebox[origin=c]{90}{Cart.}}
            & E-GNN$_{L=1}$ & \cellcolor{red!10} 50.0 {\scriptsize ± 0.0} & 50.0 {\scriptsize ± 0.0} & 50.0 {\scriptsize ± 0.0} & 50.0 {\scriptsize ± 0.0} \\
            & GVP-GNN$_{L=1}$ & \cellcolor{red!10} 50.0 {\scriptsize ± 0.0} & 50.0 {\scriptsize ± 0.0} & 50.0 {\scriptsize ± 0.0} & 50.0 {\scriptsize ± 0.0} \\
            \midrule
            \multirow{5}{*}{\rotatebox[origin=c]{90}{Spherical}}
            & TFN/MACE$_{L=1}$ & \cellcolor{red!10} 50.0 {\scriptsize ± 0.0} & 50.0 {\scriptsize ± 0.0} & 50.0 {\scriptsize ± 0.0} & 50.0 {\scriptsize ± 0.0} \\
            & TFN/MACE$_{L=2}$ & \cellcolor{green!10} \textbf{100.0 {\scriptsize ± 0.0}} & 50.0 {\scriptsize ± 0.0} & 50.0 {\scriptsize ± 0.0} & 50.0 {\scriptsize ± 0.0} \\
            & TFN/MACE$_{L=3}$ & \cellcolor{green!10} \textbf{100.0 {\scriptsize ± 0.0}} & \cellcolor{green!10} \textbf{100.0 {\scriptsize ± 0.0}} & 50.0 {\scriptsize ± 0.0} & 50.0 {\scriptsize ± 0.0} \\
            & TFN/MACE$_{L=5}$ & \cellcolor{green!10} \textbf{100.0 {\scriptsize ± 0.0}} & \cellcolor{green!10} \textbf{100.0 {\scriptsize ± 0.0}} & \cellcolor{green!10} \textbf{100.0 {\scriptsize ± 0.0}} & 50.0 {\scriptsize ± 0.0} \\
            & TFN/MACE$_{L=10}$ & \cellcolor{green!10} \textbf{100.0 {\scriptsize ± 0.0}} & \cellcolor{green!10} \textbf{100.0 {\scriptsize ± 0.0}} & \cellcolor{green!10} \textbf{100.0 {\scriptsize ± 0.0}} & \cellcolor{green!10} \textbf{100.0 {\scriptsize ± 0.0}} \\
            \bottomrule
        \end{tabular}
        }
    \end{subtable}
    \hfill
    \caption{\textit{Rotationally symmetric structures.}
    We train single layer $\fG$-equivariant GNNs to distinguish two \emph{distinct} rotated versions of each $L$-fold symmetric structure.
    \textbf{
    We find that layers using order $L$ tensors are unable to identify the orientation of structures with rotation symmetry higher than $L$-fold.
    }
    This issue is particularly prevalent for layers using cartesian vectors (tensor order 1).
    % Anomalous results are marked in \colorbox{red!10}{red} and expected results in \colorbox{green!10}{green}.
    }
    \label{tab:rotsym}
\end{table*}

%%%

\subsection{Experiment in Table \ref{tab:rotsym}: Higher order tensors and rotationally symmetric structures}
\label{app:setup:rotsym}

In addition to perfect propagation, GWL is also able to injectively aggregate $\fG$-equivariant information by making use of an auxiliary nested geometric object $\vg_i$.
On the other hand, $\fG$-equivariant GNNs aggregate geometric information via summing neighbourhood features represented by Cartesian vectors (tensor order 1) or higher order spherical tensors.
This choice of basis often comes with trade-offs between computational tractability and empirical performance.

\textbf{Experiment. }
To demonstrate the utility of higher order tensors in $\fG$-equivariant GNNs, we study how rotational symmetries interact with tensor order. 
In Table \ref{tab:rotsym}, we evaluate current $\fG$-equivariant layers on their ability to distinguish the orientation of structures with rotational symmetry. 
An $L$-fold symmetric structure does not change when rotated by an angle $\frac{2\pi}{L}$ around a point (in 2D) or axis (3D).
We consider two \emph{distinct} rotated versions of each symmetric structure and train single layer $\fG$-equivariant GNNs to classify the two orientations using the updated equivariant features.

\textbf{Results. }
We find that layers using order $L$ spherical tensors are unable to identify the orientation of structures with rotation symmetry higher than $L$-fold, \textit{i.e.} two distinct rotated versions of the input having the same equivariant features.
% This observation may be attributed to spherical harmonics, which are used as the underlying orthonormal basis for approximating spherical functions and are rotationally symmetric themselves.
We attribute this observation to spherical harmonics, which serve as an orthonormal basis for spherical tensor features and exhibit rotational symmetry themselves.

Similar to the Fourier expansion for signals, the spherical harmonic expansion is employed for converting Cartesian vectors to spherical signals in $\fG$-equivariant GNNs. 
The tensor order of the spherical harmonic bases determines the rate of oscillation of the approximated function on the sphere. 
We believe that this oscillation rate is closely linked to the rotational fold of a set of symmetric vectors.

In the Fourier expansion, it is not feasible to accurately approximate a high-frequency function solely using low-frequency sinusoidal waves. Similarly, when truncating the spherical harmonic expansion to an order lower than the fold of the rotational symmetry, the rotationally symmetric vectors act as a higher frequency function. Consequently, the lower frequency bases cannot preserve the orientation of these vectors.

%%%

% The spherical harmonic expansion is the analogue of the Fourier expansion for spherical signals. 
% The tensor order of the spherical harmonic bases controls how fast the function being approximated oscillates on the sphere. 
% We believe the rate of oscillation is closely related to the rotational fold of a set of symmetric vectors. 
% It is impossible to accurately approximate a very high frequency function using only low frequency sinusoidal waves in the Fourier expansion. 
% When truncating the spherical harmonic expansion to some order that is less than the fold of the rotational symmetry, the rotationally symmetric vectors are akin to a higher frequency function whose orientation cannot be preserved by the lower frequency bases.

%%%

Layers such as E-GNN \citep{satorras2021n} and GVP-GNN \citep{jing2020learning} using Cartesian vectors are popular as higher order tensors can be computationally intractable for many applications. 
However, E-GNN and GVP-GNN are particularly poor at discriminating the orientation of rotationally symmetric structures. 
This may have implications for the modelling of periodic materials which naturally exhibit such symmetries \citep{levine1984quasicrystals}.

\section{Discrimination and Universality}
\label{sec:universality}

\textbf{Overview. } 
Following \citet{chen2019equivalence}, we study the equivalence between the universal approximation capabilities of geometric GNNs~\citep{dym2020universality, villar2021scalars} and the perspective of discriminating geometric graphs introduced by GWL, generalising their results to any isomorphism induced by a compact Lie group $\fG$. 
We further study the number of invariant aggregators that are required in a continuous setting to distinguish any two neighbourhoods. 
Proofs are available in Appendix \ref{app:universality}.

In the interest of generality, we use a general space $X$ acted upon by a compact group $\fG$ and we are interested in the capacity of $\fG$-invariant functions over $X$ to separate points in $Y$. 
The restriction to a smaller subset $Y$ is useful because we would like to separately consider the case when $Y$ is countable due to the use of countable features. Therefore, in general, the action of $\fG$ on $Y$ might not be strictly defined since it might yield elements outside $Y$.
For our setting, the reader could take $X = (\sR^d \times \sR^f)^{n \times n}$ to be the space of geometric graphs and $Y = \gX^{n \times n}$, where $\gX \subseteq \sR^d \times \sR^f$.

\begin{definition}
    \label{def:discriminating}
    Let $\fG$ be a compact group and $\gC$ a collection of $\fG$-invariant functions from a set $X$ to $\sR$. For a subset $Y \subseteq X$, we say the $\gC$ is \textbf{pairwise} $\mathbf{Y_\fG}$\textbf{ discriminating} if for any $y_1, y_2 \in Y$ such that $y_1 \not\simeq y_2$, there exists a function $h \in \gC$ such that $h(y_1) \neq h(y_2)$.
\end{definition}

We note here that $h$ is not necessarily injective, \textit{i.e.} there might be $y_1', y_2'$ for which $h(y_1') = h(y_1')$. Therefore, pairwise discrimination is a weaker notion of discrimination than the one GWL uses.

\begin{definition}
    \label{def:universally_approximating}
    Let $\fG$ be a compact group and $\gC$ a collection of $\fG$-invariant functions from $X$ to $\sR$. For $Y \subseteq X$, we say the $\gC$ is \textbf{universally approximating} over $Y$ if for all  $\fG$-invariant functions $f$ from $X$ to $\sR$ and for all $\eps > 0$, there exists $h_{\varepsilon, f} \in \gC$ such that $\norm{f - h_{\varepsilon, f}}_Y := \sup_{y \in Y} | f(y) - h(y)| < \varepsilon$.
\end{definition}

\textbf{Countable Features. } We first focus on the countable feature setting that the GWL test operates in. Therefore, we will assume that $Y$ is a countable subset of $X$.

\begin{restatable}{theorem}{FiniteUnivImpliesDiscrim}\label{theo:finite_univ_implies_discrim}
    If $\gC$ is universally approximating over $Y$, then $\gC$ is also pairwise $Y_\fG$ discriminating.
\end{restatable}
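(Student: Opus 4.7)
The plan is the standard separation-by-approximation argument. Fix $y_1, y_2 \in Y$ with $y_1 \not\simeq y_2$; the goal is to produce some $h \in \gC$ with $h(y_1) \neq h(y_2)$. First I would exhibit a target $\fG$-invariant function on $X$ that separates $y_1$ and $y_2$, and then pull it back into $\gC$ using the universal approximation hypothesis with a sufficiently small tolerance.

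For the first step, since $y_1 \not\simeq y_2$ the orbits $\gO_\fG(y_1)$ and $\gO_\fG(y_2)$ are disjoint subsets of $X$. Hence the indicator function
\begin{equation*}
f(x) \defeq \begin{cases} 1 & \text{if } x \in \gO_\fG(y_1), \\ 0 & \text{otherwise,}\end{cases}
\end{equation*}
is well defined on $X$, is $\fG$-invariant (orbits are $\fG$-stable), and satisfies $f(y_1) = 1$, $f(y_2) = 0$. Note that Definition \ref{def:universally_approximating} quantifies over \emph{all} $\fG$-invariant functions $X \to \sR$, so no continuity or measurability restriction is needed; this is the cleanest way to manufacture a separator without invoking any topological machinery on $X$.

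For the second step, set $\eps \defeq \tfrac{1}{3}|f(y_1) - f(y_2)| = \tfrac{1}{3} > 0$. By the universal approximation assumption applied to this $f$ and this $\eps$, there exists $h = h_{\eps, f} \in \gC$ with $\sup_{y \in Y} |f(y) - h(y)| < \eps$. Since $y_1, y_2 \in Y$, the reverse triangle inequality gives
\begin{equation*}
|h(y_1) - h(y_2)| \ \geq\ |f(y_1) - f(y_2)| - |f(y_1) - h(y_1)| - |f(y_2) - h(y_2)| \ >\ 3\eps - \eps - \eps \ =\ \eps \ >\ 0,
\end{equation*}
so $h(y_1) \neq h(y_2)$. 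Since $h$ is by assumption $\fG$-invariant, this verifies Definition \ref{def:discriminating}.

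I do not anticipate a serious obstacle: the only subtlety is making sure the target $f$ exists, and because the approximation hypothesis ranges over arbitrary $\fG$-invariant functions (not continuous ones), the orbit indicator suffices. The compactness of $\fG$ is not used at this level of generality; it would only come into play if one strengthened the definition to continuous $\fG$-invariant functions, in which case one would replace the indicator by a Haar-averaged Urysohn-type bump $f(x) = \int_\fG \phi(\fg \cdot x)\,d\fg$ for a continuous $\phi$ separating $y_1$ from the (compact) orbit $\gO_\fG(y_2)$. The rest of the argument is unchanged.
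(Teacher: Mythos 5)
Your proof is correct and follows essentially the same argument as the paper: both construct a $\fG$-invariant orbit-indicator function separating the two points and then approximate it within a tolerance small enough (the paper uses $\eps < 0.5$ on the complementary indicator, you use $\eps = 1/3$ with an explicit triangle inequality) to preserve the separation. The extra remarks on why no continuity is needed and how compactness of $\fG$ would enter in the continuous setting are accurate but not required here.
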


This result further motivates the interest in discriminating geometric graphs, since a model that cannot distinguish two non-isomorphic geometric graphs is not universal. By further assuming that $Y$ is finite, we obtain a result in the opposite direction. Given a collection of functions $\gC$, we define like in \citet{chen2019equivalence} the class $\gC^{+L}$ given by all the functions of the form $\textrm{MLP}([f_1(x),\ldots,f_k(x)])$ with $f_i \in \gC$ and finite $k$, where the MLP has $L$ layers with ReLU hidden activations.

\begin{restatable}{theorem}{FiniteDiscrimImpliesUniv}\label{theo:finite_discrim_implies_univ}
    If $\gC$ is pairwise $Y_\fG$ discriminating, then $\gC^{+2}$ is universally approximating over $Y$.
\end{restatable}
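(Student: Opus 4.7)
The plan is to reduce the universality claim to a finite-dimensional interpolation problem: use pairwise discrimination to embed $Y$ as a finite set of distinct points in some $\sR^N$ via a $\fG$-invariant map, and then exploit the standard interpolation capability of a 2-layer ReLU MLP to hit any prescribed values at these points. Because $Y$ is finite, we can in fact achieve $\sup_{y\in Y}|f(y)-h(y)|=0$, which is strictly stronger than the $\varepsilon$-approximation required by Definition \ref{def:universally_approximating}.

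First I would enumerate the $\fG$-orbits meeting $Y$: finiteness of $Y$ ensures that $\simeq$ partitions $Y$ into finitely many classes, so fix a representative from each, say $y_1, \ldots, y_m$. Pairwise $Y_\fG$-discrimination then supplies, for every pair $i \neq j$, a function $h_{ij} \in \gC$ with $h_{ij}(y_i) \neq h_{ij}(y_j)$. Stacking the finitely many $h_{ij}$ yields a $\fG$-invariant map
\[
    \Phi : X \to \sR^N, \qquad \Phi(y) := \bigl( h_{ij}(y) \bigr)_{1 \leq i < j \leq m}, \qquad N = \tbinom{m}{2}.
\]
Each coordinate of $\Phi$ is $\fG$-invariant and hence constant on each orbit; by construction $\Phi(y_i)$ and $\Phi(y_j)$ differ in the $ij$-th coordinate whenever $i \neq j$. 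Therefore $\Phi(Y) = \{ \Phi(y_1), \ldots, \Phi(y_m) \}$ consists of exactly $m$ pairwise distinct points of $\sR^N$, and any $y \in Y$ satisfies $\Phi(y) = \Phi(y_i)$ for the representative $y_i$ of its orbit.

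Given an arbitrary $\fG$-invariant $f : X \to \sR$, set $v_i := f(y_i)$; since $f$ is constant on orbits, it suffices to produce a 2-layer ReLU MLP $M : \sR^N \to \sR$ with $M(\Phi(y_i)) = v_i$ for all $i$, because then $h := M \circ \Phi$ lies in $\gC^{+2}$ and equals $f$ exactly on $Y$. To build such an $M$, I would pick a direction $\vw \in \sR^N$ separating the $\Phi(y_i)$ (a generic $\vw$ works, since only finitely many ``bad'' hyperplanes $\{\vw : \vw^\top(\Phi(y_i)-\Phi(y_j))=0\}$ must be avoided), order the resulting $m$ distinct scalar projections $t_1 < \cdots < t_m$, and combine $O(m)$ shifted ReLUs into a piecewise-linear function that takes value $v_i$ at each $t_i$; composing with the linear map $\vw^\top(\cdot)$ packages this into a 2-layer ReLU network on $\sR^N$. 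The main obstacle, in my view, is just to verify this last step cleanly, namely that a 2-layer ReLU MLP of finite width can exactly interpolate arbitrary targets at $m$ distinct points of $\sR^N$, and to check the bookkeeping that $\Phi$ uses only finitely many functions from $\gC$ so that $M \circ \Phi$ genuinely lies in $\gC^{+2}$ as defined in the excerpt; both are routine once one uses finiteness of $Y$.
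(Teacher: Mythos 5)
Your proof is correct, but it takes a different route from the paper's. The paper follows \citet{chen2019equivalence} through two lemmas: it first upgrades the pairwise discriminators to orbit-locating functions $\delta_y \in \gC^{+1}$ satisfying $\delta_y(y') = 0$ iff $y \simeq y'$ (by summing $|\delta_{y,y'}(x) - \delta_{y,y'}(y)|$ over $y' \not\simeq y$, realising the absolute value with ReLUs), and then interpolates $f$ exactly via a sum of ReLU bump functions composed with the $\delta_y$, i.e.\ $h(x) = \sum_{y} k_y f(y)\, b_{r_y}(\delta_y(x))$ --- a partition-of-unity-style construction that deliberately mirrors the proof in the continuous setting. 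You instead stack the finitely many discriminators into a feature map $\Phi: X \to \sR^N$ that is injective on the set of orbits meeting $Y$, collapse to one dimension with a generic linear functional avoiding the finitely many bad hyperplanes, and finish with standard exact ReLU interpolation at $m$ distinct reals; the resulting $\mathrm{MLP}([h_{ij}(x)]_{ij})$ has one affine-plus-ReLU layer and one affine output layer, so it sits inside $\gC^{+2}$ (indeed with room to spare). Both arguments achieve exact equality $h|_Y = f|_Y$ rather than mere $\varepsilon$-approximation. What each buys: your argument is more elementary and exploits finiteness of $Y$ more directly, avoiding the explicit $\delta_y$ and bump constructions; the paper's decomposition is heavier here but is reused almost verbatim (with the ``locate every orbit'' definition replacing exact orbit indicators) to prove the compact continuous analogue, Theorem \ref{theo:separation_implies_univers}, where a generic linear projection of infinitely many orbit images would no longer give the quantitative separation needed.
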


\textbf{Continuous Features. } The symmetries characterising geometric graphs are naturally continuous (\textit{e.g.} rotations). Therefore, it is natural to ask how the results above translate to \emph{continuous} $\fG$-invariant functions over a continuous subspace $Y$. 
Therefore, for the rest of this section, we assume that $(X, d)$ is a metric space, $Y$ is a compact subset of $X$ and $\fG$ acts continuously on $X$.

\begin{restatable}{theorem}{UniversalCanSeparate}\label{theo:universal_can_separate}
    If $\gC$ can universally approximate any continuous $\fG$-invariant functions on $Y$, then $\gC$ is also pairwise $Y_\fG$ discriminating.
\end{restatable}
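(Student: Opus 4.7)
The plan is to show that for any $y_1, y_2 \in Y$ with $y_1 \not\simeq y_2$, one can build a continuous $\fG$-invariant function $f: X \to \sR$ with $f(y_1) \neq f(y_2)$, and then leverage the universal-approximation hypothesis to extract the required $h \in \gC$. This reduces the problem to constructing a single concrete separating function.

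First, I would exploit compactness to produce positive separation between orbits. Since $\fG$ is compact and acts continuously on $X$, each orbit $\gO_\fG(y_i)$ is a continuous image of $\fG$ and hence compact in $(X,d)$. Because $y_1 \not\simeq y_2$, the two orbits are disjoint compact subsets of a metric space, so $r \defeq d(\gO_\fG(y_1), \gO_\fG(y_2)) > 0$.

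Next comes the main construction, which is the only delicate step. The naive candidate $\phi(x) \defeq d(x, \gO_\fG(y_1))$ is continuous ($1$-Lipschitz) and satisfies $\phi(y_1)=0$ and $\phi(y_2) \geq r$, but because the action of $\fG$ is only assumed continuous and not isometric, $\phi$ need not be $\fG$-invariant. I would repair this by averaging against the normalized Haar measure $\mu$ on the compact group $\fG$:
\[
  f(x) \defeq \int_{\fG} \phi(\fg \cdot x)\, d\mu(\fg).
\]
Left-invariance of $\mu$ yields $f(\fh \cdot x) = f(x)$. Continuity of $f$ follows because $(x,\fg) \mapsto \phi(\fg \cdot x)$ is continuous on $X \times \fG$ and integration over the compact parameter space $\fG$ preserves continuity (via uniform continuity on compact neighbourhoods together with dominated convergence). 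Since each orbit is $\fG$-stable, $\phi(\fg \cdot y_1) = 0$ and $\phi(\fg \cdot y_2) \geq r$ for every $\fg \in \fG$, hence $f(y_1) = 0$ and $f(y_2) \geq r > 0$.

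Finally, I would apply universal approximation to $f$ with tolerance $\varepsilon = r/3$, obtaining $h \in \gC$ with $\|f - h\|_Y < r/3$. The triangle inequality then gives
\[
  |h(y_2) - h(y_1)| \;\geq\; |f(y_2) - f(y_1)| - 2\varepsilon \;\geq\; r - \tfrac{2r}{3} \;=\; \tfrac{r}{3} \;>\; 0,
\]
so $h(y_1) \neq h(y_2)$, establishing pairwise $Y_\fG$ discrimination. The main obstacle in the whole argument is the construction of $f$: because the action is only continuous rather than isometric, orbit-distance functions are not directly $\fG$-invariant, and one must invoke Haar averaging and verify that it interacts correctly with continuity. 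Once that separating function is in hand, the rest is a standard $\varepsilon$-tolerance argument around the positive orbit-separation constant $r$.
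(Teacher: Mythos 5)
Your proof is correct, and it reaches the conclusion by a genuinely different construction of the separating function than the paper uses. Both arguments share the same skeleton — build a continuous $\fG$-invariant function with a positive gap between the two orbits, then approximate within a fraction of that gap — but the key device differs. You correctly observe that $d(x,\gO_\fG(y_1))$ fails to be invariant when the action is merely continuous rather than isometric, and you repair this by Haar averaging over the compact group, which then forces you to verify continuity of the averaged function via dominated convergence. The paper instead defines $\delta_{y_1}(x) = \inf_{\fg \in \fG} d(y_1, \fg \cdot x)$, which is \emph{automatically} $\fG$-invariant (the infimum over the group absorbs any group translation of $x$) with no isometry assumption and no measure theory; its continuity is supplied by Lemma~\ref{lemma:cont_inf} (an infimum of a jointly continuous function over a compact parameter space is continuous), and positivity at $y_2$ follows because the infimum is attained by compactness. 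The paper's route is more elementary, avoiding the Haar measure entirely; yours buys a symmetric-looking quantity (genuine distance to an orbit, lower-bounded by the orbit-to-orbit distance $r$) at the cost of the averaging machinery. Two minor points: the substitution $\fg' = \fg\fh$ in your invariance check uses \emph{right}-invariance of the Haar measure rather than left-invariance (harmless, since compact groups are unimodular), and your gap constant could be taken as $\eps < r/2$ rather than $r/3$, matching the paper's tolerance — but neither affects correctness.
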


With the additional assumption that $X = Y$, we can also prove the converse.

\begin{restatable}{theorem}{SeparationImpliesUnivers}\label{theo:separation_implies_univers}
    If $\gC$, a class of functions over $Y$, is pairwise $Y_\fG$ discriminating, then $\gC^{+2}$ can also universally approximate any continuous function over $Y$.
\end{restatable}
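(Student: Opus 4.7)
My plan is to pass to the orbit space $Y/\fG$ and reduce the statement to two density results: Stone-Weierstrass on $C(Y/\fG)$, followed by the classical universal approximation theorem for shallow ReLU networks on compact subsets of $\sR^k$. Because $\fG$ is a compact Lie group acting continuously on the compact metric space $Y$, the quotient $Y/\fG$ is a compact Hausdorff space, and a map $f: Y \to \sR$ is continuous and $\fG$-invariant if and only if it factors through the projection $\pi: Y \to Y/\fG$ as $f = \tilde f \circ \pi$ for a unique $\tilde f \in C(Y/\fG)$; this correspondence preserves the uniform norm. Under the natural reading that the elements of $\gC$ are continuous $\fG$-invariant maps (otherwise $\gC^{+2}$, being built from MLPs applied to members of $\gC$, could not uniformly approximate continuous targets), each $f \in \gC$ descends to some $\tilde f \in C(Y/\fG)$.

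Next, I would observe that the pairwise $Y_\fG$-discriminating hypothesis is precisely the statement that $\{\tilde f : f \in \gC\}$ separates the points of $Y/\fG$, since distinct points of the quotient correspond to distinct $\fG$-orbits in $Y$. The unital subalgebra $\tilde \gA \subseteq C(Y/\fG)$ generated by $\{\tilde f : f \in \gC\}$ together with the constants therefore satisfies the hypotheses of Stone-Weierstrass and is uniformly dense in $C(Y/\fG)$. Given any continuous $\fG$-invariant target $g: Y \to \sR$ and any $\eps > 0$, this step yields finitely many $f_1, \dots, f_k \in \gC$ and a polynomial $p \in \sR[u_1, \dots, u_k]$ with $\sup_{y \in Y} |p(f_1(y), \dots, f_k(y)) - g(y)| < \eps/2$.

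To turn the polynomial into a member of $\gC^{+2}$, set $\Phi = (f_1, \dots, f_k): Y \to \sR^k$ so that $\Phi(Y)$ is compact as the continuous image of a compact set. The classical universal approximation theorem for feed-forward networks with a single ReLU hidden layer then gives an MLP $M: \sR^k \to \sR$ with $\sup_{u \in \Phi(Y)} |M(u) - p(u)| < \eps/2$. A triangle inequality finishes the argument: $\sup_{y \in Y} |M(\Phi(y)) - g(y)| < \eps$, and $M \circ \Phi \in \gC^{+2}$ by construction.

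The main obstacle, such as it is, is purely topological: verifying that $Y/\fG$ is compact Hausdorff so that Stone-Weierstrass applies, which hinges on the compactness of $\fG$ for orbits to be closed. One minor point worth flagging in the write-up is that the theorem statement refers to ``any continuous function over $Y$'' whereas $\gC^{+2}$ consists of $\fG$-invariant maps; the intended conclusion is clearly that $\gC^{+2}$ is uniformly dense in the space of continuous $\fG$-invariant functions on $Y$, matching the setting of Theorem \ref{theo:universal_can_separate}.
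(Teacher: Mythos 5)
Your argument is correct, but it takes a genuinely different route from the paper's. The paper follows \citet{chen2019equivalence}: it first shows that a pairwise $Y_\fG$-discriminating class of continuous invariant functions can ``locate every orbit'' (explicitly assembling, from the discriminators $\delta_{y,y'}$ and a finite subcover of the compact $Y$, a function $\delta_y \in \gC^{+1}$ that vanishes exactly near the orbit of $y$), and then builds a partition of unity $\{\psi_{y_0}\}$ subordinate to the cover $\{\delta_{y_0}^{-1}([0,r_{y_0}))\}$ so that $\sum_{y_0} f(y_0)\psi_{y_0}$ approximates $f$ and lies in $\gC^{+2}$. You instead pass to the orbit space $Y/\fG$ --- correctly noting that compactness of $\fG$ makes the orbit relation closed and hence the quotient compact Hausdorff --- identify pairwise discrimination with point separation on the quotient, invoke Stone--Weierstrass to get a polynomial in finitely many $f_i \in \gC$ within $\eps/2$ of the target, and then use the universal approximation theorem for one-hidden-layer ReLU networks (ReLU being non-polynomial, Leshno et al.) on the compact image $\Phi(Y)$ to realise the approximant in $\gC^{+2}$. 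Your version is shorter and isolates cleanly where each compactness hypothesis enters, at the cost of using two classical density theorems as black boxes; the paper's version is more constructive about how the discriminators are wired into the two ReLU layers and runs parallel to its proof in the countable-feature case (Theorem~\ref{theo:finite_discrim_implies_univ}). Both proofs need an MLP approximation step at the end (the paper approximates its partition-of-unity functions $\psi_{y_0}$ by an MLP as well), and both need the elements of $\gC$ to be continuous and $\fG$-invariant and the assumption $X=Y$ so that the action restricts to $Y$ --- points you rightly flag. Your closing remark about the statement's phrasing is also apt: the conclusion can only be density in the space of continuous $\fG$-\emph{invariant} functions on $Y$.
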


We now produce an estimate for the number of aggregators needed to learn \emph{continuous} orbit-space injective functions on a manifold $X$ based on results from differential geometry \citep{lee2013smooth}.
A group $\fG$ acts freely on $X$ if $\fg x = x$ implies $\fg = e_\fG$, where $e_\fG$ is the identity element in $\fG$.  

\begin{restatable}{theorem}{NumAgg}\label{theo:num_agg}
    Let $X$ be a smooth $n$-dim manifold and $\fG$ an $m$-dim compact Lie group acting continuously on $X$. Suppose there exists a smooth submanifold $Y$ of $X$ of the same dimension such that $\fG$ acts freely on it. Then, any $\fG$-orbit injective function $f: X \to \sR^d$ requires that $d \geq n-m$.
\end{restatable}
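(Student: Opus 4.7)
The strategy is to restrict attention to the good submanifold $Y$, pass to the orbit space $Y/\fG$, and apply invariance of domain to the injection that $f$ induces there. Throughout, I take as implicit the natural assumption that $Y$ is $\fG$-invariant (otherwise it is not meaningful to speak of $\fG$ acting on $Y$).

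First, I would establish that $Y/\fG$ is a topological manifold of dimension $n-m$. Since $\fG$ is compact, its continuous action on $Y$ is automatically proper, and by hypothesis it is free. Applying the quotient manifold theorem for free proper actions of compact Lie groups (see \citet{lee2013smooth}, or the topological version in Bredon's transformation-group framework), the orbit space $Y/\fG$ inherits a canonical manifold structure and the projection $\pi: Y \to Y/\fG$ is a principal $\fG$-bundle, so $\dim(Y/\fG) = \dim Y - \dim \fG = n - m$.

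Second, I would descend $f$ to the quotient. Any $\fG$-orbit injective function is in particular $\fG$-invariant, so $f|_Y$ is constant on $\fG$-orbits and factors as $f|_Y = \tilde f \circ \pi$ for a continuous $\tilde f: Y/\fG \to \sR^d$. The map $\tilde f$ is \emph{injective}: if $\tilde f([y_1]) = \tilde f([y_2])$, then $f(y_1) = f(y_2)$, so by $\fG$-orbit injectivity of $f$ on $X$ we have $y_1 \simeq y_2$, meaning $y_2 = \fg \cdot y_1$ for some $\fg \in \fG$, hence $[y_1] = [y_2]$.

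Third, I invoke invariance of domain to conclude $d \geq n-m$. Suppose for contradiction that $d < n-m$. Pick a chart $U \subseteq Y/\fG$ homeomorphic to an open set $V \subseteq \sR^{n-m}$, and compose $\tilde f|_U$ with the linear inclusion $\sR^d \hookrightarrow \sR^{n-m}$. The resulting continuous injection $V \to \sR^{n-m}$ must have open image by invariance of domain, but its image lies in the proper subspace $\sR^d \subsetneq \sR^{n-m}$, which has empty interior. This contradiction forces $d \geq n-m$.

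The only nontrivial step is the quotient-manifold step: verifying that the hypotheses of the compact-group quotient theorem hold (properness from compactness of $\fG$, freeness from assumption, and smoothness/topological regularity in the appropriate sense). Once $Y/\fG$ is identified as a manifold of dimension $n-m$, the descent of $f$ and the invariance-of-domain argument are routine.
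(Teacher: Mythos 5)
Your proposal is correct and follows essentially the same route as the paper's proof: pass to the quotient $Y/\fG$ via the Quotient Manifold Theorem (dimension $n-m$ by freeness and compactness), descend $f$ to a continuous injection on the quotient, and conclude via invariance of domain in a chart. The only cosmetic difference is that you derive the contradiction by composing with the inclusion $\sR^d \hookrightarrow \sR^{n-m}$ and noting the image cannot be open, whereas the paper invokes invariance of domain together with invariance of dimension to force equality of dimensions; these are interchangeable.
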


We now apply this theorem to the local aggregation operation performed by geometric GNNs. Let $X = \sR^{n \times d}$ and $\fG = S_n \times O(d)$ or $S_n \times SO(d)$. Let $\mP_\fg$ and $\mQ_\fg$ be the permutation matrix and the orthogonal matrix associated with the group element $\fg \in \fG$. Then $\fg$ acts on matrices $\mX \in X$ continuously via $\mP_\fg \mX \mQ_\fg^\top$. Then, $\fG$ orbit-space injective functions on $X$ are functions on point clouds of size $n$ that can distinguish any two different point clouds.

\begin{restatable}{theorem}{SOdAgg}\label{theo:so_d_agg}
    For $n \geq d-1 > 0$ or $n=d=1$, any continuous $S_n \times SO(d)$ orbit-space injective function $f: \sR^{n \times d} \to \sR^{q}$ requires that $q \geq nd - d(d-1)/2$.
\end{restatable}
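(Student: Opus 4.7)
The plan is to derive Theorem~\ref{theo:so_d_agg} as a direct application of Theorem~\ref{theo:num_agg}. I take $X = \sR^{n \times d}$, a smooth manifold of dimension $nd$, and $\fG = S_n \times SO(d)$, a compact Lie group of dimension $\dim S_n + \dim SO(d) = 0 + d(d-1)/2$ since $S_n$ is finite. Once I exhibit a smooth submanifold $Y \subseteq X$ of full dimension $nd$ on which $\fG$ acts freely, Theorem~\ref{theo:num_agg} immediately yields $q \geq nd - d(d-1)/2$.

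I take $Y$ to be the open subset of $X$ consisting of matrices $\mX$ with trivial $\fG$-stabiliser. To show this is nonempty (hence an open, $nd$-dimensional submanifold), I analyse when $\mP_\sigma \mX \mR^\top = \mX$, equivalently, writing $\vx_i^\top$ for row $i$,
\begin{equation*}
\mR^\top \vx_i = \vx_{\sigma^{-1}(i)} \quad \text{for all } i \in \{1,\dots,n\}.
\end{equation*}
Case 1 ($\sigma = e$): then $\mR^\top$ fixes the row span $V$ of $\mX$ pointwise. If $\dim V = d$, this forces $\mR = I$; if $\dim V = d-1$, then $\mR^\top$ acts as $\pm 1$ on the line $V^\perp$, but $\det \mR = 1$ forces $\mR = I$. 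This is precisely where the hypothesis $n \geq d-1$ enters: the set of $\mX$ whose rows span at least a $(d-1)$-dimensional subspace is open and dense, hence nonempty. The boundary case $n = d = 1$ is trivial because $SO(1) = \{e\}$ and $S_1 = \{e\}$, so every action is automatically free.

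Case 2 ($\sigma \neq e$): I need to rule out point configurations admitting a nontrivial permutation-rotation symmetry. For each fixed $\sigma \neq e$, the set of $\mX$ admitting some $\mR \in SO(d)$ with $\mR^\top \vx_i = \vx_{\sigma^{-1}(i)}$ is the projection to $X$ of a closed subset of $X \times SO(d)$ along the compact factor $SO(d)$, hence closed. Taking the union over the finitely many $\sigma \neq e$ still gives a closed set, and to show its complement is nonempty, I exhibit a concrete $\mX_0$ in generic position, for instance with rows spanning $\sR^d$ and all $\binom{n}{2}$ pairwise Euclidean distances distinct. Any $(\sigma, \mR)$ fixing $\mX_0$ must preserve pairwise distances, so by distinctness $\sigma = e$, reducing to Case~1 which forces $\mR = I$. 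Intersecting the open sets from the two cases produces a nonempty open $Y \subseteq X$ of dimension $nd$.

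The main obstacle is Case~2: rigorously verifying that free-action configurations form an open set and that the generic-position configuration $\mX_0$ truly exists for every $n \geq d-1$ with $d \geq 2$. Openness follows from continuity of pairwise distances and of the row-rank condition, but one must be careful that both spanning and distance-distinctness can be simultaneously achieved, which is straightforward by a small perturbation argument since both conditions cut out open dense semi-algebraic subsets of $\sR^{n \times d}$. With $Y$ constructed, Theorem~\ref{theo:num_agg} closes the argument and gives the desired bound $q \geq nd - d(d-1)/2$.
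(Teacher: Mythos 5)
Your overall strategy is exactly the paper's: exhibit a nonempty open ($\fG$-invariant) subset $Y \subseteq \sR^{n\times d}$ on which $S_n \times SO(d)$ acts freely, note $\dim \fG = d(d-1)/2$, and invoke Theorem~\ref{theo:num_agg}. Your Case~1 argument (a rotation fixing a $(d-1)$-dimensional row span pointwise must act as $\pm 1$ on the orthogonal line, and $\det = 1$ kills the $-1$ branch) is precisely the paper's reason why $n \geq d-1$ suffices for $SO(d)$ rather than $n \geq d$.

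However, your Case~2 has two concrete defects at the boundary of the parameter range. First, you ask for an $\mX_0$ ``with rows spanning $\sR^d$'', which is impossible when $n = d-1$; you should only ask for $d-1$ linearly independent rows, which is all Case~1 needs. Second, and more substantively, distinctness of the $\binom{n}{2}$ pairwise inter-row distances does \emph{not} force $\sigma = e$ when $n = 2$: the condition is vacuous (one pair), and a transposition genuinely can be realised, e.g.\ for $n=2$, $d=3$ the rotation by $\pi$ about the bisector of two equal-norm vectors $\vx_1, \vx_2$ lies in $SO(3)$ and swaps them, so such $\mX_0$ have nontrivial stabiliser. Since $n=2$, $d=3$ satisfies $n \geq d-1 > 0$, your argument as written does not produce a free point there. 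The paper's fix is to impose pairwise distinct \emph{row norms} instead: an orthogonal matrix preserves norms, so $\mR^\top \vx_i = \vx_{\sigma^{-1}(i)}$ with all $\norm{\vx_i}$ distinct immediately gives $\sigma = e$ for every $n$, after which Case~1 applies. Substituting that condition (together with the rank-$(d-1)$ condition, both open and simultaneously satisfiable) repairs your proof; the rest, including the closedness-of-projection argument for openness and the final application of Theorem~\ref{theo:num_agg}, is sound.
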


We can also generalise this to $O(d)$, with the slightly stronger assumption that $n \geq d$.

\begin{restatable}{theorem}{OdAgg}\label{theo:o_d_agg}
    For $n \geq d > 0$, any continuous $S_n \times O(d)$ orbit-space injective function $f: \sR^{n \times d} \to \sR^{q}$ requires that $q \geq nd - d(d-1)/2$.
\end{restatable}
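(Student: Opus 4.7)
The plan is to apply Theorem \ref{theo:num_agg} directly. Since $\dim X = nd$ and $\dim(S_n \times O(d)) = d(d-1)/2$ (the permutation factor being discrete contributes zero), the bound $q \geq nd - d(d-1)/2$ will follow immediately once we exhibit a smooth $nd$-dimensional submanifold $Y \subseteq \sR^{n \times d}$ on which $S_n \times O(d)$ acts freely. The entire work therefore lies in constructing $Y$, and this is exactly where the hypothesis $n \geq d$ (strictly stronger than the $n \geq d-1$ needed for the $SO(d)$ case of Theorem \ref{theo:so_d_agg}) enters.

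First I would take $Y$ to be the open subset of $\sR^{n \times d}$ cut out by three generic conditions: (a) the $n$ rows of $\mX$ have pairwise distinct Euclidean norms; (b) the rows are pairwise distinct as vectors in $\sR^d$; and (c) the rows span $\sR^d$, equivalently $\mathrm{rank}(\mX) = d$. Condition (c) is only nonvacuous when $n \geq d$, which is precisely the hypothesis of the theorem. Each condition is open and dense, so $Y$ is a nonempty open submanifold of $\sR^{n \times d}$ and inherits the full dimension $nd$.

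Next I would verify freeness of the $S_n \times O(d)$ action on $Y$. If $(\sigma, \mQ) \in S_n \times O(d)$ stabilises $\mX \in Y$, that is $\mP_\sigma \mX \mQ^\top = \mX$, then comparing row norms shows that $\sigma$ must permute rows with identical norms; by (a) this forces $\sigma = \mathrm{id}$. The equation then reduces to $\mX \mQ^\top = \mX$, meaning that $\mQ$ fixes every row of $\mX$ under right multiplication; by condition (c) the rows span $\sR^d$, so $\mQ$ fixes every vector of $\sR^d$ and hence $\mQ = \mI$. Conditions (a) and (b) together also exclude any nontrivial $(\sigma, \mQ)$ pair since every such element must act trivially on $\mX$. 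Invoking Theorem \ref{theo:num_agg} with this $Y$ then yields the claimed inequality $q \geq nd - d(d-1)/2$.

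The main obstacle is the last step of the freeness argument and specifically isolating why $n \geq d$ is needed here but not in the $SO(d)$ version: in the orientation-preserving case one can get away with rows merely spanning a hyperplane because the only element of $SO(d)$ fixing a hyperplane pointwise is the identity, but in the $O(d)$ case the reflection across that hyperplane always sits in the stabiliser, so we must genuinely require the rows to span all of $\sR^d$. Beyond this, one should also check that the matrix action is smooth so that Theorem \ref{theo:num_agg} applies verbatim, but this is routine since matrix multiplication is smooth and $O(d)$ is a compact Lie group of dimension $d(d-1)/2$.
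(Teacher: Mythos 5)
Your proposal is correct and follows essentially the same route as the paper: restrict to the open, full-dimensional submanifold of matrices with pairwise-distinct row norms whose rows span $\sR^d$, verify freeness of the $S_n \times O(d)$ action there (distinct norms kill the permutation, spanning kills the orthogonal factor), and invoke Theorem \ref{theo:num_agg}. Your condition (b) is redundant given (a), and your closing remark about why $O(d)$ needs full spanning while $SO(d)$ tolerates a hyperplane (the reflection in the stabiliser) correctly pinpoints the role of the hypothesis $n \geq d$.
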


Overall, these results show that when working with point clouds in $\sR^3$ as is common in molecular or physical applications, at least $q = 3(n-1)$ aggregators are required. This result argues why a bigger representational width can help distinguish neighbourhoods. Finally, in the particular case of the zero-dimensional subgroup $S_n \times \{e_{\text{SO(d)}}\} \simeq S_n$ we obtain a statement holding for all $n$ and generalising a result from \citet{corso2020principal} regarding the aggregators for non-geometric GNNs. The original PNA result considers the case $d=1$ and here we extend it to arbitrary $d$.

\begin{restatable}{proposition}{generalisedPNA}\label{prop:pna}
    Any $S_n$-invariant injective function $f: $ $\sR^{n \times d} \to \sR^q$ requires $q \geq nd$.
\end{restatable}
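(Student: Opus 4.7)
The plan is to derive this proposition as an essentially immediate corollary of Theorem \ref{theo:num_agg} by specialising to the discrete group $\fG = S_n$. Take $X = \sR^{n \times d}$, a smooth manifold of dimension $nd$, with $S_n$ acting by $(\sigma, \mX) \mapsto \mP_\sigma \mX$. Since $S_n$ is finite, it is a compact Lie group of dimension $m = 0$. Reading the hypothesis that $f$ is $S_n$-invariant and ``injective'' as $S_n$-orbit injectivity (the natural reading, since any $S_n$-invariant map factors through the orbit space and injectivity must be measured there), Theorem \ref{theo:num_agg} delivers $q \geq nd - 0 = nd$ as soon as its free-action hypothesis is verified.

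The only nontrivial step is exhibiting a smooth submanifold $Y \subseteq X$ of the full dimension $nd$ on which $S_n$ acts freely. I would take
\[
Y \defeq \{ \mX \in \sR^{n \times d} : \mX_i \neq \mX_j \text{ whenever } i \neq j \},
\]
the set of matrices with pairwise distinct rows. This is the complement of the finite union of linear subspaces $\{\mX : \mX_i = \mX_j\}$, so it is open and dense in $\sR^{n \times d}$, and in particular a smooth submanifold of dimension $nd$. If $\mP_\sigma \mX = \mX$ for some $\mX \in Y$, then $\mX_{\sigma(i)} = \mX_i$ for every $i$; by pairwise distinctness of rows, this forces $\sigma = \mathrm{id}$, so the $S_n$-action on $Y$ is free.

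Applying Theorem \ref{theo:num_agg} with this choice of $(X, Y, \fG)$ and $m = 0$ yields $q \geq nd$, which is the claimed bound and specialises to the original PNA statement at $d = 1$. The main obstacle is purely bookkeeping: confirming that ``$S_n$-invariant injective'' really denotes orbit-injectivity (consistent with the orbit-injective functions targeted by Theorem \ref{theo:num_agg}), and confirming the freeness and full-dimensionality of $Y$ as above. All analytic content, including the differential-geometric rank argument, is already packaged inside Theorem \ref{theo:num_agg}, so no further work beyond this specialisation is expected.
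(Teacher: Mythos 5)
Your proof is correct and takes essentially the same route as the paper: both specialise Theorem~\ref{theo:num_agg} to $S_n$ viewed as a zero-dimensional compact Lie group and exhibit a full-dimensional open submanifold on which the action is free, with ``injective'' correctly read as orbit-injectivity. The only cosmetic difference is that the paper reuses the submanifold of matrices with pairwise distinct row \emph{norms} from the proof of Theorem~\ref{theo:o_d_agg}, whereas you use the slightly larger set of matrices with pairwise distinct rows; either choice verifies the freeness hypothesis.
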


%%%

\section{Conclusion}

We propose the Geometric Weisfeiler-Leman (GWL) framework to characterise the expressive power of geometric GNNs.
Our work fills a key research gap as standard GNNs and the associated theoretical tools are inapplicable for geometric graphs.
(1) We demonstrate the utility of GWL for understanding how key design choices influence geometric GNN expressivity, (2) contribute synthetic experiments highlighting practical challenges for building expressive models, and (3) connect GWL's discrimination-based perspective to universal approximation. 

%%%

\textbf{Future Work. }
GWL provides an abstraction to study the theoretical limits of geometric GNNs. 
In practice, it is challenging to build provably powerful models that satisfy the conditions of Proposition \ref{theo:conditions}, as GWL relies on perfect neighbourhood aggregation and colouring functions.
Based on the understanding gained from GWL, future work will develop maximally powerful, \emph{practical} geometric GNNs.

% \section*{Acknowledgements}
\textbf{Acknowledgements. }
We would like to thank Andrew Blake, Challenger Mishra, Charles Harris, Dávid Kovács, Erik Thiede, Gabor Csanyi, Gregor Simm, Hannes Stärk, Ilyes Batatia, Iulia Duta, Justin Tan, Limei Wang, Mario Geiger, Petar Veličković, Ramon Vinãs, Rob Hesselink, Soledad Villar, Weihua Hu, and the anonymous reviewers for helpful discussions.
CKJ was supported by the A*STAR Singapore National Science Scholarship (PhD).
SVM was supported by the UKRI Centre for Doctoral Training in Application of Artificial Intelligence to the study of Environmental Risks (EP/S022961/1).

\newpage

\bibliography{references}
\bibliographystyle{icml2023}

\appendix
\onecolumn

%%%

\begin{figure*}[t!]
    \centering
    \includegraphics[width=0.75\linewidth]{}
    \caption{\textbf{Invariant GWL Test.}
        IGWL cannot distinguish $\mathcal{G}_1$ and $\mathcal{G}_2$ as they are $1$-hop identical: The $\fG$-orbit of the $1$-hop neighbourhood around each node is the same, and IGWL cannot propagate geometric orientation information beyond $1$-hop. 
    }
    \label{fig:igwl}
\end{figure*}

%%%

% \begin{figure*}[t!]
%     \centering
%     \includegraphics[width=0.65\linewidth]{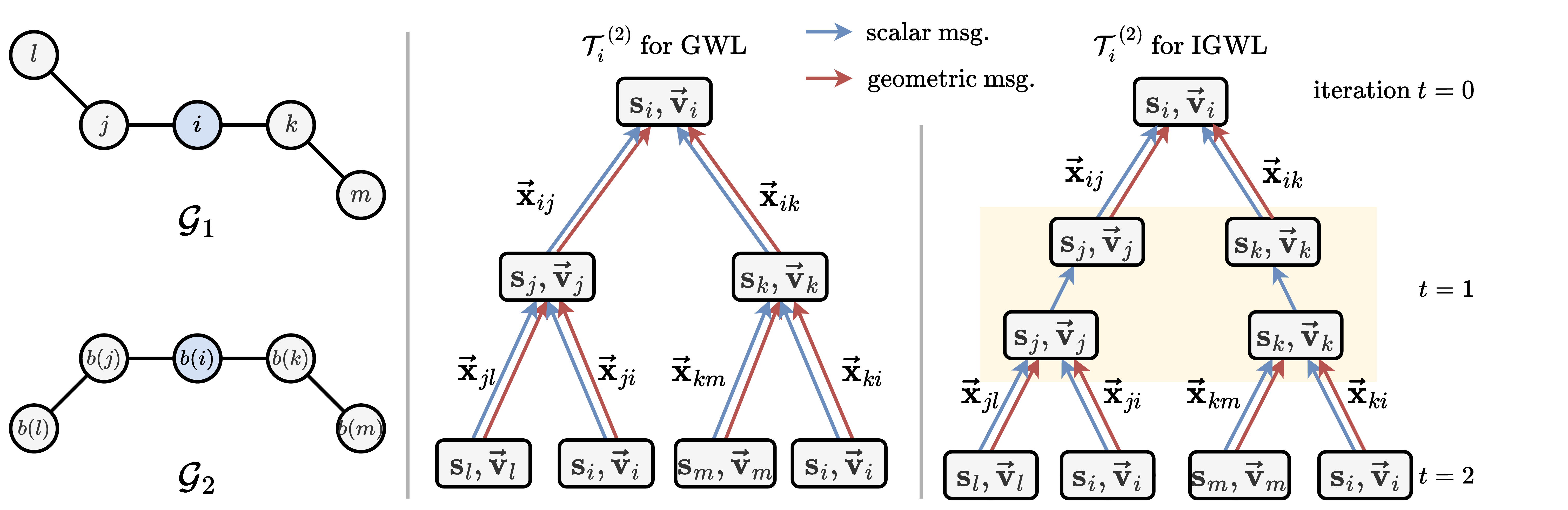}
%     \caption{\textbf{Geometric Computation Trees for GWL and IGWL.}
%         Unlike GWL, geometric orientation information cannot flow from the leaves to the root in IGWL, restricting its expressive power.
%         IGWL cannot distinguish $\mathcal{G}_1$ and $\mathcal{G}_2$ as all $1$-hop neighbourhoods are computationally identical.}
%     \label{fig:trees}
% \end{figure*}

%%%

\section{Discussion}

\textbf{Related Work. }
In the molecular simulations community,
literature on the \textit{completeness} of atom-centred interatomic potentials has focused on distinguishing $1$-hop local neighbourhoods (point clouds) around atoms by building spanning sets for continuous, $\fG$-equivariant multiset functions \citep{shapeev2016moment, drautz2019atomic, dusson2019atomic, pozdnyakov2020incompleteness}.
GWL generalises and extends this analysis to generic geometric graph isomorphism problems beyond local atom-centred neighbourhoods.

Recent theoretical work on geometric GNNs and their universality \citep{dym2020universality, villar2021scalars, Gasteiger2021gemnet, jing2020learning} has shown that architectures such as TFN, GemNet and GVP-GNN can be universal approximators of continuous, $\fG$-equivariant or $\fG$-invariant multiset functions over point clouds, \textit{i.e.} fully connected graphs.
In contrast, the GWL framework studies the expressive power of geometric GNNs operating on \emph{sparse} graphs from the perspective of discriminating geometric graphs.
The discrimination lens is potentially more granular and practically insightful than universality.
A model may either be universal or not. 
On the other hand, there could be multiple degrees of discrimination depending on the classes of geometric graphs that can and cannot be distinguished, which our work aims to formalise.

%%%

\textbf{Limitations. }
\emph{Can GWL distinguish all geometric graphs in theory?}
Proposition \ref{prop:noniso} shows that GWL can distinguish any non-isomorphic geometric graphs for the practical case where the graph structure is computed using radial cutoffs.
However, for the general case where the topology of the geometric graph is independent of the coordinates, GWL may not be theoretically complete as there may exist pathological \emph{edge cases} where the test fails. 
For example, when all coordinates and vector features are set equal to zero, GWL coincides with the standard 1-WL. 
In this edge case, GWL has the same expressive power as 1-WL and inherits all well-known failure cases of 1-WL.
% A k-tuple version of GWL akin to the k-WL or k-FWL hierarchy would be an interesting theoretical extension for covering such edge cases. However, we did not find any practical insights into model-building based on a k-tuple version of GWL yet. There are no current architectures which follow a k-tuple paradigm.

\emph{Does GWL characterise all classes of geometric GNNs?}
GWL is a node-centric framework for analysing the expressive power of geometric GNNs which perform local message passing \emph{without} canonical reference frames.
Non-local architectures such as GemNet-Q \citep{Gasteiger2021gemnet}, which aggregates dihedral angles from two-hop neighbourhoods, are not directly covered by GWL.
Similarly, the analysis of geometric GNNs based on canonical reference frames \citep{du2022se, wang2022comenet} is left to future work.
This class of architectures use a local or global frame of reference to scalarise equivariant features into invariant ones, offering an alternative modelling technique when canonical reference frames can easily be defined (\emph{e.g.} protein backbone structures \citep{ jumper2021highly, baek2021accurate, wang2023learning}).

Notably, \citet{wang2022comenet} introduced another notion of \emph{completeness}, different from completeness of atomic structured representations \citep{pozdnyakov2020incompleteness} which GWL builds upon.
To the best of our knowledge, building \emph{provably} complete geometric GNNs under either definition for all possible geometric graphs remains an open question that our framework can help the community explore.
For instance, \citet{wang2022comenet} assume that 4-body aggregators such as SphereNet are locally complete. 
While this is likely the case for many practically occurring geometric graphs, the results in Table \ref{tab:ihash} for the counterexamples from \citet{pozdnyakov2020incompleteness} show that SphereNet is indeed not $\fG$-orbit injective.

%%%

\newpage

\section{Additional Background on Geometric Graph Neural Networks}
\label{app:gnn-bg}

GNNs specialised for geometric graphs can be categorised according to the type of intermediate representations:
(1) \textit{Equivariant models}, where the intermediate features and propagated messages are geometric quantities; and
(2) \textit{Invariant models}, which only propagate local invariant scalar features such as distances and angles. 
As summarised in Figure \ref{fig:axes}, the GWL framework can be used to characterise the expressive power of the following classes of geometric GNNs\footnote{See \citet{duval2023hitchhiker} for a comprehensive introduction to geometric GNN architectures.}.

%%%

\begin{figure}[t!]
    \centering
    % \hfill
    \begin{subfigure}[b]{0.25\linewidth}
        \centering
        \includegraphics[width=\linewidth]{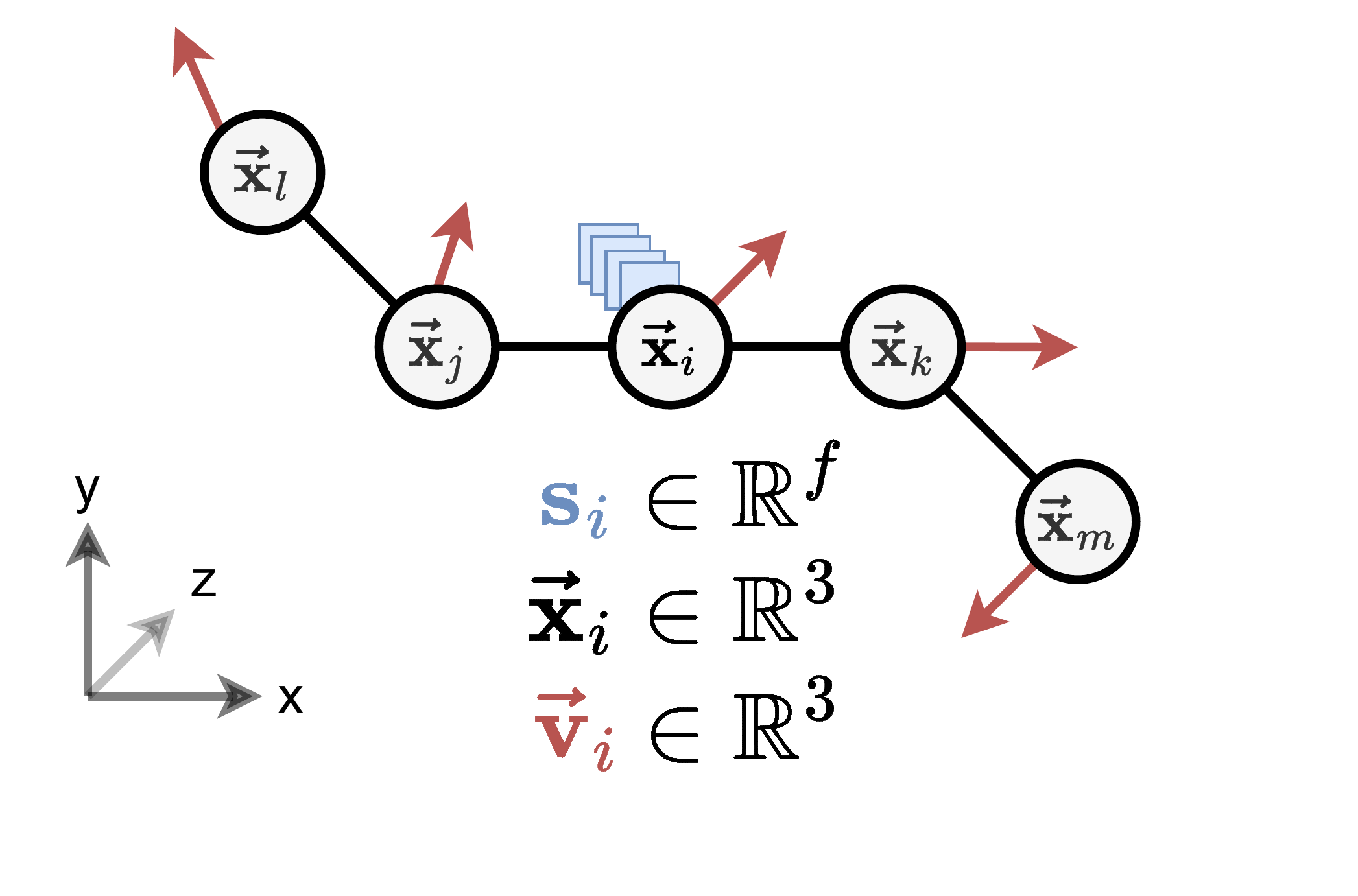}
        \caption{A geometric graph}
        \label{fig:geom-graph}
    \end{subfigure}
    \hfill
    \begin{subfigure}[b]{0.37\linewidth}
        \centering
        \includegraphics[width=\linewidth]{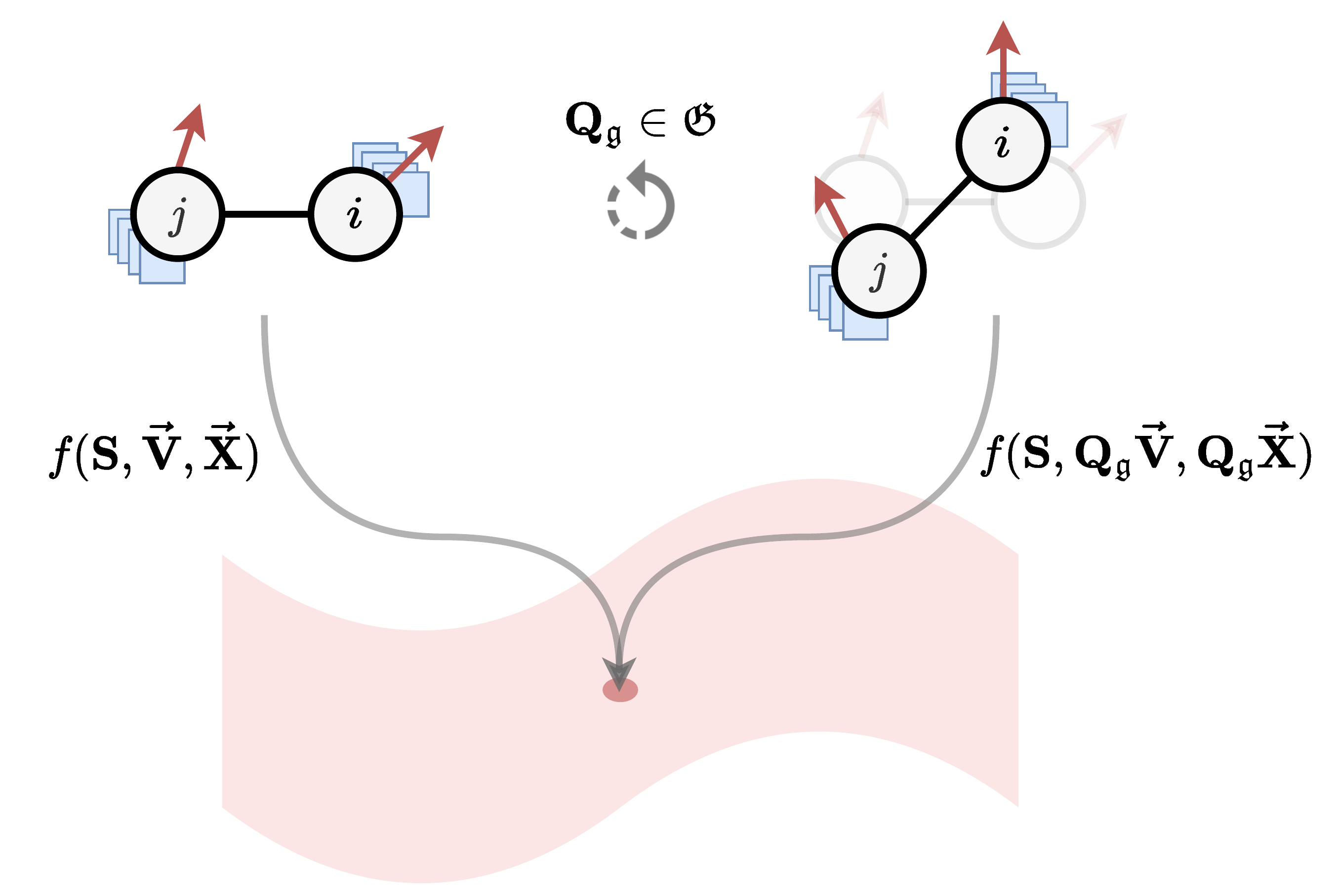}
        \caption{$\fG$-invariant function}
        \label{fig:invariant-fn}
    \end{subfigure}
    \hfill
    \begin{subfigure}[b]{0.37\linewidth}
        \centering
        \includegraphics[width=\linewidth]{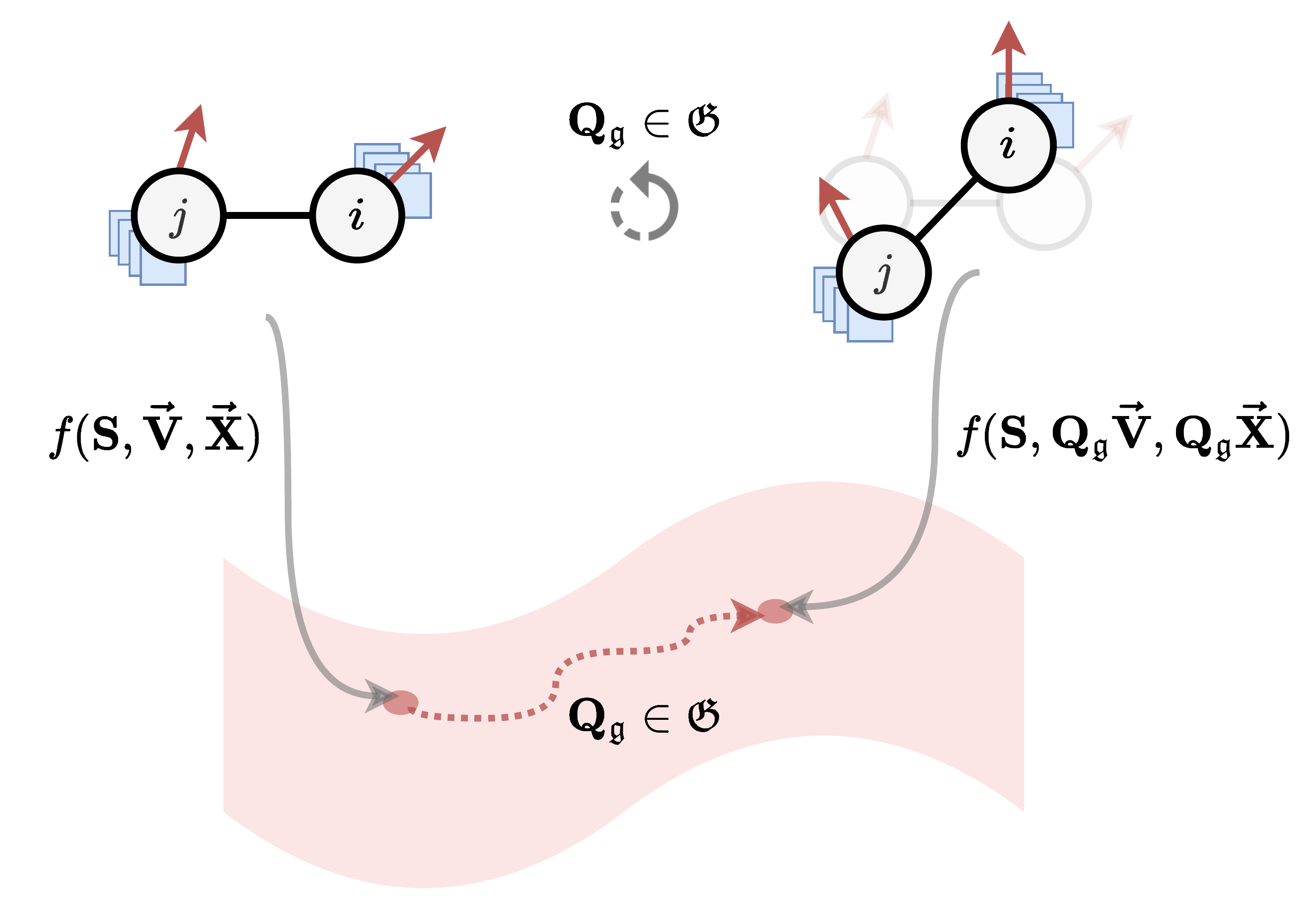}
        \caption{$\fG$-equivariant function}
        \label{fig:equivariant-fn}
    \end{subfigure}
    % \hfill
        \caption{\textbf{Invariant and equivariant functions on geometric graphs. }
        (a) Geometric graphs are attributed graphs embedded in Euclidean space.
        The geometric attributes transform along with Euclidean transformations of the system, such as global rotations $\fG$.
        % we first define two classes of functions that are used to construct geometric GNN layers
        (b) The output of $\fG$-invariant functions remains unchanged under transformations of the input.
        (c) For $\fG$-equivariant functions, transformations of the input must result in the output transforming equivalently.
        }
       \label{fig:invariant-equivariant}
\end{figure}

%%%

\subsection{$\fG$-invariant GNNs}

% \textbf{$\fG$-invariant GNNs. } 
$\fG$-invariant GNN layers aggregate scalar quantities from local neighbourhoods via scalarising the geometric information. 
Scalar features are update from iteration $t$ to $t+1$ via learnable aggregate and update functions, $\textsc{Agg}$ and $\textsc{Upd}$, respectively:
\begin{align}
    \label{eq:gnn-inv-app}
    \vs_i^{(t+1)} & \defeq \textsc{Upd} \left( \vs_i^{(t)} \ , \ \textsc{Agg} \left( \ldblbrace (\vs_i^{(t)}, \vec{\vv}_i^{(t)}), (\vs_j^{(t)},  \vec{\vv}_j^{(t)}), \vec{\vx}_{ij} \mid j \in \mathcal{N}_i \rdblbrace \right) \right).
\end{align}
% For \textit{e.g.}, 
\textbf{SchNet} \citep{schutt2018schnet} and \textbf{CGCNN} \citep{xie2018cgcnn} use relative distances $\Vert \vec{\vx}_{ij} \Vert$, a 2-body invariant, to scalarise local geometric information:
\begin{align}
    \vs_i^{(t+1)} & \defeq \vs_i^{(t)} + \sum_{j \in \mathcal{N}_i} f_1 \left( \vs_j^{(t)} , \ \Vert \vec{\vx}_{ij} \Vert \right) \label{eq:schnet}
\end{align}

\textbf{DimeNet} \citep{Gasteiger2020directional} and \textbf{GemNet-T} \citep{Gasteiger2021gemnet} use both distances and angles $\vec{\vx}_{ij} \cdot \vec{\vx}_{ik} $ among triplets (3-body invariants), as follows:
\begin{align}
    \vs_i^{(t+1)} & \defeq \sum_{j \in \mathcal{N}_i} f_1 \Big( \vs_i^{(t)} , \ \vs_j^{(t)} , \sum_{k \in \mathcal{N}_i \backslash \{j\}} f_2 \left( \vs_j^{(t)} , \ \vs_k^{(t)} , \ \Vert \vec{\vx}_{ij} \Vert , \ \vec{\vx}_{ij} \cdot \vec{\vx}_{ik} \right) \Big) \label{eq:dimenet}
\end{align}
The updated scalar features are both $\fG$-invariant and $T(d)$-invariant as the only geometric information used are the relative distances and angles, both of which remain unchanged under the action of $\fG$ or translations.

%%%

\begin{figure}[h!]
    \centering
    % \hfill
    \begin{subfigure}[b]{0.22\linewidth}
        \centering
        \includegraphics[width=\linewidth]{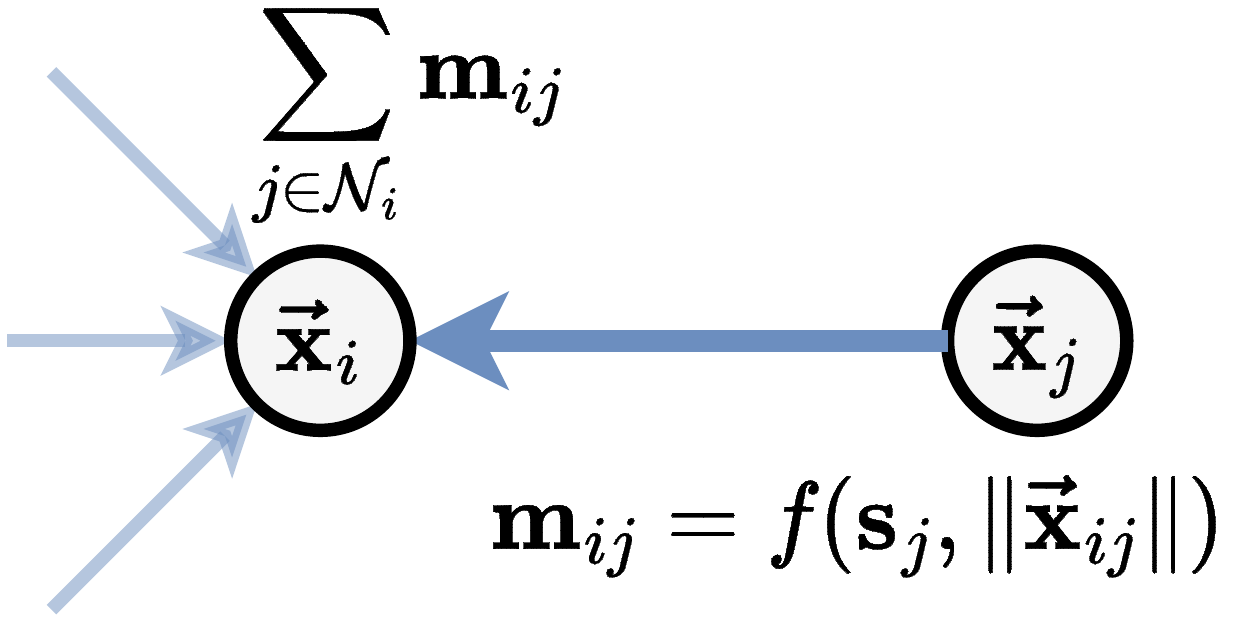}
        \vspace{6.3pt}
        \caption{SchNet}
        \label{fig:schnet}
    \end{subfigure}
    \hfill
    \begin{subfigure}[b]{0.25\linewidth}
        \centering
        \includegraphics[width=\linewidth]{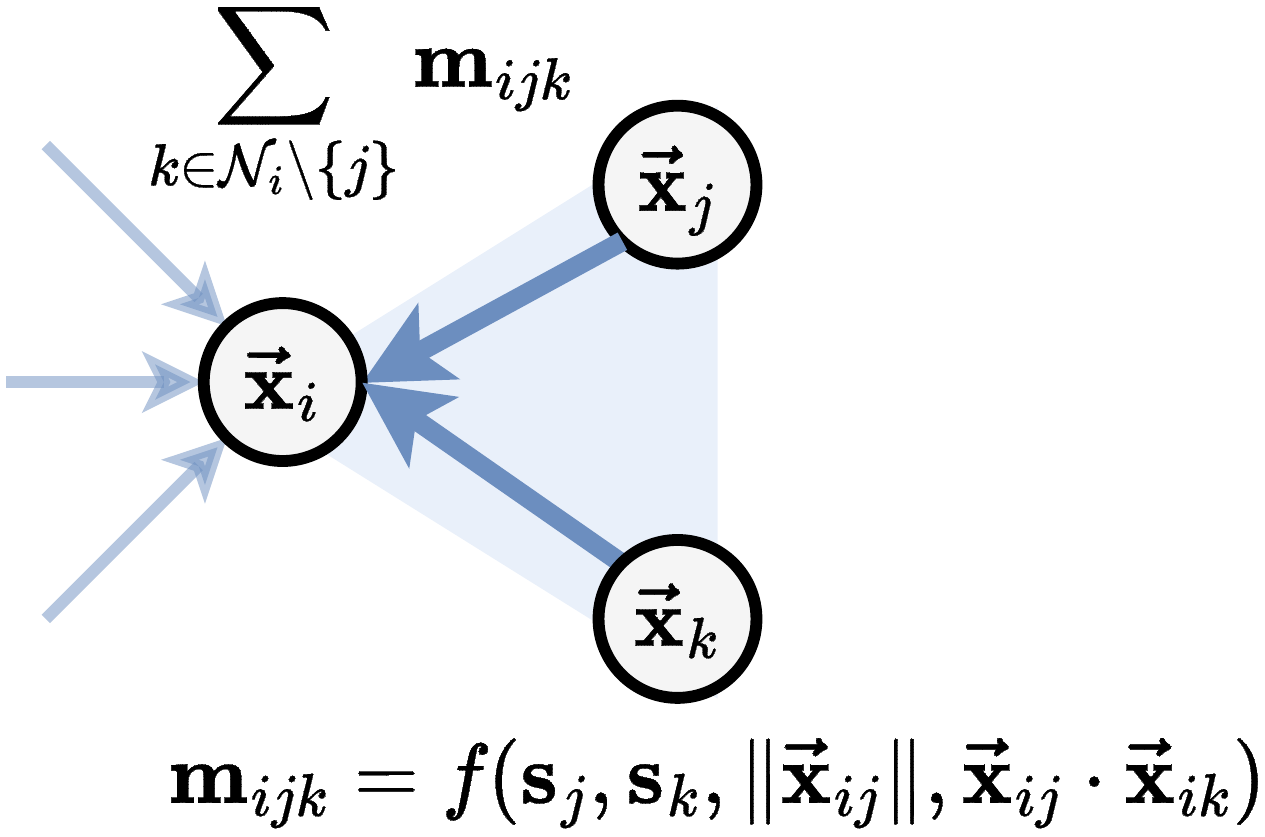}
        \caption{DimeNet}
        \label{fig:dimenet}
    \end{subfigure}
    \hfill
    \begin{subfigure}[b]{0.25\linewidth}
        \centering
        \includegraphics[width=\linewidth]{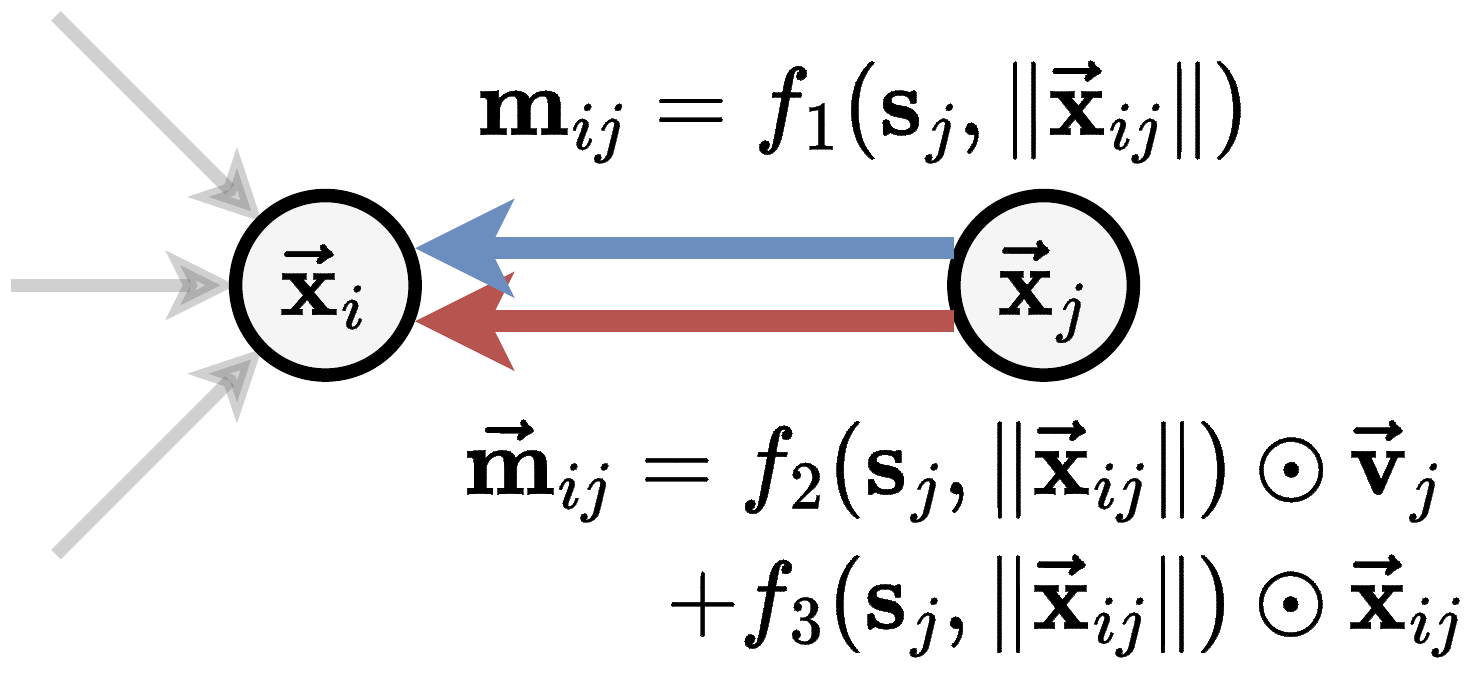}
        \vspace{0.9pt}
        \caption{PaiNN}
        \label{fig:painn}
    \end{subfigure}
    \hfill
    \begin{subfigure}[b]{0.22\linewidth}
        \centering
        \includegraphics[width=\linewidth]{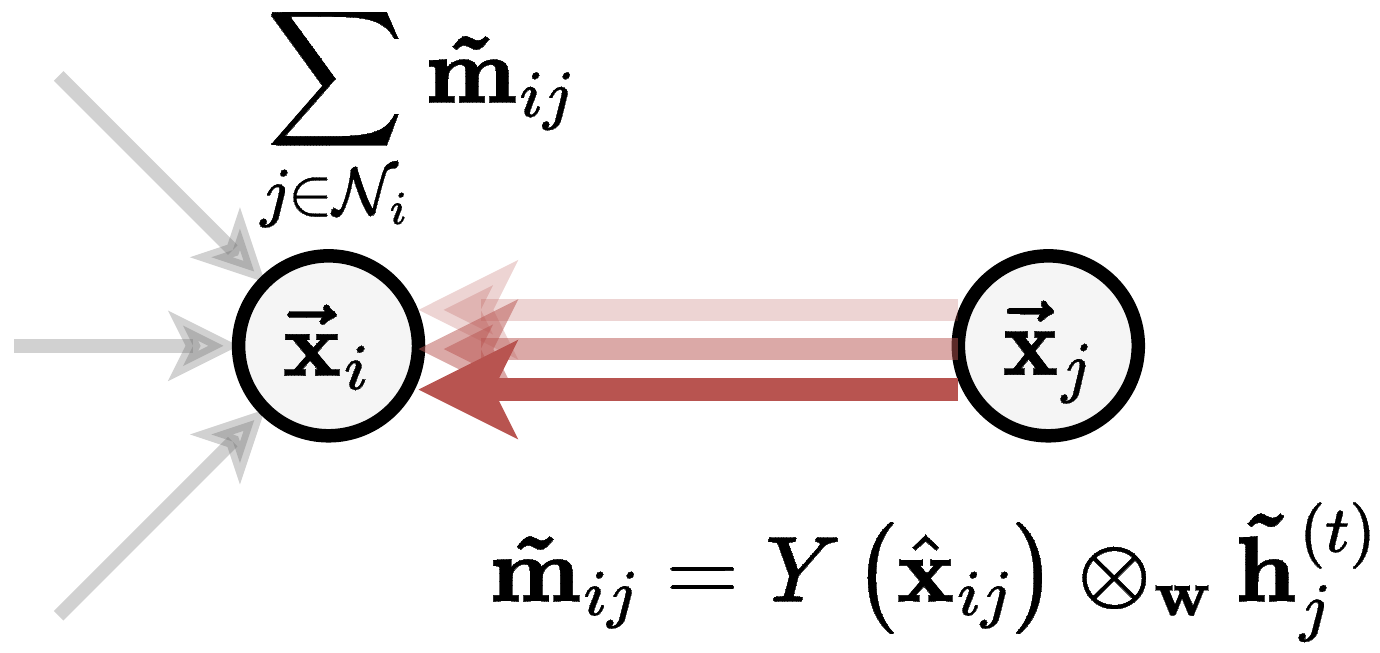}
        \vspace{7pt}
        \caption{Tensor Field Network}
        \label{fig:tfn}
    \end{subfigure}
        \caption{\textbf{Geometric GNN message passing. }
        $\fG$-invariant layers only propagate local scalar quantities such as distances (SchNet, \eqref{eq:schnet}) or distances and angles (DimeNet, \eqref{eq:dimenet}). 
        In contrast, $\fG$-equivariant layers propagated geometric quantities such as vectors and relative positions (PaiNN, \eqref{eq:painn-v}) or higher order tensors (Tensor Field Network, \eqref{eq:e3nn-1}).
        }
       \label{fig:schnet-dimenet-painn}
\end{figure}

%%%

\subsection{$\fG$-equivariant GNNs using Cartesian vectors}

% \textbf{$\fG$-equivariant GNNs using Cartesian vectors. } 
$\fG$-equivariant GNN layers update both scalar and vector features by propagating scalar as well as vector messages, $\vm_i^{(t)}$ and $\vec{\vm}_i^{(t)}$, respectively:
\begin{align}
    \label{eq:gnn-equiv-app}
    \vm_i^{(t)}, \vec{\vm}_i^{(t)}     & \defeq \textsc{Agg} \left( \ldblbrace (\vs_i^{(t)}, \vec{\vv}_i^{(t)}), (\vs_j^{(t)},  \vec{\vv}_j^{(t)}), \vec{\vx}_{ij} \mid j \in \mathcal{N}_i \rdblbrace \right) & \text{(Aggregate)} \\
    \vs_i^{(t+1)}, \vec{\vv}_i^{(t+1)} & \defeq \textsc{Upd} \left( (\vs_i^{(t)}, \vec{\vv}_i^{(t)}) \ , \ (\vm_i^{(t)}, \vec{\vm}_i^{(t)}) \right)                                                         & \text{(Update)}
    % \vs_i^{(t+1)}, \vec{\vv}_i^{(t+1)} & \defeq \textsc{Upd} \left( (\vs_i^{(t)}, \vec{\vv}_i^{(t)}) \ , \ \textsc{Agg} \left( \ldblbrace (\vs_i^{(t)}, \vs_j^{(t)},  \vec{\vv}_i^{(t)},  \vec{\vv}_j^{(t)}, \vec{\vx}_{ij}) \mid j \in \mathcal{N}_i \rdblbrace \right) \right).
\end{align}
% For \textit{e.g.}, 
\textbf{PaiNN} \citep{schutt2021equivariantmp} interaction layers aggregate scalar and vector messages via learnt filters conditioned on the relative distance (2-body invariants):
\begin{align}
\label{eq:painn-s}
    \vm_i^{(t)} & \defeq \vs_i^{(t)} + \sum_{j \in \mathcal{N}_i} f_1 \left( \vs_j^{(t)} , \ \Vert \vec{\vx}_{ij} \Vert \right)                                                                                                                                               \\
\label{eq:painn-v}
    \vec{\vm}_i^{(t)} & \defeq \vec{\vv}_i^{(t)} + \sum_{j \in \mathcal{N}_i} f_2 \left( \vs_j^{(t)} , \ \Vert \vec{\vx}_{ij} \Vert \right) \odot \vec{\vv}_j^{(t)} + \sum_{j \in \mathcal{N}_i} f_3 \left( \vs_j^{(t)} , \ \Vert \vec{\vx}_{ij} \Vert \right) \odot \vec{\vx}_{ij}
\end{align}
\textbf{E-GNN} \citep{satorras2021n} and \textbf{GVP-GNN} \citep{jing2020learning} use similar operations.
The update step applies a gated non-linearity \citep{weiler20183d} on the vector features, which learns to scale their magnitude using their norm concatenated with the scalar features:
\begin{align}
    \label{eq:painn-u}
    \vs_i^{(t+1)} \defeq \vm_i^{(t)} + f_4 \left( \vm_i^{(t)}, \Vert \vec{\vm}_i^{(t)} \Vert \right) , \quad\quad
    \vec{\vv}_i^{(t+1)} \defeq \vec{\vm}_i^{(t)} + f_5 \left( \vm_i^{(t)}, \Vert \vec{\vm}_i^{(t)} \Vert \right) \odot \vec{\vm}_i^{(t)} .
\end{align}
The update step implicitly builds 3-body invariants as taking the norm $\Vert \vec{\vm}_i^{(t)} \Vert = \langle  \vec{\vm}_i^{(t)}, \vec{\vm}_i^{(t)} \rangle$ computes a weighted sum of inner products $\langle \vec{\vx}_{ij}, \vec{\vx}_{ik} \rangle$ for all pairs $j, k \in \mathcal{N}_i$.

The updated scalar features are both $\fG$-invariant and $T(d)$-invariant as the only geometric information used is the relative distances, while the updated vector features are $\fG$-equivariant and $T(d)$-invariant as they aggregate $\fG$-equivariant, $T(d)$-invariant vector quantities from the neighbours.

%%%

\subsection{$\fG$-equivariant GNNs using spherical tensors}

% \textbf{$\fG$-equivariant GNNs using spherical tensors. }
Another example of $\fG$-equivariant GNNs is the \texttt{e3nn} framework \citep{geiger2022e3nn}, which can be used to instantiate \textbf{Tensor Field Network} \citep{thomas2018tensor}, \textbf{Cormorant} \citep{anderson2019cormorant}, \textbf{SE(3)-Transformers} \citep{fuchs2020se}, \textbf{SEGNN} \citep{brandstetter2022geometric}\footnote{We recommend the appendix of \citet{brandstetter2022geometric} for a clear and concise introduction to this class of models.}, and \textbf{MACE} \citep{batatia2022mace}.

These models use higher order spherical tensors $\tilde \vh_{i,l} \in \mathbb{R}^{2l+1 \times f}$ as node feature, starting from order $l = 0$ up to arbitrary $l = L$.
The first two orders correspond to scalar features $\vs_i$ and vector features $\vec{\vv}_i$, respectively.
The higher order tensors $\tilde \vh_{i}$ are updated via tensor products $\otimes$ of neighbourhood features $\tilde \vh_{j}$ for all $j \in \mathcal{N}_i$ with the higher order spherical harmonic representations $Y$ of the relative displacement $\frac{\vec{\vx}_{ij}}{\Vert \vec{\vx}_{ij} \Vert} = \hat{\vx}_{ij}$:
\begin{align}
\label{eq:e3nn-1}
    \tilde \vh_{i}^{(t+1)} & \defeq \tilde \vh_{i}^{(t)} + \sum_{j \in \mathcal{N}_i} Y \left( \hat{\vx}_{ij} \right) \otimes_{\vw} \tilde \vh_{j}^{(t)} ,
\end{align}
where the weights $\vw$ of the tensor product are computed via a learnt radial basis function of the relative distance, \textit{i.e.} $\vw = f \left( \Vert \vec{\vx}_{ij} \Vert \right)$.
To obtain the entry $m_3 \in \{ -l_3, \dots, +l_3 \}$ for the order-$l_3$ part of the updated higher order tensors $\tilde \vh_{i}^{(t+1)}$, we can expand the tensor product in \eqref{eq:e3nn-1} as:
\begin{align}
\label{eq:e3nn-2}
    \tilde \vh_{i,l_3 m_3}^{(t+1)} & \defeq \tilde \vh_{i,l_3 m_3}^{(t)} + \sum_{l_1 m_1, l_2 m_2}^{l_3 m_3} C_{l_1 m_1, l_2 m_2}^{l_3 m_3} \sum_{j \in \mathcal{N}_i} f_{l_1 l_2 l_3}\left( \Vert \vec{\vx}_{ij} \Vert \right) Y_{l_1}^{m_1}\left( \hat{\vx}_{ij} \right) \tilde \vh_{j,l_2 m_2}^{(t)} ,
\end{align}
where $C_{l_1 m_1, l_2 m_2}^{l_3 m_3}$ are the Clebsch-Gordan coefficients ensuring that the updated features are $\fG$-equivariant.
Notably, when restricting the tensor product to only scalars (up to $l = 0$), we obtain updates of the form similar to \eqref{eq:schnet}.
Similarly, when using only scalars and vectors (up to $l = 1$), we obtain updates of the form similar to \eqref{eq:painn-s} and \eqref{eq:painn-v}.

\textbf{MACE} \citep{batatia2022mace} provides an efficient approach to computing high $k$-body order features in the \texttt{e3nn} framework via Atomic Cluster Expansion \citep{dusson2019atomic}:
They first aggregate neighbourhood features analogous to \eqref{eq:e3nn-1} (the $A$ functions in \citet{batatia2022mace} (eq.9)) and then take $k-1$ repeated self-tensor products of these neighbourhood features. 
In our formalism, this corresponds to:
\begin{align}
\label{eq:e3nn-3}
    \tilde \vh_{i}^{(t+1)} & \defeq \underbrace {\tilde \vh_{i}^{(t+1)} \otimes_{\vw} \dots \otimes_{\vw} \tilde \vh_{i}^{(t+1)} }_\text{$k-1$ times} \ ,
\end{align}
% where the weights $\vw$ of the tensor product are trainable parameters.
This approach saves the effort of having to symmetrise or generate all $k$-tuples in more standard many-body expansions \textit{s.a.} DimeNet \citep{Gasteiger2020directional}. 
As an analogy, \eqref{eq:e3nn-3} amounts to calculating the product $(a + b + \dots)^k$, which implicitly includes terms such as $a^l b^{k-l}$, instead of calculating each of the $a^l b^{k-l}$ terms individually.

\section{Details on Synthetic Experiments}
\label{app:setup}

\textbf{Architectures. }
The \texttt{geometric-gnn-dojo}\footnote{Code available on GitHub: \url{https://github.com/chaitjo/geometric-gnn-dojo}} provides unified implementations of several popular geometric GNN architectures characterised by GWL:
\begin{itemize}
    \item $\fG$-invariant GNNs: SchNet \citep{schutt2018schnet}, DimeNet \citep{Gasteiger2020directional}, and SphereNet \citep{liu2022spherical};
    \item $\fG$-equivariant GNNs using cartesian vectors: E-GNN \citep{satorras2021n} and GVP-GNN \citep{jing2020learning};
    \item $\fG$-equivariant GNNs using spherical tensors: TFN \citep{thomas2018tensor} and MACE \citep{batatia2022mace}.
\end{itemize}
% (1) $\fG$-invariant GNNs: SchNet \citep{schutt2018schnet}, DimeNet \citep{Gasteiger2020directional}, and SphereNet \citep{liu2022spherical};
% (2) $\fG$-equivariant GNNs using cartesian vectors: E-GNN \citep{satorras2021n} and GVP-GNN \citep{jing2020learning};
% and
% (3) $\fG$-equivariant GNNs using spherical tensors: TFN \citep{thomas2018tensor} and MACE \citep{batatia2022mace}.

\textbf{Tasks. }
We design synthetic experiments to highlight practical challenges in building expressive geometric GNNs:
\begin{itemize}
    \item \emph{Distinguishing $k$-chains}, which test a model's ability to propagate geometric information non-locally and demonstrate geometric oversquashing with increased depth; see Table \ref{tab:kchains}.
    \item \emph{Rotationally symmetric structures}, which test a layer's ability to identify neighbourhood orientation and highlight the utility of higher order tensors in $\fG$-equivariant GNNs; see Table \ref{tab:rotsym}.
    \item \emph{Counterexamples from \citet{pozdnyakov2020incompleteness}}, which test a layer's ability to create distinguishing fingerprints for local neighbourhoods and highlight the need for higher body order of scalarisation; see Table \ref{tab:ihash}.
\end{itemize}
% (1) \emph{Distinguishing $k$-chains}, which test a model's ability to propagate geometric information non-locally and demonstrate oversquashing with increased depth; see Table \ref{tab:kchains}.
% (2) \emph{Rotationally symmetric structures}, which test a layer's ability to identify neighbourhood orientation and highlight the utility of higher order tensors in $\fG$-equivariant GNNs; see Table \ref{tab:rotsym}.
% (3) \emph{Counterexamples from \citet{pozdnyakov2020incompleteness}}, which test a layer's ability to create distinguishing fingerprints for local neighbourhoods and highlight the need for higher body order of scalarisation; see Table \ref{tab:ihash}.

\textbf{Hyperparameters. }
For SchNet and DimeNet, we use the implementation from PyTorch Geometric \citep{Fey/Lenssen/2019}.
For SphereNet, E-GNN, GVP-GNN, and MACE, we adapt implementations from the respective authors.
Our TFN implementation is based on \texttt{\small e3nn} \citep{geiger2022e3nn}, and we also re-implement MACE by incorporating the \texttt{\small EquivariantProductBasisBlock} from its authors into our TFN layer.
We set scalar feature channels to 128 for SchNet, DimeNet, SphereNet, and E-GNN.
We set scalar/vector/tensor feature channels to 64 for GVP-GNN, TFN, MACE.
TFN and MACE use order $L=2$ tensors by default. 
MACE uses local body order 4 by default.
We train all models for 100 epochs using the Adam optimiser, with an initial learning rate $1e-4$, which we reduce by a factor of 0.9 and patience of 25 epochs when the performance plateaus.
All results are averaged across 10 random seeds.

%%%

\begin{table}[h!]
    \centering
     \resizebox{0.6\linewidth}{!}{
    \begin{tabular}{clccc}
        \toprule
        & & \multicolumn{3}{c}{\textbf{Counterexample from \citet{pozdnyakov2020incompleteness}}} \\
        & \multirow{2}{*}{\textbf{GNN Layer}} & 2-body & 3-body & 4-body \\
        & & & \gray{(Fig.1(b))} & \gray{(Fig.2(f))} \\
        \midrule
        \multirow{3}{*}{\rotatebox[origin=c]{90}{Inv.}} 
        & SchNet$_{\text{2-body}}$ & \cellcolor{red!10} 50.0 ± 0.0 & 50.0 ± 0.0 & 50.0 ± 0.0 \\
        & DimeNet$_{\text{3-body}}$ & \cellcolor{green!10} \textbf{100.0 ± 0.0} & \cellcolor{red!10} 50.0 ± 0.0 & 50.0 ± 0.0 \\
        & SphereNet$_{\text{4-body}}$ & \cellcolor{green!10} \textbf{100.0 ± 0.0} & \cellcolor{green!10} \textbf{100.0 ± 0.0} & \cellcolor{red!10} 50.0 ± 0.0 \\
        \midrule
        \multirow{6}{*}{\rotatebox[origin=c]{90}{$O(3)$-Equiv.}} 
        & E-GNN$_{\text{2-body}}$ & \cellcolor{red!10} 50.0 ± 0.0 & 50.0 ± 0.0 & 50.0 ± 0.0 \\
        & GVP-GNN$_{\text{3-body}}$ & \cellcolor{green!10} \textbf{100.0 ± 0.0} & \cellcolor{red!10} 50.0 ± 0.0 & 50.0 ± 0.0 \\
        & TFN$_{\text{2-body}}$ & \cellcolor{red!10} 50.0 ± 0.0 & 50.0 ± 0.0 & 50.0 ± 0.0 \\
        & MACE$_{\text{3-body}}$ & \cellcolor{green!10} \textbf{100.0 ± 0.0} & 50.0 ± 0.0 & 50.0 ± 0.0 \\
        & MACE$_{\text{4-body}}$ & \cellcolor{green!10} \textbf{100.0 ± 0.0} & \cellcolor{green!10} \textbf{100.0 ± 0.0} & 50.0 ± 0.0 \\
        & MACE$_{\text{5-body}}$ & \cellcolor{green!10} \textbf{100.0 ± 0.0} & \cellcolor{green!10} \textbf{100.0 ± 0.0} & \cellcolor{green!10} \textbf{100.0 ± 0.0} \\
        \bottomrule
    \end{tabular}
    }
    \caption{\textit{Counterexamples from \citet{pozdnyakov2020incompleteness}.}
    This task evaluates single layer geometric GNNs at distinguishing counterexample structures that are indistinguishable using $k$-body scalarisation.
    \textbf{Geometric GNN layers with body order $k$ cannot distinguish the corresponding counterexample.}
    The 3-body counterexample is from Fig.1(b) \citep{pozdnyakov2020incompleteness}, 4-body is from Fig.2(f) \citep{pozdnyakov2020incompleteness}, and 2-body is based on the two local neighbourhoods in our running example.
    }
    \label{tab:ihash}
\end{table}

\section{Geometric GNN Design Space Proofs}
\label{app:design-space}

\subsection{Role of depth (Section \ref{sec:designspace:depth})}

The following results are a consequence of the construction of GWL as well as the definitions of $k$-hop distinct and $k$-hop identical geometric graphs.
Note that $k$-hop distinct geometric graphs are also $(k+1)$-hop distinct. Similarly, $k$-hop identical geometric graphs are also $(k-1)$-hop identical, but not necessarily $(k+1)$-hop distinct.

Given two distinct neighbourhoods $\gN_1$ and $\gN_2$, the $\fG$-orbits of the corresponding geometric multisets $\vg_1$ and $\vg_2$ are mutually exclusive, \textit{i.e.} $\gO_{\fG}(\vg_1) \cap \gO_{\fG}(\vg_2) \equiv \emptyset$. By the properties of $\textsc{I-Hash}$ this implies $c_1 \neq c_2 $.
Conversely, if $\gN_1$ and $\gN_2$ were identical up to group actions, their $\fG$-orbits would overlap, \textit{i.e.} $\vg_1 = \fg \ \vg_2$ for some $\fg \in \fG$ and $\gO_{\fG}(\vg_1) = \gO_{\fG}(\vg_2) \ \Rightarrow \ c_1 = c_2$.

\gwlcanprop*
\begin{proof}[Proof of Proposition \ref{prop:gwl-can}]
    The $k$-th iteration of GWL identifies the $\fG$-orbit of the $k$-hop subgraph $\gN_i^{(k)}$ at each node $i$ via the geometric multiset $\vg_i^{(k)}$.
    $\mathcal{G}_1$ and $\mathcal{G}_2$ being $k$-hop distinct implies that there exists some bijection $b$ and some node $i \in \mathcal{V}_1, b(i) \in \mathcal{V}_2$ such that the corresponding $k$-hop subgraphs $\gN_i^{(k)}$ and $\gN_{b(i)}^{(k)}$ are distinct.
    Thus, the $\fG$-orbits of the corresponding geometric multisets $\vg_i^{(k)}$ and $\vg_{b(i)}^{(k)}$ are mutually exclusive, \textit{i.e.} $\gO_{\fG}(\vg_i^{(k)}) \cap \gO_{\fG}(\vg_{b(i)}^{(k)}) \equiv \emptyset \ \Rightarrow \ c_i^{(k)} \neq c_{b(i)}^{(k)} $.
    Thus, $k$ iterations of GWL are sufficient to distinguish $\mathcal{G}_1$ and $\mathcal{G}_2$.
\end{proof}

\gwlcannotprop*
\begin{proof}[Proof of Proposition \ref{prop:gwl-cannot}]
    The $k$-th iteration of GWL identifies the $\fG$-orbit of the $k$-hop subgraph $\gN_i^{(k)}$ at each node $i$ via the geometric multiset $\vg_i^{(k)}$.
    $\mathcal{G}_1$ and $\mathcal{G}_2$ being $k$-hop identical implies that for all bijections $b$ and all nodes $i \in \mathcal{V}_1, b(i) \in \mathcal{V}_2$, the corresponding $k$-hop subgraphs $\gN_i^{(k)}$ and $\gN_{b(i)}^{(k)}$ are identical up to group actions.
    Thus, the $\fG$-orbits of the corresponding geometric multisets $\vg_i^{(k)}$ and $\vg_{b(i)}^{(k)}$ overlap, \textit{i.e.} $\gO_{\fG}(\vg_i^{(k)}) = \gO_{\fG}(\vg_{b(i)}^{(k)}) \ \Rightarrow \ c_i^{(k)} = c_{b(i)}^{(k)}$.
    Thus, up to $k$ iterations of GWL cannot distinguish $\mathcal{G}_1$ and $\mathcal{G}_2$.
\end{proof}

\igwlcanprop*
\begin{proof}[Proof of Proposition \ref{prop:igwl-can}]
    Each iteration of IGWL identifies the $\fG$-orbit of the $1$-hop local neighbourhood $\gN_i^{(k=1)}$ at each node $i$.
    $\mathcal{G}_1$ and $\mathcal{G}_2$ being $1$-hop distinct implies that there exists some bijection $b$ and some node $i \in \mathcal{V}_1, b(i) \in \mathcal{V}_2$ such that the corresponding $1$-hop local neighbourhoods $\gN_i^{(1)}$ and $\gN_{b(i)}^{(1)}$ are distinct.
    Thus, the $\fG$-orbits of the corresponding geometric multisets $\vg_i^{(1)}$ and $\vg_{b(i)}^{(1)}$ are mutually exclusive, \textit{i.e.} $\gO_{\fG}(\vg_i^{(1)}) \cap \gO_{\fG}(\vg_{b(i)}^{(1)}) \equiv \emptyset \ \Rightarrow \ c_i^{(1)} \neq c_{b(i)}^{(1)} $.
    Thus, 1 iteration of IGWL is sufficient to distinguish $\mathcal{G}_1$ and $\mathcal{G}_2$.
\end{proof}

%%%

%%%

\igwlcannotprop*
\begin{proof}[Proof of Proposition \ref{prop:igwl-cannot}]
    Each iteration of IGWL identifies the $\fG$-orbit of the $1$-hop local neighbourhood $\gN_i^{(k=1)}$ at each node $i$, but cannot identify $\fG$-orbits beyond $1$-hop by the construction of IGWL as no geometric information is propagated.
    $\mathcal{G}_1$ and $\mathcal{G}_2$ being $1$-hop identical implies that for all bijections $b$ and all nodes $i \in \mathcal{V}_1, b(i) \in \mathcal{V}_2$, the corresponding $1$-hop local neighbourhoods $\gN_i^{(k)}$ and $\gN_{b(i)}^{(k)}$ are identical up to group actions.
    Thus, the $\fG$-orbits of the corresponding geometric multisets $\vg_i^{(1)}$ and $\vg_{b(i)}^{(1)}$ overlap, \textit{i.e.} $\gO_{\fG}(\vg_i^{(1)}) = \gO_{\fG}(\vg_{b(i)}^{(1)}) \ \Rightarrow \ c_i^{(k)} = c_{b(i)}^{(k)}$.
    Thus, any number of IGWL iterations cannot distinguish $\mathcal{G}_1$ and $\mathcal{G}_2$.
\end{proof}

\nonisoprop*
\begin{proof}[Proof of Proposition \ref{prop:noniso}]
    We assume that a geometric graph $\mathcal{G} = ( \mA, \mS, \vec{\mV}, \vec{\mX} )$ is constructed from a point cloud $( \mS, \vec{\mV}, \vec{\mX} )$ using a predetermined radial cutoff $r$.
    Thus, the adjacency matrix is defined as $a_{ij} = 1 \text{ if } \Vert \vec{\vx}_i-\vec{\vx}_j \Vert_2 \leq r$, or $0$ otherwise, for all $a_{ij} \in \mA$.
    Such construction procedures are conventional for geometric graphs in biochemistry and material science.

    Given geometric graphs $\mathcal{G}_1$ and $\mathcal{G}_2$ where the underlying attributed graphs are non-isomorphic, identify $k_{\text{Max}}$ the maximum of the graph diameters of $\mathcal{G}_1$ and $\mathcal{G}_2$, and chose any arbitrary nodes $i \in \gV_1, j \in \gV_2$.
    We can define the $k_{\text{Max}}$-hop subgraphs $\mathcal{N}_i^{(k_{\text{Max}})}$ and $\mathcal{N}_j^{(k_{\text{Max}})}$ at $i$ and $j$, respectively.
    Thus, $\gN_i^{(k_{\text{Max}})} = \gV_1$ for all $i \in \gV_1$, and $\gN_j^{(k_{\text{Max}})} = \gV_2$ for all $j \in \gV_2$.
    Due to the assumed construction procedure of geometric graphs, $\mathcal{N}_i^{(k_{\text{Max}})}$ and $\mathcal{N}_j^{(k_{\text{Max}})}$ must be distinct.
    Otherwise, if $\mathcal{N}_i^{(k_{\text{Max}})}$ and $\mathcal{N}_j^{(k_{\text{Max}})}$ were identical up to group actions, the sets $( \mS_1, \vec{\mV}_1, \vec{\mX}_1 )$ and $( \mS_2, \vec{\mV}_2, \vec{\mX}_2 )$ would have yielded isomorphic graphs.

    The $k_{\text{Max}}$-th iteration of GWL identifies the $\fG$-orbit of the $k_{\text{Max}}$-hop subgraph $\gN_i^{(k_{\text{Max}})}$ at each node $i$ via the geometric multiset $\vg_i^{(k_{\text{Max}})}$.
    As $\mathcal{N}_i^{(k_{\text{Max}})}$ and $\mathcal{N}_j^{(k_{\text{Max}})}$ are distinct for any arbitrary nodes $i \in \gV_1, j \in \gV_2$, the $\fG$-orbits of the corresponding geometric multisets $\vg_i^{(k_{\text{Max}})}$ and $\vg_{j}^{(k_{\text{Max}})}$ are mutually exclusive, \textit{i.e.} $\gO_{\fG}(\vg_i^{(k_{\text{Max}})}) \cap \gO_{\fG}(\vg_{j}^{(k_{\text{Max}})}) \equiv \emptyset \ \Rightarrow \ c_i^{(k_{\text{Max}})} \neq c_{j}^{(k_{\text{Max}})} $.
    Thus, $k_{\text{Max}}$ iterations of GWL are sufficient to distinguish $\mathcal{G}_1$ and $\mathcal{G}_2$.
\end{proof}

%%%

\subsection{Limitations of invariant message passing (Section \ref{sec:designspace:inv-equiv})}

\stricttheo*
\begin{proof}[Proof of Theorem \ref{theo:strict}]

    Firstly, we can show that the GWL class contains IGWL if GWL can learn the identity when updating $\vg_i$ for all $i \in \mathcal{V}$, \textit{i.e.} $\vg_i^{(t)} = \vg_i^{(t-1)} = {\vg}_i^{(0)} \equiv (\vs_i, \vec{\vv}_i)$.
    Thus, GWL is at least as powerful as IGWL, which does not update $\vg_i$.

    Secondly, to show that GWL is strictly more powerful than IGWL, it suffices to show that there exist a pair of geometric graphs that can be distinguished by GWL but not by IGWL.
    We may consider any $k$-hop distinct geometric graphs for $k > 1$, where the underlying attributed graphs are isomorphic.
    Proposition \ref{prop:gwl-can} states that GWL can distinguish any such graphs, while Proposition \ref{prop:igwl-cannot} states that IGWL cannot distinguish them.
    An example is the pair of graphs in Figures \ref{fig:gwl} and \ref{fig:igwl}.
\end{proof}

%%%

\globalgeomprop*
\begin{proof}[Proof of Proposition \ref{prop:globalgeom}]
    Following \citet{garg2020generalization}, we say that a class of models \textit{decides} a geometric graph property if there exists a model belonging to this class such that for any two geometric graphs that differ in the property, the model is able to distinguish the two geometric graphs.

%%%

\begin{figure}[t!]
    \centering
    \includegraphics[width=0.65\linewidth]{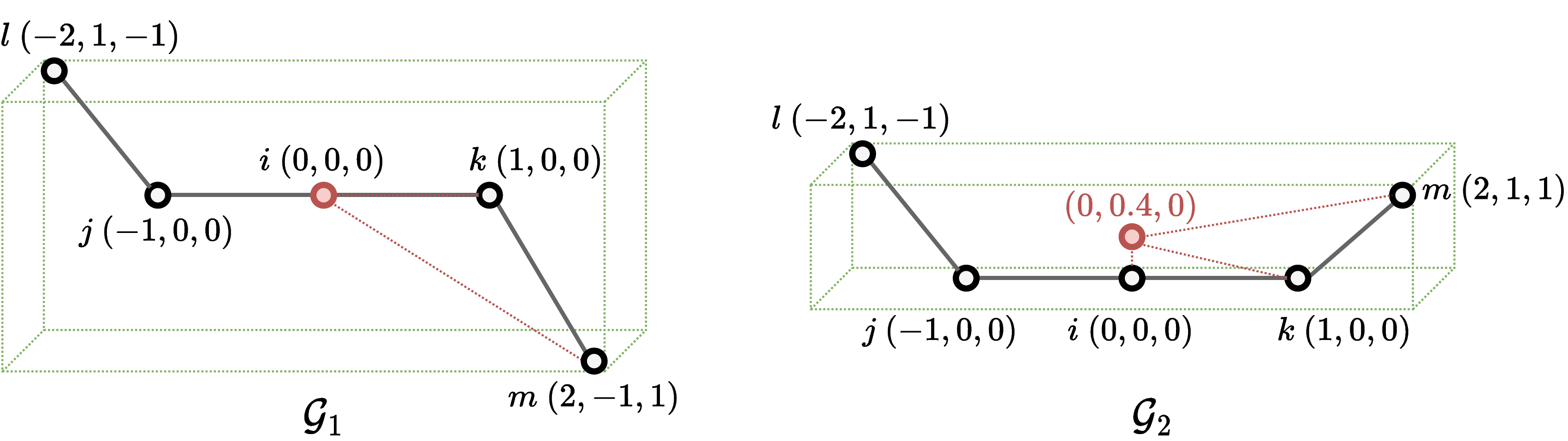}
    \caption{
        Two geometric graphs for which IGWL and $\fG$-invariant GNNs cannot distinguish their  perimeter, surface area, volume of the bounding box/sphere, distance from the centroid, and dihedral angles.
        The centroid is denoted by a red point and distances from it are denoted by dotted red lines.
        The bounding box enclosing the geometric graph is denoted by the dotted green lines.}
    \label{fig:globalgeom}
\end{figure}

%%%

    In Figure \ref{fig:globalgeom}, we provide an example of two geometric graphs that demonstrate the proposition.
    $\mathcal{G}_1$ and $\mathcal{G}_2$ differ in the following geometric graph properties:
    \begin{itemize}
        \item Perimeter, surface area, and volume of the bounding box enclosing the geometric graph\footnote{The same result applies for the bounding sphere, not shown in the figure.}: (32 units, 40 units$^2$, 16 units$^3$) vs. (28 units, 24 units$^2$, 8 units$^3$).
        \item Multiset of distances from the centroid or centre of mass: $\{ 0.00, 1.00, 1.00, 2.45, 2.45 \}$ vs. $\{ 0.40, 1.08, 1.08, 2.32, 2.32  \}$.
        \item Dihedral angles: $\angle (ljkm) = \frac{ (\vec{\vx}_{jk} \times \vec{\vx}_{lj}) \cdot (\vec{\vx}_{jk} \times \vec{\vx}_{mk}) }{ \vert\vec{\vx}_{jk} \times \vec{\vx}_{lj}\vert \vert\vec{\vx}_{jk} \times \vec{\vx}_{mk}\vert }$ are clearly different for the two graphs.
    \end{itemize}
    However, according to Proposition \ref{prop:igwl-cannot} and Theorem \ref{theo:invupper}, both IGWL and $\fG$-invariant GNNs cannot distinguish these two geometric graphs, and therefore, cannot decide all these properties.

%%%

\begin{figure*}[t!]
    \centering
    \includegraphics[width=0.7\linewidth]{}
    \caption{\textbf{Geometric Computation Trees for GWL and IGWL.}
        Unlike GWL, geometric orientation information cannot flow from the leaves to the root in IGWL, restricting its expressive power.
        IGWL cannot distinguish $\mathcal{G}_1$ and $\mathcal{G}_2$ as all $1$-hop neighbourhoods are computationally identical.}
    \label{fig:trees}
\end{figure*}
    
%%%

    We can also show this via a geometric version of computation trees \citep{garg2020generalization}, for any number of IGWL or $\fG$-invariant GNN iterations, as illustrated in Figure \ref{fig:trees}.
    A computation tree $\mathcal{T}_i^{(t)}$ represents the maximum information contained in GWL/IGWL colours or GNN features for node $i$ at iteration $t$ by an `unrolling' of the message passing procedure.
    GWL, IGWL, and the corresponding classes of GNNs can be intuitively understood as colouring geometric computation trees.
    
    Geometric computation trees are constructed recursively: 
    $\mathcal{T}_i^{(0)} = (\vs_i, \vec{\vv}_i)$ for all $i \in \mathcal{V}$.
    For $t > 0$, we start with a root node $(\vs_i, \vec{\vv}_i)$ and add a child subtree $\mathcal{T}_j^{(t-1)}$ for all $j \in \mathcal{N}_i$ along with the relative position $\vec{\vx}_{ij}$ along the edge.
    To obtain the root node's embedding or colour, both scalar and geometric information is propagated from the leaves up to the root.
    Thus, if two nodes have identical geometric computation trees, they will be mapped to the same node embedding or colour.
    
    Critically, geometric orientation information cannot flow from one level to another in the computation trees for IGWL and $\fG$-invariant GNNs, as they only update scalar information.
    In the recursive construction procedure, we must insert a connector node $(\vs_j, \vec{\vv}_j)$ before adding the child subtree $\mathcal{T}_j^{(t-1)}$ for all $j \in \mathcal{N}_i$ and prevent geometric information propagation between them.

    Following the construction procedure for the geometric graphs in Figure \ref{fig:globalgeom}, we observe that the IGWL computation trees of any pair of isomorphic nodes are identical, as all 1-hop neighbourhoods are computationally identical.
    Therefore, the set of node colours or node scalar features will also be identical, which implies that $\mathcal{G}_1$ and $\mathcal{G}_2$ cannot be distinguished.
\end{proof}

%%%

\limitoffullprop*
\begin{proof}[Proof of Proposition \ref{prop:limit-of-full}]
    We will prove by contradiction.
    Assume that there exist a pair of fully connected geometric graphs $\mathcal{G}_1$ and $\mathcal{G}_2$ which GWL can distinguish, but IGWL cannot.

    If the underlying attributed graphs of $\mathcal{G}_1$ and $\mathcal{G}_2$ are isomorphic, by Proposition \ref{prop:gwl-can} and Proposition \ref{prop:igwl-cannot}, $\mathcal{G}_1$ and $\mathcal{G}_2$ are $1$-hop identical but $k$-hop distinct for some $k > 1$.
    For all bijections $b$ and all nodes $i \in \mathcal{V}_1, b(i) \in \mathcal{V}_2$, the local neighbourhoods $\mathcal{N}_i^{(1)}$ and $\mathcal{N}_{b(i)}^{(1)}$ are identical up to group actions, and $\gO_{\fG}(\vg_i^{(1)}) = \gO_{\fG}(\vg_{b(i)}^{(1)}) \ \Rightarrow \ c_i^{(1)} = c_{b(i)}^{(1)}$.
    Additionally, there exists some bijection $b$ and some nodes $i \in \mathcal{V}_1, b(i) \in \mathcal{V}_2$ such that the $k$-hop subgraphs $\mathcal{N}_i^{(k)}$ and $\mathcal{N}_{b(i)}^{(k)}$ are distinct, and $\gO_{\fG}(\vg_i^{(k)}) \cap \gO_{\fG}(\vg_{b(i)}^{(k)}) \equiv \emptyset \ \Rightarrow \ c_i^{(k)} \neq c_{b(i)}^{(k)} $.
    However, as $\mathcal{G}_1$ and $\mathcal{G}_2$ are fully connected, for any $k$, $\mathcal{N}_i^{(1)} = \mathcal{N}_i^{(k)}$ and $\mathcal{N}_{b(i)}^{(1)} = \mathcal{N}_{b(i)}^{(k)}$ are identical up to group actions.
    Thus, $\gO_{\fG}(\vg_i^{(1)}) = \gO_{\fG}(\vg_i^{(k)}) = \gO_{\fG}(\vg_{b(i)}^{(1)}) = \gO_{\fG}(\vg_{b(i)}^{(k)}) \ \Rightarrow \ c_i^{(1)} = c_i^{(k)} = c_{b(i)}^{(k)} = c_{b(i)}^{(k)}$.
    This is a contradiction.

    If $\mathcal{G}_1$ and $\mathcal{G}_2$ are non-isomorphic and fully connected, for any arbitrary $i \in \gV_1, j \in \gV_2$ and any $k$-hop neighbourhood, we know that $\gN_i^{(1)} = \gN_i^{(k)}$ and $\gN_j^{(1)} = \gN_j^{(k)}$.
    Thus, a single iteration of GWL and IGWL identify the same $\fG$-orbits and assign the same node colours, \textit{i.e.} $\gO_{\fG}(\vg_i^{(1)}) = \gO_{\fG}(\vg_i^{(k)}) \ \Rightarrow \ c_i^{(1)} = c_i^{(k)}$ and $\gO_{\fG}(\vg_j^{(1)}) = \gO_{\fG}(\vg_j^{(k)})\ \Rightarrow \ c_j^{(1)} = c_j^{(k)}$.
    This is a contradiction.
\end{proof}

%%%

\subsection{Role of scalarisation body order (Section \ref{sec:designspace:scalarisation})}

\bodyorderprop*
\begin{proof}[Proof of Proposition \ref{prop:bodyorder}]
    As $m$ is the maximum cardinality of all local neighbourhoods $\gN_i$ under consideration, any distinct neighbourhoods $\gN_1$ and $\gN_2$ must have distinct multisets of $m$-body scalars.
    As $\textsc{I-Hash}_{(m)}$ computes scalars involving up to $m$ nodes, it will be able to distinguish any such $\gN_1$ and $\gN_2$.
    Thus, $\textsc{I-Hash}_{(m)}$ is $\fG$-orbit injective.
\end{proof}

\igwlhierarchyprop*
\begin{proof}[Proof of Proposition \ref{prop:igwl-hierarchy}]
    By construction, ${\textsc{I-Hash}}_{(k)}$ computes $\fG$-invariant scalars from all possible tuples of up to $k$ nodes formed by the elements of a neighbourhood and the central node.
    Thus, the ${\textsc{I-Hash}}_{(k)}$ class contains ${\textsc{I-Hash}}_{(k-1)}$, and ${\textsc{I-Hash}}_{(k)}$
    is at least as powerful as ${\textsc{I-Hash}}_{(k-1)}$.
    Thus, the corresponding test ${\text{IGWL}}_{(k)}$ is at least as powerful as ${\text{IGWL}}_{(k-1)}$.

    Secondly, to show that ${\text{IGWL}}_{(k)}$ is strictly more powerful than ${\text{IGWL}}_{(k-1)}$ for $k \leq 5$, it suffices to show that there exist a pair of geometric neighbourhoods that can be distinguished by ${\text{IGWL}}_{(k)}$ but not by ${\text{IGWL}}_{(k-1)}$:
    \begin{itemize}
        \item For $k = 3$ and $\fG = O(3)~\text{or}~SO(3)$, for the local neighbourhood from Figure 1 in \cite{schutt2021equivariantmp}, two configurations with different angles between the neighbouring nodes can be distinguished by ${\text{IGWL}}_{(3)}$ but not by ${\text{IGWL}}_{(2)}$.
        \item For $k = 4$ and $\fG = O(3)~\text{or}~SO(3)$, the pair of local neighbourhoods from Figure 1 in \cite{pozdnyakov2020incompleteness} can be distinguished by ${\text{IGWL}}_{(4)}$ but not by ${\text{IGWL}}_{(3)}$.
        \item For $k = 5$ and $\fG = O(3)$, the pair of local neighbourhoods from Figure 2(e) in \cite{pozdnyakov2020incompleteness} can be distinguished by ${\text{IGWL}}_{(5)}$ but not by ${\text{IGWL}}_{(4)}$.
        \item For $k = 5$ and $\fG = SO(3)$, the pair of local neighbourhoods from Figure 2(f) in \cite{pozdnyakov2020incompleteness} can be distinguished by ${\text{IGWL}}_{(5)}$ but not by ${\text{IGWL}}_{(4)}$.
    \end{itemize}
\end{proof}

\onewligwlprop*
\begin{proof}[Proof of Proposition \ref{prop:one-wl-igwl}]
    Let $c$ and $k$ the colours produced by ${\text{IGWL}}_{(2)}$ and WL, respectively, and let $i$ and $j$ be two nodes belonging to any two graphs like in the statement of the result. We prove the statement inductively.

    Clearly, $c^{(0)}_i = k^{(0)}_i$ for all nodes $i$ and $c^{(0)}_i = c^{(0)}_j$ if and only if $k^{(0)}_i = k^{(0)}_j$. Now, assume that the statement holds for iteration $t$. That is $c^{(t)}_i = c^{(t)}_j$ if and only if $k^{(t)}_i = k^{(t)}_j$ holds for all $i$. Note that $c^{(t+1)}_i = c^{(t+1)}_j$ if and only if $c^{(t)}_i = c^{(t)}_j$ and $\ldblbrace (c^{(t)}_p, \norm{\vec{\vx}_{ip}}) \mid p \in \gN_i \rdblbrace = \ldblbrace (c^{(t)}_p, \norm{\vec{\vx}_{jp}}) \mid p \in \gN_j \rdblbrace$, since the norm of the relative vectors is the only injective invariant that ${\text{IGWL}}_{(2)}$ can compute (up to a scaling). Since all the norms are equal, by the induction hypothesis, this is equivalent to $k^{(t)}_i = k^{(t)}_j$ and $\ldblbrace k^{(t)}_p \mid p \in \gN_i \rdblbrace = \ldblbrace k^{(t)} \mid p \in \gN_j \rdblbrace$. Therefore, this is equivalent to $k^{(t+1)}_i = k^{(t+1)}_j$
\end{proof}

%%%

%%%

\section{Proofs for equivalence between GWL and Geometric GNNs (Section \ref{sec:gwl:equivalence})}
\label{app:gwl-equivalence}

Our proofs adapt the techniques used in \citet{xu2018how, morris2019weisfeiler} for connecting WL with GNNs.
Note that we omit including the relative position vectors $\vec{\vx}_{ij}$ in GWL and geometric GNN updates for brevity, as relative positions vectors can be merged into the vector features.
% It is straightforward to ensure translation equivariance of vector features and coordinates by: (1) subtracting the centre of mass from the coordinates and concatenating the centred coordinates with the vector features; (2) performing the update; and (3) adding the centre of mass back to the updated vector features or coordinates.
% Another approach involves considering the displacement vector among pairs of nodes as an edge vector feature.

\uppertheo*
\begin{proof}[\textbf{Proof of Theorem \ref{theo:upper}}]

    Consider two geometric graphs $\mathcal{G}$ and $\mathcal{H}$.
    The theorem implies that if the GNN graph-level readout outputs $f(\mathcal{G}) \neq f(\mathcal{H})$, then the GWL test will always determine $\mathcal{G}$ and $\mathcal{H}$ to be non-isomorphic, \textit{i.e.} $\mathcal{G} \neq \mathcal{H}$.

    We will prove by contradiction.
    Suppose after $T$ iterations, a GNN graph-level readout outputs $f(\mathcal{G}) \neq f(\mathcal{H})$, but the GWL test cannot decide $\mathcal{G}$ and $\mathcal{H}$ are non-isomorphic, \textit{i.e.} $\mathcal{G}$ and $\mathcal{H}$ always have the same collection of node colours for iterations $0$ to $T$.
    Thus, for iteration $t$ and $t+1$ for any $t = 0 \dots T-1$, $\mathcal{G}$ and $\mathcal{H}$ have the same collection of node colours $\{ c_i^{(t)} \}$ as well as the same collection of neighbourhood geometric multisets $\left\lbrace (c_i^{(t)}, \vg_i^{(t)}) \ , \  \ldblbrace  (c_j^{(t)}, \vg_j^{(t)}) \mid j \in \mathcal{N}_i \rdblbrace \right\rbrace$ up to group actions.
    Otherwise, the GWL test would have produced different node colours at iteration $t+1$ for $\mathcal{G}$ and $\mathcal{H}$ as different geometric multisets get unique new colours.

    We will show that on the same graph for nodes $i$ and $k$, if $(c_{i}^{(t)}, \vg_i^{(t)}) = (c_{k}^{(t)}, \fg \cdot \vg_{k}^{(t)})$, we always have GNN features $(\vs_i^{(t)}, \vec{\vv}_i^{(t)}) = (\vs_k^{(t)}, \mQ_{\fg} \vec{\vv}_k^{(t)})$ for any iteration $t$.
    This holds for $t = 0$ because GWL and the GNN start with the same initialisation.
    Suppose this holds for iteration $t$.
    At iteration $t+1$, if for any $i$ and $k$, $(c_{i}^{(t+1)}, \vg_i^{(t+1)}) = (c_{k}^{(t+1)}, \fg \cdot \vg_{k}^{(t+1)})$, then:
    \begin{align}
        \left\lbrace (c_i^{(t)}, \vg_i^{(t)}) \ , \ \ldblbrace (c_j^{(t)}, \vg_j^{(t)}) \mid j \in \mathcal{N}_i \rdblbrace \right\rbrace & = \left\lbrace (c_k^{(t)}, \fg \cdot \vg_k^{(t)}) \ , \ \ldblbrace (c_j^{(t)}, \fg \cdot \vg_j^{(t)}) \mid j \in \mathcal{N}_k \rdblbrace \right\rbrace
    \end{align}
    By our assumption on iteration $t$,
    \begin{align}
        \left\lbrace (\vs_i^{(t)}, \vec{\vv}_i^{(t)}) \ , \ \ldblbrace (\vs_j^{(t)}, \vec{\vv}_j^{(t)}) \mid j \in \mathcal{N}_i \rdblbrace \right\rbrace & = \left\lbrace (\vs_k^{(t)}, \mQ_{\fg} \vec{\vv}_k^{(t)}) \ , \ \ldblbrace (\vs_j^{(t)}, \mQ_{\fg} \vec{\vv}_j^{(t)}) \mid j \in \mathcal{N}_k \rdblbrace \right\rbrace
    \end{align}
    As the same aggregate and update operations are applied at each node within the GNN, the same inputs, \textit{i.e.} neighbourhood features, are mapped to the same output. Thus, $(\vs_i^{(t+1)}, \vec{\vv}_i^{(t+1)}) = (\vs_k^{(t+1)}, \mQ_{\fg} \vec{\vv}_k^{(t+1)})$.
    By induction, if $(c_{i}^{(t)}, \vg_i^{(t)}) = (c_{k}^{(t)}, \fg \cdot \vg_{k}^{(t)})$, we always have GNN node features $(\vs_i^{(t)}, \vec{\vv}_i^{(t)}) = (\vs_k^{(t)}, \mQ_{\fg} \vec{\vv}_k^{(t)})$ for any iteration $t$.
    This creates valid mappings $\phi_s, \phi_v$ such that $\vs_i^{(t)} = \phi_s(c_i^{(t)})$ and $\vec{\vv}_i^{(t)} = \phi_v(c_i^{(t)}, \vg_i^{(t)})$ for any $i \in \mathcal{V}$.
    % This creates a valid mapping $\textsc{Upd}$ such that $(\vs_i^{(t)}, \vec{\vv}_i^{(t)}) = \textsc{Upd}(c_i^{(t)}, \vg_i^{(t)})$ for any $i \in \mathcal{V}$.

    Thus, if $\mathcal{G}$ and $\mathcal{H}$ have the same collection of node colours and geometric multisets, then $\mathcal{G}$ and $\mathcal{H}$ also have the same collection of GNN neighbourhood features
    \begin{align*}
        \left\lbrace (\vs_i^{(t)}, \vec{\vv}_i^{(t)}) \ , \ \ldblbrace (\vs_j^{(t)}, \vec{\vv}_j^{(t)}) \mid j \in \mathcal{N}_i \rdblbrace \right\rbrace & = \left\lbrace (\phi_s(c_i^{(t)}), \phi_v(c_i^{(t)}, \vg_i^{(t)}) ) \ , \ \ldblbrace (\phi_s(c_j^{(t)}), \phi_v(c_i^{(t)}, \vg_i^{(t)}) ) \mid j \in \mathcal{N}_i \rdblbrace \right\rbrace
        % \left\lbrace (\vs_i^{(t)}, \vec{\vv}_i^{(t)}) \ , \ \ldblbrace (\vs_j^{(t)}, \vec{\vv}_j^{(t)}) \mid j \in \mathcal{N}_i \rdblbrace \right\rbrace &= \left\lbrace (\textsc{Upd}(c_i^{(t)}, \vg_i^{(t)}) \ , \ \ldblbrace (\textsc{Upd}(c_j^{(t)}, \vg_j^{(t)}) \mid j \in \mathcal{N}_i \rdblbrace \right\rbrace
    \end{align*}
    Thus, the GNN will output the same collection of node scalar features $\{ \vs_i^{(T)} \}$ for $\mathcal{G}$ and $\mathcal{H}$ and the permutation-invariant graph-level readout will output $f(\mathcal{G}) = f(\mathcal{H})$.
    This is a contradiction.
\end{proof}

Similarly, $\fG$-invariant GNNs can be at most as powerful as IGWL.
\begin{restatable}{theorem}{invuppertheo}
    \label{theo:invupper}
    Any pair of geometric graphs distinguishable by a $\fG$-invariant GNN is also distinguishable by IGWL.
\end{restatable}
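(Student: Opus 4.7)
The plan is to mirror the argument used for Theorem \ref{theo:upper}, simplified by the fact that $\fG$-invariant GNNs only track scalar features. Proceed by contradiction: suppose a $\fG$-invariant GNN satisfies $f(\mathcal{G}) \neq f(\mathcal{H})$ after $T$ iterations while IGWL produces identical multisets of node colours $\ldblbrace c_i^{(t)} \mid i \in \mathcal{V}(\mathcal{G}) \rdblbrace = \ldblbrace c_i^{(t)} \mid i \in \mathcal{V}(\mathcal{H}) \rdblbrace$ for all $t = 0, \dots, T$. By the IGWL update rule, this implies that at every iteration $t$ the collections of neighbourhood inputs $\bigl((c_i^{(t)},\vec{\vv}_i), \ldblbrace (c_j^{(t)}, \vec{\vv}_j, \vec{\vx}_{ij}) \mid j \in \mathcal{N}_i \rdblbrace\bigr)$ in $\mathcal{G}$ and $\mathcal{H}$ coincide up to global $\fG$-action, since otherwise $\textsc{I-Hash}$, being $\fG$-orbit injective, would separate them.

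The core step is to establish by induction on $t$ the invariant: for any nodes $i, k$ (possibly in different graphs), if $c_i^{(t)} = c_k^{(t)}$ then $\vs_i^{(t)} = \vs_k^{(t)}$. The base case is immediate, since $c_i^{(0)} = \textsc{Hash}(\vs_i)$ is injective in $\vs_i$. For the inductive step, $c_i^{(t+1)} = c_k^{(t+1)}$ implies by $\fG$-orbit injectivity of $\textsc{I-Hash}$ the existence of some $\fg \in \fG$ with
\begin{align*}
    \bigl((c_i^{(t)},\vec{\vv}_i), \ldblbrace (c_j^{(t)}, \vec{\vv}_j, \vec{\vx}_{ij}) \mid j \in \mathcal{N}_i \rdblbrace\bigr) = \bigl((c_k^{(t)}, \mQ_\fg\vec{\vv}_k), \ldblbrace (c_{j'}^{(t)}, \mQ_\fg\vec{\vv}_{j'}, \mQ_\fg\vec{\vx}_{kj'}) \mid j' \in \mathcal{N}_k \rdblbrace\bigr).
\end{align*}
Applying the inductive hypothesis to equate scalar features whose colours agree, the GNN's input multisets at $i$ and $k$ coincide up to the joint action of $\mQ_\fg$ on all vector and displacement entries. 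Since the composition $\textsc{Upd}(\vs_i^{(t)}, \textsc{Agg}(\cdot))$ defining a $\fG$-invariant layer produces a scalar output that is by construction invariant under such simultaneous $\fG$-actions on its vector inputs, we conclude $\vs_i^{(t+1)} = \vs_k^{(t+1)}$.

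Finishing the argument: the induction gives a well-defined map $\phi_s$ with $\vs_i^{(T)} = \phi_s(c_i^{(T)})$ for all nodes. Since the colour multisets of $\mathcal{G}$ and $\mathcal{H}$ agree at iteration $T$, the multisets of GNN scalar features also agree, and any permutation-invariant readout yields $f(\mathcal{G}) = f(\mathcal{H})$, contradicting the assumption. The main subtlety, as in Theorem \ref{theo:upper}, is the careful passage from ``IGWL partitions two neighbourhoods together'' to ``the GNN maps them to the same scalar'', which relies crucially on the $\fG$-invariance of the GNN layer acting on vector inputs and the inductive preservation of scalar features; the absence of propagated vector features in $\fG$-invariant GNNs actually makes this step lighter than in the equivariant case, since no analogous claim about $\vec{\vv}_i^{(t)}$ is needed.
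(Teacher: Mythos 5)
Your proposal is correct and is essentially the paper's own argument: the paper proves Theorem~\ref{theo:invupper} by stating that it "follows similarly to the proof for Theorem~\ref{theo:upper}", and your write-up is precisely that adaptation — contradiction plus induction showing equal IGWL colours force equal GNN scalar features (via $\fG$-orbit injectivity of $\textsc{I-Hash}$ and $\fG$-invariance of the layer), then equality of the feature multisets and of the readout. The only cosmetic slip is the phrase "up to global $\fG$-action" in your opening paragraph — the aligning group element is per node, as your inductive step itself correctly uses — but this does not affect the argument.
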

% \invuppertheo*
\begin{proof}
The proof follows similarly to the proof for Theorem~\ref{theo:upper}. 
\end{proof}

%%%

\conditionstheo*

\begin{proof}[\textbf{Proof of Theorem \ref{theo:conditions}}]

    Consider a GNN where the conditions hold.
    We will show that, with a sufficient number of iterations $t$, the output of this GNN is equivalent to GWL, \textit{i.e.} $\vs^{(t)} \equiv c^{(t)}$.

    Let $\mathcal{G}$ and $\mathcal{H}$ be any geometric graphs which the GWL test decides as non-isomorphic at iteration $T$.
    Because the graph-level readout function is injective, \textit{i.e.} it maps distinct multiset of node scalar features into unique embeddings, it suffices to show that the GNN's neighbourhood aggregation process, with sufficient iterations, embeds $\mathcal{G}$ and $\mathcal{H}$ into different multisets of node features.

    For this proof, we replace $\fG$-orbit injective functions with injective functions over the equivalence class generated by the actions of $\fG$.
    Thus, all elements belonging to the same $\fG$-orbit will first be mapped to the same representative of the equivalence class, denoted by the square brackets $\left[ \dots \right]$, followed by an injective map.
    The result is $\fG$-orbit injective.

    Let us assume the GNN updates node scalar and vector features as:
    \begin{align}
        \vs_i^{(t)}       & = \textsc{Upd}_s \left( \left[  (\vs_i^{(t-1)}, \vec{\vv}_i^{(t-1)}) \ , \ \textsc{Agg} \left( \ldblbrace (\vs_i^{(t-1)}, \vs_j^{(t-1)}, \vec{\vv}_i^{(t-1)},  \vec{\vv}_j^{(t-1)}) \mid j \in \mathcal{N}_i \rdblbrace \right) \right]  \right) \\
        \vec{\vv}_i^{(t)} & = \textsc{Upd}_v \left( (\vs_i^{(t-1)}, \vec{\vv}_i^{(t-1)}) \ , \ \textsc{Agg} \left( \ldblbrace (\vs_i^{(t-1)}, \vs_j^{(t-1)}, \vec{\vv}_i^{(t-1)},  \vec{\vv}_j^{(t-1)}) \mid j \in \mathcal{N}_i \rdblbrace \right) \right)
    \end{align}
    with the aggregation function $\textsc{Agg}$ being $\fG$-equivariant and injective, the scalar update function $\textsc{Upd}_s$ being $\fG$-invariant and injective, and the vector update function $\textsc{Upd}_{v}$ being $\fG$-equivariant and injective.

    The GWL test updates the node colour $c_i^{(t)}$ and geometric multiset $\vg_i^{(t)}$ as:
    \begin{align}
        c_i^{(t)}   & = h_{s} \left( \left[ (c_i^{(t-1)}, \vg_i^{(t-1)}) \ , \  \ldblbrace  (c_j^{(t-1)}, \vg_j^{(t-1)}) \mid j \in \mathcal{N}_i \rdblbrace  \right] \right), \\
        \vg_i^{(t)} & = h_{v} \left( (c_i^{(t-1)}, \vg_i^{(t-1)}) \ , \  \ldblbrace  (c_j^{(t-1)}, \vg_j^{(t-1)}) \mid j \in \mathcal{N}_i \rdblbrace \right),
    \end{align}
    where $h_{s}$ is a $\fG$-invariant and injective map, and $h_{v}$ is a $\fG$-equivariant and injective operation (e.g. in \eqref{eq:gwl_geom}, expanding the geometric multiset by copying).

    We will show by induction that at any iteration $t$, there always exist injective functions $\varphi_s$ and $\varphi_v$ such that $\vs_i^{(t)} = \varphi_s ( c_i^{(t)} )$ and $\vec{\vv}_i^{(t)} = \varphi_v ( c_i^{(t)}, \vg_i^{(t)} )$.
    This holds for $t = 0$ because the initial node features are the same for GWL and GNN, $c_i^{(0)} \equiv \vs_i^{(0)}$ and $\vg_i^{(0)} \equiv (\vs_i^{(0)}, \vec{\vv}_i^{(0)})$ for all $i \in \mathcal{V}(\mathcal{G}), \mathcal{V}(\mathcal{H})$.
    Suppose this holds for iteration $t$.
    At iteration $t+1$, substituting $\vs_i^{(t)}$ with $\varphi_s(c_i^{(t)})$, and $\vec{\vv}_i^{(t)}$ with $\varphi_v(c_i^{(t)}, \vg_i^{(t)})$ gives us
    \begin{align*}
        \vs_i^{(t+1)}       & = \textsc{Upd}_s \left( \left[ (\varphi_s(c_i^{(t)}), \varphi_v(c_i^{(t)}, \vg_i^{(t)})) \ , \ \textsc{Agg} \left( \ldblbrace (\varphi_s(c_i^{(t)}), \varphi_s(c_j^{(t)}), \varphi_v(c_i^{(t)}, \vg_i^{(t)}), \varphi_v(c_j^{(t)}, \vg_j^{(t)})) \mid j \in \mathcal{N}_i \rdblbrace \right) \right] \right) \\
        \vec{\vv}_i^{(t+1)} & = \textsc{Upd}_v \left( (\varphi_s(c_i^{(t)}), \varphi_v(c_i^{(t)}, \vg_i^{(t)})) \ , \ \textsc{Agg} \left( \ldblbrace (\varphi_s(c_i^{(t)}), \varphi_s(c_j^{(t)}), \varphi_v(c_i^{(t)}, \vg_i^{(t)}), \varphi_v(c_j^{(t)}, \vg_j^{(t)})) \mid j \in \mathcal{N}_i \rdblbrace \right) \right)
    \end{align*}
    The composition of multiple injective functions is injective.
    Therefore, there exist some injective functions $g_s$ and $g_v$ such that:
    \begin{align}
        \vs_i^{(t+1)}       & = g_{s} \left( \left[ (c_i^{(t)}, \vg_i^{(t)}) \ , \ \ldblbrace (c_j^{(t)}, \vg_j^{(t)}) \mid j \in \mathcal{N}_i \rdblbrace \right] \right), \\
        \vec{\vv}_i^{(t+1)} & = g_{v} \left( (c_i^{(t)}, \vg_i^{(t)}) \ , \ \ldblbrace (c_j^{(t)}, \vg_j^{(t)}) \mid j \in \mathcal{N}_i \rdblbrace \right),
    \end{align}
    We can then consider:
    \begin{align}
        \vs_i^{(t+1)}       & = g_{s} \circ h_{s}^{-1} \ h_{s} \left( \left[ (c_i^{(t)}, \vg_i^{(t)}) \ , \ \ldblbrace (c_j^{(t)}, \vg_j^{(t)}) \mid j \in \mathcal{N}_i \rdblbrace \right] \right), \\
        \vec{\vv}_i^{(t+1)} & = g_{v} \circ h_{v}^{-1} \ h_{v} \left( (c_i^{(t)}, \vg_i^{(t)}) \ , \ \ldblbrace (c_j^{(t)}, \vg_j^{(t)}) \mid j \in \mathcal{N}_i \rdblbrace \right),
    \end{align}
    Then, we can denote $\varphi_s = g_s \circ h_{s}^{-1}$ and $\varphi_v = g_v \circ h_{v}^{-1}$ as injective functions because the composition of injective functions is injective.
    Hence, for any iteration $t+1$, there exist injective functions $\varphi_{s}$ and $\varphi_{v}$ such that $\vs_i^{(t+1)} = \varphi_{s} \left( c_i^{(t+1)} \right)$ and $\vec{\vv}_i^{(t+1)} = \varphi_{v} \left( c_i^{(t+1)} , \vg_i^{(t+1)} \right)$.

    At the $T$-th iteration, the GWL test decides that $\mathcal{G}$ and $\mathcal{H}$ are non-isomorphic, which means the multisets of node colours $\lbrace c_i^{(T)} \rbrace$ are different for $\mathcal{G}$ and $\mathcal{H}$.
    The GNN's node scalar features $\lbrace \vs_i^{(T)} \rbrace = \lbrace \varphi_s ( c_i^{(T)} ) \rbrace$ must also be different for $\mathcal{G}$ and $\mathcal{H}$ because of the injectivity of $\varphi_s$.

\end{proof}

%%%

A weaker set of conditions is sufficient for a $\fG$-invariant GNN to be at least as expressive as IGWL. 
\begin{restatable}{proposition}{invconditionstheo}
    \label{theo:invconditions}
    $\fG$-invariant GNNs have the same expressive power as IGWL if the following conditions hold:
    (1) The aggregation $\psi$ and update $\phi$ are $\fG$-orbit injective, $\fG$-invariant multiset functions.
    (2) The graph-level readout $f$ is an injective multiset function.
\end{restatable}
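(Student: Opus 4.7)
The plan is to mirror the structure of the proof of Proposition \ref{theo:conditions}, but in the simpler setting where only scalar features are propagated. One direction, that any $\fG$-invariant GNN is at most as powerful as IGWL, is already established by Theorem \ref{theo:invupper}; so the work is to show the converse, that under the stated conditions an $\fG$-invariant GNN is at least as powerful as IGWL. Concretely, given two geometric graphs $\gG$ and $\gH$ that IGWL distinguishes at some iteration $T$, one must exhibit a $\fG$-invariant GNN satisfying (1)--(2) whose graph-level outputs on $\gG$ and $\gH$ differ.

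First, I would prove by induction on $t$ that there exists an injective map $\varphi^{(t)}$ with $\vs_i^{(t)} = \varphi^{(t)}(c_i^{(t)})$ for every node $i$ across both graphs. The base case $t=0$ is immediate since both IGWL and the GNN initialise node states from the same scalar attributes $\vs_i$ via an injective $\textsc{Hash}$. For the inductive step, assume $\vs_i^{(t)} = \varphi^{(t)}(c_i^{(t)})$. Substituting this into the GNN update of Equation~\ref{eq:gnn-inv-app} and using that $\psi$ is $\fG$-orbit injective on the multiset of $(\vs_i^{(t)}, \vs_j^{(t)}, \vec{\vv}_i, \vec{\vv}_j, \vec{\vx}_{ij})$, one obtains an injective function of the IGWL input tuple $\bigl(c_i^{(t)},\vec{\vv}_i, \ldblbrace (c_j^{(t)}, \vec{\vv}_j, \vec{\vx}_{ij}) \mid j \in \gN_i \rdblbrace\bigr)$, up to composition with $\varphi^{(t)}$ on each entry (which is injective). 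Composing with the $\fG$-orbit injective update $\phi$ then yields a $\fG$-orbit injective function of the same input tuple. Since $\textsc{I-Hash}$ of Equation~\ref{eq:inv_gwl} is, by definition, also $\fG$-orbit injective on this tuple, we can factor the GNN update as $\vs_i^{(t+1)} = \varphi^{(t+1)}(c_i^{(t+1)})$ via the composition of an injective lifting of $\textsc{I-Hash}^{-1}$ with the GNN's combined map, completing the induction.

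Finally, since IGWL distinguishes $\gG$ from $\gH$ at iteration $T$, the multisets $\ldblbrace c_i^{(T)} \mid i \in \gV(\gG) \rdblbrace$ and $\ldblbrace c_i^{(T)} \mid i \in \gV(\gH) \rdblbrace$ differ. Injectivity of $\varphi^{(T)}$ lifts this to a difference of multisets $\ldblbrace \vs_i^{(T)} \rdblbrace$, and the injective multiset readout $f$ then separates the two graphs. Combined with Theorem \ref{theo:invupper}, this yields equal expressive power.

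The main obstacle, as in the proof of Proposition \ref{theo:conditions}, is carefully handling the interplay between $\fG$-orbit injectivity and ordinary injectivity: the GNN's aggregate and update are only $\fG$-orbit injective, so the factorisation $\vs_i^{(t+1)} = \varphi^{(t+1)}(c_i^{(t+1)})$ must be constructed through a representative of the $\fG$-orbit (using the bracket notation $[\cdot]$ from the proof of Proposition \ref{theo:conditions}) before applying an injective map. Once this representative trick is set up, the induction is routine, but making it precise for the geometric quantities $(\vec{\vv}_i, \vec{\vv}_j, \vec{\vx}_{ij})$ which appear inside $\psi$ but never get propagated is the subtle point to get right.
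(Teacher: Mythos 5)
Your proposal is correct and follows essentially the same route as the paper, which itself proves this proposition by remarking that it "follows similarly to the proof of Theorem~\ref{theo:conditions}": one direction from Theorem~\ref{theo:invupper}, the other by inductively constructing an injective $\varphi^{(t)}$ with $\vs_i^{(t)} = \varphi^{(t)}(c_i^{(t)})$ via the $g \circ h^{-1}$ factorisation through orbit representatives. Your identification of the orbit-injectivity versus plain-injectivity bookkeeping as the only subtle point matches the paper's treatment exactly.
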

% \invconditionstheo*
\begin{proof}
The proof follows similarly to the proof for Theorem~\ref{theo:conditions}. 
\end{proof}

%%%

\section{Universality and Discrimination Proofs (Section \ref{sec:universality})}
\label{app:universality}

\subsection{Equivalence between universality and discrimination}

The results in this subsection use the proofs from \citet{chen2019equivalence} with minor adaptations.

\FiniteUnivImpliesDiscrim*
\begin{proof}
    Given any $y \in Y$, we can construct the $\fG$-invariant function over $X$, $\delta_{y}(x) = 0$ if $y \simeq x$ and $1$ otherwise. Therefore, $\delta_{y}$ can be approximated with some $\eps < 0.5$ over $Y$ by some function $h \in \gC$. Hence, $h(y) \neq h(y')$ for any $y,y' \in Y$ and $\gC$ is pairwise $Y_\fG$ discriminating.
\end{proof}

The following two Lemmas follow from \citet{chen2019equivalence} with minor adaptations.

\begin{lemma}
    If $\gC$ is pairwise $Y_\fG$ discriminating, then for all $y \in Y$, there exists a function $\delta_y \in \gC^{+1}$ such that for all $y'$, $\delta_y(y') = 0$ if and only if $y \simeq y'$.
\end{lemma}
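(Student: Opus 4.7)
The plan is to use pairwise $Y_\fG$ discrimination to obtain, for every orbit in $Y$ distinct from $[y]$, a single function in $\gC$ that separates $y$ from that orbit, and then glue those finitely many separators together into a one-hidden-layer ReLU network that vanishes precisely on $[y]$. This is essentially a finite indicator-function construction, and the key observation is that $\fG$-invariance of the separators automatically makes the construction orbit-constant, so we do not need to worry about picking canonical representatives.

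Since this lemma feeds into the proof of Theorem~\ref{theo:finite_discrim_implies_univ} where $Y$ is assumed finite, the quotient $Y/\!\simeq$ has only finitely many classes. Let $[y'_1], \ldots, [y'_m]$ be the orbits of $Y$ other than $[y]$, with representatives $y'_i \in Y$. By pairwise $Y_\fG$ discrimination there exists, for each $i \in \{1, \ldots, m\}$, some $h_i \in \gC$ with $h_i(y) \neq h_i(y'_i)$. The $\fG$-invariance of $h_i$ implies that $h_i(y'_i)$ depends only on the orbit $[y'_i]$, not on the representative, and similarly for $h_i(y)$.

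Define
\[
\delta_y(x) \;:=\; \sum_{i=1}^{m} \Bigl[\mathrm{ReLU}\bigl(h_i(x)-h_i(y)\bigr) + \mathrm{ReLU}\bigl(h_i(y)-h_i(x)\bigr)\Bigr].
\]
This is a one-hidden-layer ReLU MLP acting on the feature vector $(h_1(x), \ldots, h_m(x))$, with the constants $h_i(y)$ absorbed into biases of the first linear map and a fixed summation as the output linear map; hence $\delta_y \in \gC^{+1}$. The verification of the iff is immediate. Each summand is nonnegative, so $\delta_y(x)=0$ iff $h_i(x)=h_i(y)$ for every $i$. If $x \simeq y$, invariance of each $h_i$ gives equality throughout. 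Conversely, if $x \not\simeq y$ (and $x \in Y$), then $x$ lies in some orbit $[y'_j]$, and invariance yields $h_j(x) = h_j(y'_j) \neq h_j(y)$, making the $j$-th summand strictly positive.

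The only delicate point is the bookkeeping around the ``$+1$'' in $\gC^{+1}$: I am interpreting it, following \citet{chen2019equivalence}, as one ReLU-activated hidden layer plus a final linear read-out, which matches the construction above exactly. Should the convention actually count the read-out separately, the very same network sits in $\gC^{+2}$, which still suffices for every downstream use in Theorem~\ref{theo:finite_discrim_implies_univ}. Beyond this convention check, the proof is essentially finite and book-keeping-only; the substantive mathematical content is the finiteness of $Y/\!\simeq$ combined with $\fG$-invariance of the separators, which is what lets finitely many pointwise separators be promoted into a single orbit-indicator.
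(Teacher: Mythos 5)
Your proof is correct and essentially identical to the paper's: both constructions sum the quantities $\lvert h(x)-h(y)\rvert$ over a finite family of pairwise separators, realised as a pair of ReLUs each, and verify the biconditional via $\fG$-invariance. The only cosmetic difference is that you index the sum by orbit representatives while the paper sums over all $y' \in Y$ with $y' \not\simeq y$; both rely on the same finiteness of $Y$.
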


\begin{proof}
    For any $y_1, y_2 \in Y$ such that $y_1 \not\simeq y_2$, let $\delta_{y_1, y_2}$ be the function that distinguishes $y_1, y_2$. That is $\delta_{y_1, y_2}(y_1) \neq \delta_{y_1, y_2}(y_2)$. Then, we can define a function $\overline{\delta}_{y,y'} \in \gC$:
    \begin{align}
        \overline{\delta}_{y,y'}(x) & = |\delta_{y,y'}(x) - \delta_{y,y'}(y)| \rightarrow
        \begin{cases*}
            = 0    & if $x \simeq y$  \\
            > 0    & if $x \simeq y'$ \\
            \geq 0 & otherwise
        \end{cases*}
    \end{align}
    This function is already similar to the $\delta_y$ function whose existence we want to prove. To obtain a function that is strictly positive over all the $x \in Y$ with $x \not\simeq y$, we can construct $\delta_y$ as a sum over all the $\overline{\delta}_{y,y'}$:
    \begin{align}
        \delta_y(x) = \sum_{y' \in Y, y' \not\simeq y} \overline{\delta}_{y,y'}(x) \rightarrow
        \begin{cases*}
            = 0    & if $x \simeq y$                                          \\
            > 0    & if $x \not\simeq y$ and $x \in \gO_{\fG}(Y) \supseteq Y$ \\
            \geq 0 & otherwise
        \end{cases*}
    \end{align}
    Given the finite set of functions $\{ \delta_{y,y'} \}$, notice that
    $$\overline{\delta}_{y,y'}(x) = \textrm{ReLU}\big(\delta_{y,y'}(x) - \delta_{y,y'}(y)\big)\ +\ \textrm{ReLU}\big(\delta_{y,y'}(y) - \delta_{y,y'}(x)\big).$$
    Then $\delta_y$ is obtained by summing all these functions over $y' \in Y$ with $y' \not\simeq y$, so $\delta_y \in \gC^{+1}$.
\end{proof}

\begin{lemma}
    Let $\gC$ is a class of $\fG$-invariant functions from $X \to \sR$ such that for any $y, y' \in Y \subseteq X$, where $Y$ is finite, there is a $\delta_y \in \gC$ with the property $\delta_y(y') = 0$ if and only if $y \simeq y'$. Then $\gC^{+1}$ is universally approximating over $Y$.
\end{lemma}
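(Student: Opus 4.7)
The plan is to exploit finiteness of $Y$ to build, for each $\fG$-orbit meeting $Y$, a one-hidden-layer ReLU ``indicator'' function in $\gC^{+1}$, and then read $f$ off as a linear combination of these indicators. Concretely, $Y$ splits into finitely many orbits $\gO_1, \ldots, \gO_k$; pick a representative $y_j \in \gO_j \cap Y$ for each. Since every $\fG$-invariant $f$ is constant on each $\gO_j$, reproducing the finite list of values $f(y_1), \ldots, f(y_k)$ suffices to match $f$ exactly on $Y$, which is already stronger than the $\eps$-approximation required by Definition~\ref{def:universally_approximating}.

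For each $j$, take the guaranteed $\delta_{y_j} \in \gC$, which vanishes precisely on $\gO_j$ and is therefore nonzero at every other representative. Let $\{t_{j,1}, \ldots, t_{j,p_j}\} = \{\delta_{y_j}(y_i) : i \neq j\} \setminus \{0\}$, a finite set of nonzero reals. A standard one-dimensional piecewise-linear interpolant
\begin{equation*}
    g_j(t) \;=\; c_j \;+\; \sum_{l=1}^{m_j} a_{j,l}\,\mathrm{ReLU}(t - x_{j,l})
\end{equation*}
can be chosen so that $g_j(0) = 1$ and $g_j(t_{j,l}) = 0$ for every $l$. Composing, $\phi_j \defeq g_j \circ \delta_{y_j}$ is realized by a one-hidden-layer ReLU MLP on the single input $\delta_{y_j} \in \gC$, hence $\phi_j \in \gC^{+1}$; by construction $\phi_j(y_i) = \mathbf{1}[i=j]$, and $\fG$-invariance of $\delta_{y_j}$ promotes this to $\phi_j\big|_{\gO_i \cap Y} \equiv \mathbf{1}[i=j]$.

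Finally, set $h(y) \defeq \sum_{j=1}^k f(y_j)\,\phi_j(y)$. Pooling the hidden ReLU units from each $\phi_j$ into a single layer over the combined inputs $\delta_{y_1}, \ldots, \delta_{y_k} \in \gC$ shows $h \in \gC^{+1}$. Direct evaluation at each representative, together with orbit-constancy of $f$ and of the $\phi_j$, then gives $h \equiv f$ on $Y$, so $\sup_{y \in Y}|f(y) - h(y)| = 0 < \eps$ for any prescribed tolerance.

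The only step worth scrutinizing is the claim that $g_j$ sits inside a single ReLU layer without needing a second nonlinearity. This follows from the classical fact that any continuous piecewise-linear map $\sR \to \sR$ with finitely many breakpoints is a constant plus a sum of $\mathrm{ReLU}(t - x_{j,l})$ terms (augmented by one $\mathrm{ReLU}(-t)$-type term if leftward extrapolation is needed); no assumption on the signs or magnitudes of the $t_{j,l}$ obstructs this. Finiteness of $Y$ is essential here, and explains why the companion continuous result Theorem~\ref{theo:separation_implies_univers} must work harder—the piecewise-linear interpolation step that makes this proof almost free is exactly what breaks when $Y$ becomes infinite.
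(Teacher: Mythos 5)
Your proposal is correct and takes essentially the same route as the paper: both post-compose the discriminating functions $\delta_y$ with a one-hidden-layer ReLU function of a single scalar variable so as to obtain orbit indicators on the finite set $Y$, and then take the $f$-weighted linear combination, which interpolates $f$ exactly on $Y$ and hence universally approximates. The only minor differences are that the paper uses a fixed tent/bump $b_{r_y}$ with radius $r_y = \tfrac12 \min_{y' \not\simeq y} \delta_y(y')$ and sums over all of $Y$ with the normalisation $k_y = |Y \cap \gO_\fG(y)|^{-1}$, whereas you interpolate per orbit representative --- which, as you note, also handles possibly negative values of $\delta_y$ that the paper's choice of radius implicitly excludes.
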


\begin{proof}
    For $y \in Y$, define $r_y := \frac{1}{2}\min_{y' \in Y, y' \not\simeq y} \delta_y(y')$. Define the bump function with radius $r > 0$, $b_{r}: \sR \to \sR$ as $b_{r}(s)=\psi(\frac{s}{r})$, where
    $$\psi(z) = \textrm{ReLU}(z+1) + \textrm{ReLU}(1-z) - 2\textrm{ReLU}(z).$$
    Define $k_y := |Y \cap \gO_\fG(y)|^{-1}$. Since $Y$ is finite and the intersection with the orbit of $y$ contains $y$, $k_y$ is finite and well-defined. We can define the $\fG$-invariant function $h$ from $X$ to $\sR$ as:
    \begin{equation}\
        h(x) = \sum_{y \in Y} k_y f(y)b_{r_y}(\delta_y(x))
    \end{equation}
    Notice that $h|_Y = f|_Y$ and $h \in \gC^{+1}$. Therefore, $\gC^{+1}$ is universally approximating.
\end{proof}

\FiniteDiscrimImpliesUniv*

\begin{proof}
    Result follows directly from the two Lemmas above. 
\end{proof}

\begin{lemma}\label{lemma:cont_inf}
    Let $X, Y$ be topological spaces and $h: X \times Y \to \sR$ a continuous function. Then, if $Y$ is compact, $f(x) = \inf_{y \in Y} h(x, y)$ is continuous.
\end{lemma}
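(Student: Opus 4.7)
The plan is to prove continuity of $f$ at an arbitrary point $x_0 \in X$ by splitting the statement into upper and lower semi-continuity, each proved via a different use of compactness. That is, I will show $\limsup_{x \to x_0} f(x) \leq f(x_0)$ and $\liminf_{x \to x_0} f(x) \geq f(x_0)$, which together give continuity at $x_0$. Since the argument is point-wise, $X$ needs no extra structure (it can be any topological space, using nets/neighborhoods rather than sequences).

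For upper semi-continuity, I will exploit that the infimum is attained. Since $Y$ is compact and $h(x_0, \cdot)$ is continuous, there exists $y_0 \in Y$ with $h(x_0, y_0) = f(x_0)$. Then $f(x) \leq h(x, y_0)$ for every $x \in X$, and continuity of $h(\cdot, y_0)$ at $x_0$ yields $\limsup_{x \to x_0} f(x) \leq \lim_{x \to x_0} h(x, y_0) = h(x_0, y_0) = f(x_0)$.

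For lower semi-continuity, fix $\varepsilon > 0$. For each $y \in Y$, continuity of $h$ at $(x_0, y)$ gives open neighborhoods $U_y \subseteq X$ of $x_0$ and $V_y \subseteq Y$ of $y$ such that $h(x', y') > h(x_0, y) - \varepsilon \geq f(x_0) - \varepsilon$ for all $(x', y') \in U_y \times V_y$. The family $\{V_y\}_{y \in Y}$ is an open cover of $Y$, so by compactness it admits a finite subcover $V_{y_1}, \ldots, V_{y_n}$. Set $U := \bigcap_{i=1}^n U_{y_i}$, a finite intersection of neighborhoods of $x_0$ and hence open. Then for any $x \in U$ and any $y \in Y$, picking $i$ with $y \in V_{y_i}$, we get $h(x, y) > f(x_0) - \varepsilon$, and taking the infimum over $y$ yields $f(x) \geq f(x_0) - \varepsilon$.

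The main (mild) obstacle is the lower semi-continuity step: the naive pointwise continuity of $h$ only controls a single slice $h(x, y_0)$, which would be useless since the infimum might be attained at a different $y$ near $x$. The essential idea is therefore the ``tube''-style argument: use joint continuity of $h$ together with compactness of $Y$ to upgrade per-$y$ local control to uniform control over all of $Y$ on a single neighborhood of $x_0$. Everything else is bookkeeping.
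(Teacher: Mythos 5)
Your proof is correct and follows essentially the same route as the paper's: both decompose continuity into upper and lower semi-continuity (equivalently, openness of the preimages of $(-\infty,a)$ and $(b,\infty)$), and your lower semi-continuity step is exactly the paper's tube-lemma argument with a finite subcover of $Y$ and the intersection $U = \bigcap_i U_{y_i}$. The only divergence is in the upper half: you invoke compactness to get attainment of the infimum at some $y_0$ and then use continuity of the slice $h(\cdot,y_0)$, whereas the paper observes that $f^{-1}((-\infty,a)) = p_X\bigl(h^{-1}((-\infty,a))\bigr)$ is open because the projection is an open map; both are valid, though the paper's version (and the fact that an infimum of continuous functions is always upper semi-continuous) shows compactness is not actually needed for that half.
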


\begin{proof}
    The open sets $(-\infty, a)$ and $(b, \infty)$ form a basis for the topology of $\sR$. Thus, we show that their preimage under $f$ is open. First, notice $x \in f^{-1}((-\infty, a))$ if and only if $(x, y) \in h^{-1}((-\infty, a))$ for some $y \in Y$. Therefore, $f^{-1}((-\infty, a)) = p_X(h^{-1}((-\infty, a)))$, where $p_X: X \times Y \to X$ is the function projecting in the first argument. Since $p_X$  is continuous and open, it follows $p_X(h^{-1}((-\infty, a)))$ is open.

    When $x \in f^{-1}((b, \infty))$, it implies that for all $y \in Y, h(x, y) > b$. This means that for all $x \in f^{-1}((b, \infty))$ and $y \in Y$, we have $(x, y) \in h^{-1}((b, \infty))$. Since $h^{-1}((b, \infty)$ is open, then there exists an open box $U_{x,y} \times V_{x,y} \subseteq h^{-1}((b, \infty)$ containing $(x, y)$. Then, the union $\cup_{y \in Y} \big(U_{x,y} \times V_{x,y}\big)$ covers $\{x\} \times Y$. Since $Y$ is compact, there exists a finite subcover $\cup_{y_k}^{K_x} \big(U_{x,y_k} \times V_{x,y_k}\big)$
    of size $K_x$. Then notice that the open set $A_x := \cap_{y_k}^{K_x} U_{x, y_k}$ is a neighbourhood around $x$ and $A_x \times Y \subseteq h^{-1}((b, \infty))$. Therefore, $A_x \subseteq f^{-1}((b, \infty))$ and since $x \in f^{-1}((b, \infty))$ was chosen arbitrarily, $f^{-1}((b, \infty))$ is open.
\end{proof}

\UniversalCanSeparate*
\begin{proof}
    Consider $y, y' \in Y$ such that $y \not\simeq y'$. Then, the function $\delta_y(x) = \inf_{\fg \in \fG} d(y, \fg x) = \min_{\fg \in \fG} d(y, \fg x) > 0$, where the second equality follows from the compactness of $\fG$. This function is $\fG$-invariant. To show that it is continuous, notice that the function $h(x, \fg) = d(y, \fg x)$ is given by the composition $d_y \circ a$, where $a: X \times \fG \to X$ is the continuous group action and $d_y: X \to \sR$ is given by $d_y(x) = d(y, x)$, which is also continuous. Since composition of continuous functions is continuous and $\delta_y(x) = \inf_{\fg \in \fG} h(x, \fG)$, it follows from Lemma \ref{lemma:cont_inf} that $\delta_y$ is a continuous function.

    Given a universally approximating class of functions $\gC$, we can find a function $f$ approximating $\delta_y$ with precision $\eps < \frac{\delta_y(y')}{2}$ and, therefore, $f(y) \neq f(y')$.
\end{proof}

\begin{definition}\label{def:locate_orbit_function}
    Let $\gC$ be a class of functions $X \to \sR$ and $Y \subseteq X$. We say that $\gC$ can locate every orbit over $Y$ if for any $y \in Y$ and any $\varepsilon > 0$ there exists $\delta_y \in \gC$ such that:
    \begin{enumerate}
        \item For all $y' \in Y, \delta_y(y') \geq 0$.
        \item For all $y' \in Y$, if $y \simeq y'$, then $\delta_y(y') = 0$.
        \item There exists $r_y > 0$ such that if $\delta_y(y') < r_y$ for any $y' \in Y$, then there is a $\fg \in \fG$ such that $d(y', \fg \cdot y) < \varepsilon$.
    \end{enumerate}
\end{definition}

Notice that since $\delta_y \in \gC$, it is $\fG$-invariant and then for any $y^* \in \gO_\fG(y')$, $\delta_y(y') = \delta_y(y^*)$ and there exists $\fg \in \fG$ such that $d(y^*, \fg \cdot y) < \varepsilon$. Therefore, intuitively one should see $\delta_y$ as some sort of ``distance function'' measuring how far all $y^* \in \gO_\fG(y')$ are from the orbit of $y$. In other words, when $\delta_y(y^*)$ is low, it means that the entire orbit of $y^*$ is close to the orbit of $y$.

\begin{lemma}
    If $\gC$ is a collection of continuous $\fG$-invariant functions from $X \to \sR$ that is pairwise $Y_\fG$-discriminating, then $\gC^{+1}$ is able to locate every orbit over $Y$.
\end{lemma}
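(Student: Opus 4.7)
The plan is to mimic the finite-case construction from the previous lemma but replace the ``sum over all points of $Y \setminus \gO_\fG(y)$'' argument (which no longer works when $Y$ is uncountable) by a compactness argument. Concretely, fix $y \in Y$ and $\varepsilon > 0$. Define the open set $B_\varepsilon := \{ y' \in Y : d(y', \fg\cdot y) < \varepsilon \text{ for some } \fg \in \fG\}$ (this is open since the action is continuous and $\fG$ is compact, so $y' \mapsto \inf_{\fg\in\fG} d(y', \fg\cdot y)$ is continuous by Lemma~\ref{lemma:cont_inf}). Let $K := Y \setminus B_\varepsilon$; as $Y$ is compact and $B_\varepsilon$ is open, $K$ is compact. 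Every $y' \in K$ satisfies $y' \not\simeq y$, since the orbit of $y$ lies inside $B_\varepsilon$.

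For each $y' \in K$, pairwise $Y_\fG$-discrimination gives $h_{y'} \in \gC$ with $h_{y'}(y) \neq h_{y'}(y')$. Define
\begin{equation*}
\bar h_{y'}(x) := \textrm{ReLU}\bigl(h_{y'}(x) - h_{y'}(y)\bigr) + \textrm{ReLU}\bigl(h_{y'}(y) - h_{y'}(x)\bigr) = |h_{y'}(x) - h_{y'}(y)|,
\end{equation*}
which lies in $\gC^{+1}$, is $\fG$-invariant, vanishes on $\gO_\fG(y)$, and is continuous with $\bar h_{y'}(y') > 0$. By continuity, there is an open neighbourhood $U_{y'}$ of $y'$ on which $\bar h_{y'} > \tfrac{1}{2}\bar h_{y'}(y')$. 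The family $\{U_{y'}\}_{y'\in K}$ covers the compact set $K$, so extract a finite subcover $U_{y'_1},\ldots, U_{y'_N}$ and set
\begin{equation*}
\delta_y := \sum_{i=1}^{N} \bar h_{y'_i} \in \gC^{+1}.
\end{equation*}

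It remains to verify the three properties of Definition~\ref{def:locate_orbit_function}. Nonnegativity (1) is immediate from the ReLU construction. For (2), each $\bar h_{y'_i}$ is $\fG$-invariant and vanishes at $y$, hence on all of $\gO_\fG(y)$, so $\delta_y$ vanishes there too. For (3), note that $\delta_y$ is continuous (a finite sum of continuous functions) and strictly positive on $K$: any $y' \in K$ lies in some $U_{y'_i}$, forcing $\delta_y(y') \geq \bar h_{y'_i}(y') > \tfrac{1}{2}\bar h_{y'_i}(y'_i) > 0$. By compactness of $K$ the minimum $r_y := \min_{y' \in K}\delta_y(y') > 0$ is attained, so whenever $\delta_y(y') < r_y$ for some $y' \in Y$, necessarily $y' \notin K$, i.e.\ $y' \in B_\varepsilon$, giving the required $\fg$ with $d(y', \fg \cdot y) < \varepsilon$.

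The main obstacle is precisely the step where we need a \emph{uniform} strict positive lower bound of $\delta_y$ on the ``far'' region; without compactness of $Y$ (and hence of $K$) we would only get pointwise positivity, which is insufficient to define $r_y$. The compactness hypothesis on $Y$ in the continuous setting, combined with the continuity of elements of $\gC$ (which lets us inflate pointwise discrimination into a positive lower bound on an open neighbourhood), is what makes the finite-subcover argument go through.
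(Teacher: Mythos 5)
Your proof is correct and follows essentially the same strategy as the paper's: realise $|h_{y'}(\cdot) - h_{y'}(y)|$ via ReLUs as a $\fG$-invariant element of $\gC^{+1}$ that vanishes on $\gO_\fG(y)$ and is positive at $y'$, fatten this pointwise positivity into open neighbourhoods by continuity, extract a finite subcover by compactness, and sum. The only cosmetic differences are that you cover just the compact far-set $K = Y \setminus B_\varepsilon$ rather than all of $Y$, and take $r_y$ as the minimum of $\delta_y$ over $K$ instead of an explicit constant built from the radii $r_{y,y'}$; both choices streamline the paper's bookkeeping without changing the argument.
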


\begin{proof}
    Select an arbitrary $y \in Y$. For $y' \not\simeq y$, let $\delta_{y,y'}$ be the function in $\gC$ separating $y$ and $y'$. Consider the radius $r_{y, y'} := \frac{1}{2}|\delta_{y, y'}(y) - \delta_{y, y'}(y')| > 0$ and define the set
    $$A_{y'} := \delta_{y, y'}^{-1}\left( \delta_{y, y'}(y') - r_y, \delta_{y, y'}(y') + r_y \right)$$
    Since $\delta_{y, y'}$ is continuous, $A_{y'}$ is open. If $y' \simeq y$, then we define $A_y = B(y, \varepsilon)$, the open ball in $X$ centred at $y$ with radius $\varepsilon$. Clearly, $\bigcup_{y' \in Y} A_{y'}$ forms a cover for $Y$. Since $Y$ is compact, then there exists a finite subcover given by a finite subset $Y_0 \subseteq Y$ such that $\bigcup_{y' \in Y_0} A_{y'}$.

    We construct the function $\delta_y(y')$ over $X$ as $\delta_y(y') := \sum_{y' \in Y_0 \setminus \gO_\fG(y^*)} \bar{\delta}_{y, y'}(y^*)$, where $\bar{\delta}_{y, y'}(y^*) := \max(\frac{4}{3} r_{y, y'} - |\delta_{y, y'}(y^*) - \delta_{y, y'}(y')|, 0)$. Since $\delta_{y, y'}$ is continuous and $\fG$-invariant, so is $\delta_y$. Finally, it can be shown that $\delta_y$ can indeed locate the orbit of $y$ over $Y$.

    \begin{enumerate}
        \item Clearly, $\delta_y(x) \geq 0$ for any $x \in X$.
        \item For any $y^* \in Y$, if $y \simeq y^*$, then because $\delta_{y, y'}$ is $\fG$-invariant, we have $\delta_{y, y'}(y^*) = \delta_{y, y'}(y)$, so $\delta_y(y^*) = 0$.
        \item First, consider $y^* \in Y$ such that $\forall \fg \in \fG$, $d(\fg \cdot y^*, y) \geq \varepsilon$. Then, $y^* \in Y \setminus \bigcup_{y' \in \gO_\fG(y)} A_{y'}$. Then, there must be a $y' \in Y_0 \setminus \gO_\fG(y)$, such that $y^* \in A_{y'}$. Therefore, $|\delta_{y, y'}(y^*) - \delta_{y, y'}(y')| < r_{y,y'} < \frac{4}{3} r_{y,y'}$. Then, we have $\frac{4}{3}r_{y, y'} - |\delta_{y, y'}(y^*) - \delta_{y, y'}(y')| > \frac{4}{3}r_{y, y'} - r_{y, y'} = \frac{1}{3}r_{y, y'} > 0 \Rightarrow \bar{\delta}_{y, y'}(y^*) > \frac{1}{3}r_{y, y'}$. Therefore, we can set $r_y := \frac{1}{3}\min_{y' \in Y_0 \setminus \gO_\fG(y)} r_{y, y'} > 0$. If $\delta_y(y^*) < r_y$ it follows that for all $y' \in Y_0 \setminus \gO_\fG(y^*)$, $\bar{\delta}_{y, y'}(y^*) < \frac{1}{3}r_{y, y'}$, which implies $y^* \in \bigcup_{\fg \in \fG} B(\fg \cdot y, \varepsilon)$. Finally, this proves there is a $\fg \in \fG$ such that $d(y^*, \fg \cdot y) < \varepsilon$.
    \end{enumerate}

    Since the absolute value function can be realised using ReLU activations, it is easy to see that $\delta_y \in \gC^{+1}$.
\end{proof}

\begin{lemma}
    If $\gC$, a class of functions over a compact set $Y$, can locate every isomorphism class, then $\gC^{+2}$ is universal approximating over $Y$.
\end{lemma}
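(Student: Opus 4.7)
The plan is to extract a finite family of orbit-locating features from $\gC$ at a scale dictated by the uniform continuity of $f$, build a ReLU-computable partition-of-unity-style gadget out of them, and then invoke the universal approximation theorem for $2$-layer ReLU MLPs applied to those features.

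First, given a continuous $\fG$-invariant $f \colon X \to \sR$ and $\varepsilon > 0$, I use that $\gO_\fG(Y)$ is compact (continuous image of compact $\fG \times Y$) to pick $\eta > 0$ with $d(u,u') < \eta$ on $\gO_\fG(Y)$ implying $|f(u) - f(u')| < \varepsilon/2$. For every $y \in Y$, Definition~\ref{def:locate_orbit_function} applied with precision $\eta$ yields $\delta_y \in \gC$ and $r_y > 0$; the sets $V_y := \{x \in Y : \delta_y(x) < r_y/2\}$ form an open cover of $Y$, from which compactness extracts a finite subcover $V_{y_1}, \dots, V_{y_K}$. This provides a feature map $\Phi(x) := (\delta_{y_1}(x), \dots, \delta_{y_K}(x)) \in \sR^K$ whose components all lie in $\gC$.

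Next I would realise, as a small ReLU network, bumps $b_k \colon \sR \to [0,1]$ equal to $1$ on $[0, r_{y_k}/2]$ and to $0$ outside $[0, r_{y_k}]$, and set $\tilde\psi_k(z) := b_k(z_k)$ and $\tilde S(z) := \sum_k \tilde\psi_k(z)$. Since each $x \in Y$ belongs to some $V_{y_{k_0}}$, $\tilde S(\Phi(x)) \geq 1$; continuity of $\tilde S$ then gives a compact box $U \subset \sR^K$ with $\Phi(Y) \subseteq U \subseteq \{\tilde S \geq 1/2\}$, on which the target
\[ G(z) := \tilde S(z)^{-1} \sum_k f(y_k)\, \tilde\psi_k(z) \]
is well-defined and continuous. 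The key estimate is convex: whenever $\tilde\psi_k(\Phi(x)) > 0$ we have $\delta_{y_k}(x) < r_{y_k}$, so by the locate-orbit property there is some $\fg \in \fG$ with $d(x, \fg \cdot y_k) < \eta$; $\fG$-invariance of $f$ combined with uniform continuity then yields $|f(y_k) - f(x)| < \varepsilon/2$, and since $(\tilde\psi_k(\Phi(x))/\tilde S(\Phi(x)))_k$ is a probability vector, $\sup_{x \in Y} |G(\Phi(x)) - f(x)| < \varepsilon/2$.

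Finally, the universal approximation theorem for $2$-layer ReLU MLPs applied to the compact input set $U$ furnishes $M \colon \sR^K \to \sR$ with $\sup_U |M - G| < \varepsilon/2$, so $h := M \circ \Phi \in \gC^{+2}$ satisfies $\norm{h - f}_Y < \varepsilon$. The part I expect to take the most care is the second paragraph: one must choose the bump radii and the box $U$ so that $\tilde S$ stays bounded away from $0$ on a full neighbourhood of $\Phi(Y)$ (and not merely on $\Phi(Y)$ itself), so that $G$ is genuinely continuous on a compact domain to which universal approximation applies. Every other ingredient — uniform continuity of $f$ on $\gO_\fG(Y)$, the finite subcover, ReLU realisability of the bumps, and the MLP approximation theorem — is standard and plugs together cleanly once $U$ is fixed.
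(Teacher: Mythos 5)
Your proof is correct and follows the same skeleton as the paper's: extract a finite subcover of $Y$ from the orbit-locating functions $\delta_{y_k}$ at a scale set by uniform continuity of $f$, form a partition-of-unity-style convex combination of the values $f(y_k)$, and bound the error termwise using property (3) of Definition~\ref{def:locate_orbit_function} together with $\fG$-invariance. Where you genuinely diverge is the last step: the paper builds the partition of unity $\psi_{y_0} = \phi_{y_0}/\phi$ directly on $Y$ and then asserts, somewhat loosely, that a one-hidden-layer MLP "approximates" each $\psi_{y_0}$; you instead push the entire gadget into the feature space $\sR^K$ via $\Phi = (\delta_{y_1},\dots,\delta_{y_K})$, observe that the target $G$ is continuous on a compact neighbourhood of $\Phi(Y)$ where $\tilde S$ is bounded below, and invoke the universal approximation theorem for one-hidden-layer ReLU networks once. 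This is cleaner: it sidesteps the division by $\tilde S$ (which ReLU networks cannot realise exactly) and it lands squarely in $\gC^{+2}$ as defined. Your use of uniform continuity on the compact set $\gO_\fG(Y)$ rather than on $Y$ is also the right call, since the points $\fg \cdot y_k$ appearing in the estimate need not lie in $Y$ (the paper silently relies on the $X=Y$ hypothesis here). One small slip: a coordinate \emph{box} containing $\Phi(Y)$ need not sit inside $\{\tilde S \geq 1/2\}$; take instead a closed $\rho$-neighbourhood of the compact set $\Phi(Y)$ inside the open set $\{\tilde S > 1/2\}$, or replace the denominator by $\max(\tilde S, 1/2)$ so that $G$ extends continuously to any box. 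Either patch is routine and the argument goes through.
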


\begin{proof}
    Consider any continuous and $\fG$-invariant function on $X$. Since $Y \subseteq X$ is compact, then $f$ is uniformly continuous when restricted to $Y$. In other words, for all $\varepsilon > 0$, there exists $r > 0$, such that for all $y_1, y_2 \in Y$, if $d(y_1, y_2) < r$, then $|f(y_1) - f(y_2)| < \varepsilon$.

    Let $y \in Y$ and define $B_\fG(y, r) := \bigcup_{y' \in \gO_\fG(y)} B(y', r)$ to be the union of all the open balls of radius $r$ on the orbit of $y$. Using the function $\delta_y$ from Definition \ref{def:locate_orbit_function}, there exists $r_y$ such that $\delta_y^{-1}([0, r_y)) \subseteq B_\fG(y, r)$ for any $y \in Y$. Since $\delta_y$ is continuous, $\delta_y^{-1}([0, r_y))$ is open. Therefore, $\{ \delta_y^{-1}([0, r_y)) \}_{y \in Y}$ is an open cover for $Y$. Since $Y$ is compact, we can find a finite subcover $\{ \delta_y^{-1}([0, r_y)) \}_{y \in Y_0}$, where $Y_0$ is a finite subset of $Y$.

    We can now use the functions $\delta_y$ to construct a set of continuous $\fG$-invariant functions that forms a partition of unity for this finite cover. For $y_0 \in Y_0$ we construct the function $\phi_{y_0}(y') = \max(r_{y_0} - \delta_{y_0}(y'), 0)$ and the function $\phi(y') = \sum_{y^* \in Y_0} \phi_{y^*}(y')$, both of which are continuous. Noticing that $\mathrm{supp}(\phi_{y_0}) = \delta_{y_0}^{-1}([0, r_{y_0}))$ and that $\phi_{y^*}(y') > 0$ for any $y' \in Y$, the set of functions $\psi_{y_0}(y') = \frac{\phi_{y_0}(y')}{\phi(y')}$ form a partition of unity with $\sum_{y_0 \in Y_0} \psi_{y_0}(y') = 1$ for all $y' \in Y$.

    Notice that we can write any $\fG$-invariant function $f$ as:
    \begin{equation}
        f(y') =  f(y') \sum_{y_0 \in Y_0} \psi_{y_0}(y') =  \sum_{y_0 \in Y_0\ :\ y' \in \delta_{y_0}^{-1}([0, r_{y_0}))} f(y')\psi_{y_0}(y')
    \end{equation}
    The intuition is that because $f$ is continuous, we can approximate $f(y')$ in the expression above by the value of $f(y_0)$ since $y'$ is in the neighbourhood of some $y_0$. Thus, the function that approximates $f$ is $h(y') = \sum_{y_0 \in Y_0} f(y_0)\psi_{y_0}(y')$. We now show that $h$ can approximate $f$ with arbitrary accuracy.

    If $y' \in \delta_{y_0}^{-1}([0, r_{y_0}))$, then there exists $\fg \in \fG$ such that $d(y', \fg \cdot y_0) < r$. Using the fact that $f$ is continous, this implies $|f(y') - f(\fg \cdot y_0)| < \varepsilon$. Because $f$ is invariant, $f(y_0) = f(\fg \cdot y_0)$, which implies $|f(y') - f(y_0)| < \varepsilon$. Then we have:
    \begin{align}
        \big|f(y') - \sum_{y_0 \in Y_0} f(y_0)\psi_{y_0}(y') \big| & = \big| f(y') - \sum_{y_0 \in Y_0\ :\ y' \in \delta_{y_0}^{-1}([0, r_{y_0}))} f(y_0)\psi_{y_0}(y') \big|                   \\
                                                                   & = \sum_{y_0 \in Y_0\ :\ y' \in \delta_{y_0}^{-1}([0, r_{y_0}))} \left| f(y') - f(y_0) \right| \psi_{y_0}(y') < \varepsilon
    \end{align}
    Finally, to see that $h$ is in $\gC+2$, we can use an MLP with one hidden layer to approximate $\psi_{y_0}$ followed by one final layer to compute the linear combination of the $\psi_{y_0}$.
\end{proof}

\SeparationImpliesUnivers*
\begin{proof}
    The proof follows from the two Lemmas above.
\end{proof}

\subsection{Number of aggregators in continuous setting}

\NumAgg*
\begin{proof}
    Suppose for the sake of contradiction that there exists an orbit-space injective and continuous function $f: X \to \sR^m$, with $m<d$. Since $Y$ is a submanifold of the same dimension as $X$, then $f$ must also be injective over $Y$. 
    By the Quotient Manifold Theorem \citep{lee2013smooth}, $Y/\fG$ is a topological manifold of dimension $d = \mathrm{dim\ } X - \mathrm{dim\ } \fG$.
    The map $f$ induces an injective function $g: Y/\fG \to \sR^m$. This map is also continuous because for an open set $V \in \sR^m$, $g^{-1}(V) = \pi_Y(f^{-1}(V))$. Because $f$ is continuous and $\pi_Y$ is an open map, this set is open.
    \begin{center}
        \begin{tikzcd}
            & Y \arrow[d, "\pi_Y"] \arrow[dr, "f"] \\
            \sR^d \arrow[r, "\psi^{-1}"] & Y/\fG \arrow[r, "g"] & \sR^m
        \end{tikzcd}
    \end{center}
    Because $Y/\fG$ is a manifold, there exist an open set $U \subseteq Y/\fG$ and a homeomorphism $\psi: U \to \sR^d$. Then the composition $h = g \circ \psi^{-1}$ is a continuous and injective map from $\psi(U) \subseteq \sR^m$ to $\sR^m$. By the Invariance of Domain Theorem~\cite{bredon2013topology} (Corollary 19.9), $h$ is open and it is a homeomorphism onto its image $h(\psi(U)) \subseteq \sR^m$. By the Invariance of Dimension Theorem~\cite{bredon2013topology} (Corollary 19.10), $d = m$.
\end{proof}

\SOdAgg*
\begin{proof}
    We now consider the case when $\fG = S_n \times SO(d)$. First, notice that the proof above also holds for this group since it is a subgroup of $S_n \times O(d)$. However, we can obtain a stronger result and show the result holds for $n \geq d - 1$. In what follows, we reuse the notation from the proof above.

    We define the set
    $$Z' = \{ \mX \in X \mid \exists\ 1 \leq i_1 < \ldots < i_{d-1} \leq n \textrm{\ s.t\ }  \vx_{i_1}, \ldots, \vx_{i_{d-1}} \textrm{\ are linearly independent\ }\},$$
    containing $d-1$ row-vectors that are linearly independent. Define $M_\mX$ to be the set of all $(d-1) \times (d-1)$ minors of the matrix $\mX$. Then, we can construct a continuous function $h(\mX) = \max_{m \in M_\mX} |m|$ and notice that $Z'$ coincides with the open set $h^{-1}((0, \infty))$. Then, the set $V = Y \cap Z'$ is also open and non-empty. Therefore, $V$ is a submanifold of $X$ of the same dimension and the action of $\fG$ is well-defined and continuous on $W$. We can show again this action is free.

    As in the proof above, we have that $\mP_\fg = \mI_n$, so it remains to inspect the case $\mX = \mX \mQ_\fg$. Any non-trivial rotation $\mQ_\fg$ must rotate at least a two-dimensional subspace of $\sR^d$. Since the rows of the matrix $\mX$ span a $(d-1)$-dimensional subspace of $\sR^d$, then $\mQ_\fg$ cannot leave $\mX$ invariant unless $\mQ_\fg = \mI_d$. Applying Theorem \ref{theo:num_agg} again yields the result.

    For $n = d = 1$, $\norm{\cdot}: \sR^{1 \times d} \to \sR$ is, as before, $\fG$ orbit-space injective.
\end{proof}

\OdAgg*
\begin{proof}

    First, suppose that $n \geq d > 1$. Consider the subspace $Y = \{ \mX \in X \mid \norm{\vx_i} \neq \norm{\vx_j}, \forall i < j\}$, where the norm is just the standard Euclidean norm. Consider the function $g: X \to \sR$ given by $g(\mX) = \min_{i < j} |\norm{\vx_i} - \norm{\vx_j}|$. By standard analysis, this function is continuous and notice that $Y = g^{-1}((0, \infty))$, which means that $Y$ is open in $X$. We also define the set
    $$Z = \{ \mX \in X \mid \exists\ i_1 < \ldots < i_d < n, |\mathrm{det}(\vx_{i_1}, \ldots, \vx_{i_d})| > 0\},$$
    containing row-vectors that span $\sR^d$. As above, this set is the preimage of the absolute determinant over $(0, \infty)$, which makes $Z$ open in $X$. Then, the set $W = Y \cap Z$ is also open and non-empty. Therefore, $W$ is a submanifold of $X$ of the same dimension and the action of $\fG$ is well-defined and continuous on $W$. We can show this action is free.

    We investigate the solutions of the equation $\mP_\fg \mX \mQ_\fg^\top = \mX \iff \mP_\fg \mX = \mX \mQ_\fg$ for $\mX \in W$. Since orthogonal transformations preserve norms and the rows of $\mX$ have different norms, it follows that $\mP_\fg = \mI_n \Rightarrow \mX = \mX \mQ_\fg$. We know that a subset of the rows of $\mX$ span the whole of $\sR^d$. Define the sub-matrix of $\mX$ containing these rows by $\mX^* \in \sR^{d \times d}$. Then, we have $\mX^*\mQ_\fg = \mX^* \Rightarrow \mQ_\fg = (\mX^*)^{-1}\mX^* = \mI_d$. This proves that the action is free and applying Theorem \ref{theo:num_agg}, yields the result.

    For the trivial case when $n = 1$, notice that $\norm{\cdot}: \sR^{1 \times d} \to \sR$ is $\fG$ orbit-space injective.
\end{proof}

\generalisedPNA*
\begin{proof}
    Reusing the notation from above, notice that for all $n \geq 1$, $S_n$ acts freely on the sub-manifold $Y$ as shown above. Seeing $S_n$ as a zero-dimensional Lie group and applying Theorem \ref{theo:num_agg} yields the result.
\end{proof}

%%%

% \input{tex/complexity}

\end{document}